    \newcommand{\pkg}[1]{{\normalfont\fontseries{b}\selectfont #1}}
    \DeclareMathOperator*{\argmax}{arg\,max}
    \DeclareMathOperator*{\argmin}{arg\,min}
    \DeclarePairedDelimiter\Abs{\lvert}{\rvert}%
    \DeclarePairedDelimiterX{\Norm}[1]{\lVert}{\rVert}{#1}
    \theoremstyle{plain}
    \newtheorem{theorem}{Theorem}[section]
    \newtheorem{lemma}[theorem]{Lemma}
    \newtheorem{definition}{Definition}[section]
    \newtheorem{assumption}{Assumption}[section]
    \newtheorem{condition}[theorem]{Condition}
    \newtheorem{corollary}[theorem]{Corollary}
    \newtheorem*{theorem*}{Theorem}
\newcounter{reviewer}
\newcounter{point}[reviewer]
\renewcommand{\thepoint}{P\,\thereviewer.\arabic{point}} 
\newcommand{\shortreply}[2][]{\medskip \noindent \begin{sf}\textbf{Reply}:\  #2
	\ifthenelse{\equal{#1}{}}{}{ \hfill \footnotesize (#1)}%
	\medskip \end{sf}}
\begin{document}




\begin{frontmatter}

    \title{High-Dimensional Undirected Graphical Models for Arbitrary Mixed Data}
    \runtitle{High-Dimensional Mixed Graphs}

    \begin{aug}
        \author{\fnms{Konstantin} \snm{Göbler}
            \ead[label=e1]{konstantin.goebler@tum.de}}

        \address{Technical University of Munich \\
            \printead{e1}}

        \author{\fnms{Anne} \snm{Miloschewski}
            \ead[label=e2]{anne.miloschwski@dzne.de}}

        \address{German Center for Neurodegenerative Diseases (DZNE)\\
            Bonn, Germany\\
            \printead{e2}}

        \author{\fnms{Mathias} \snm{Drton}
            \ead[label=e3]{mathias.drton@tum.de}}

        \address{Munich Center for Machine Learning, \\ Technical University of Munich \\
            \printead{e3}}

        \author{\fnms{Sach} \snm{Mukherjee}
            \ead[label=e4]{sach.mukherjee@dzne.de}}

        \address{German Center for Neurodegenerative Diseases (DZNE)\\
        Bonn, Germany, \\ [1em]
        University of Cambridge,
        MRC Biostatistics Unit \\
        \printead{e4}}

        \runauthor{K. Göbler et al.}

    \end{aug}

    \begin{abstract}
        {
\noindent Graphical models are an important tool in exploring relationships between variables in complex, multivariate data. Methods for learning such graphical models are well-developed in the case where all variables are either continuous or discrete, including in high dimensions. However, in many applications, data span variables of different types (e.g., continuous, count, binary, ordinal, etc.), whose principled joint analysis is nontrivial. Latent Gaussian copula models, in which all variables are modeled as transformations of underlying jointly Gaussian variables, represent a useful approach. Recent advances have shown how the binary-continuous case can be tackled, but the general mixed variable type regime remains challenging. In this work, we make the simple but useful observation that classical ideas concerning polychoric and polyserial correlations can be leveraged in a latent Gaussian copula framework. Building on this observation, we propose a flexible and scalable methodology for data with variables of entirely general mixed type. We study the key properties of the approaches theoretically and empirically. 
    \end{abstract}

    \begin{keyword}
        \kwd{Generalized correlation}
        \kwd{high-dimensional statistics}
        \kwd{latent Gaussian copula}
        \kwd{mixed data}
        \kwd{polychoric/polyserial correlation}
        \kwd{undirected graphical models}
    \end{keyword}



\end{frontmatter}


\section{Introduction}
\label{sec::intro}

Graphical models are widely used in the analysis of multivariate data, providing a convenient and interpretable way to study relationships among potentially large numbers of variables. They are key tools in modern statistics and machine learning and play an important role in diverse applications. Undirected graphical models are used in a wide range of settings, including, among others, systems biology, omics, deep phenotyping \cite[see, e.g.][]{dobra2004, Finegold11, monti2014}
and as a component within other analyses, including two-sample testing, unsupervised learning, hidden Markov modeling, and more \cite[examples include][]{wei2007,verzelen2009,stadler2013, stadler2015,perrakis2021}.

A significant portion of the literature on graphical models has concentrated on scenarios where either only continuous variables or only discrete variables are present. Regarding the former case, Gaussian graphical models have been extensively studied, including in the high-dimensional regime \cite[see among others][]{Meinshausen06, Friedman08, Banerjee08, Lam09, Yuan10, Ravikumar11, Cai11}. In such models, it is assumed that the observed random vector follows a multivariate Gaussian distribution, and the graph structure of the model is given by the zero pattern in the inverse covariance matrix. Generalizations for continuous, non-Gaussian data have also been studied \cite{Miyamura06, Liu09, Finegold11}. In the latter case, discrete graphical models -- related to Ising-type models in statistical physics -- have also been extensively studied \cite[see, e.g.][]{wainwright2006, ravikumar2010}.

However, in many applications, it is common to encounter data that entail \textit{mixed} variable types, i.e., where the data vector includes components of different types (e.g., continuous-Gaussian, continuous-non-Gaussian, count, binary, etc.). Such "column heterogeneity" (from the usual convention of samples in rows and variables in columns) is the rule rather than the exception. For instance, in statistical genetics, the construction of regulatory networks using expression profiling of genes may involve jointly analyzing gene expression levels alongside categorical phenotypes. Similarly, diagnostic data in many medical applications may contain continuous measurements such as blood pressure and discrete information about disease status or pain levels.

In analyzing such data, estimating a joint multivariate graphical model spanning the various variable types is often of interest. In practice, this is sometimes done using \textit{ad hoc} pipelines and data transformations. However, in graphical modeling, since the model output is intended to be scientifically interpretable and involves statements about properties such as conditional independence between variables, the use of \textit{ad hoc} workflows without an understanding of the resulting estimation properties is arguably problematic.

There have been three main lines of work that tackle high-dimensional graphical modeling for mixed data. The earliest approach is conditional Gaussian modeling of a mix of categorical and continuous data \cite{Lauritzen96} as treated by \citet{Cheng17, Lee15}. A second approach is to employ neighborhood selection, which amounts to separate modeling of conditional distributions for each variable given all others \cite[see, e.g.][]{Chen15, Yang14, Yang19}. A third approach uses latent Gaussian models, with a key recent reference being the paper of \citet{Fan17}, who proposed a latent Gaussian copula model for mixed data. The generative structure in their work posits that the discrete data is obtained from latent continuous variables thresholded at certain (unknown) levels.  However, in \cite{Fan17}, only a mix of binary and continuous data is considered. Their setting does not allow for more general combinations (including counts or ordinal variables) as found in many real-world applications.

This third approach will be the focus of this paper, which aims to provide a simple framework for working with latent Gaussian copula models to analyze general mixed data. To do so, we combine classical ideas concerning polychoric and polyserial correlations with approaches from the high-dimensional graphical models and copula literature. As we discuss below, this provides an overall framework that is scalable, general, and straightforward from the user's point of view.

Already in the early 1900s, \citet{Pearson1900, Pearson13} worked on the foundations of these ideas in the form of the tetrachoric and biserial correlation coefficients. From these arose the maximum likelihood estimators (MLEs) for the general version of these early ideas, namely the polychoric and the polyserial correlation coefficients. One drawback of these original measures is that they have been proposed in the context of latent Gaussian variables. A richer distributional family is the nonparanormal proposed by \citet{Liu09} as a nonparametric extension to the Gaussian family. A random vector $\boldsymbol{X} \in \mathbb{R}^d$ is a member of the nonparanormal family when $f(\boldsymbol{X}) = (f_{1}(X_{1}), \dots, f_{d}(X_{d}))^{T}$ is Gaussian, where $\{f_{k}\}_{k=1}^{d}$ is a set of univariate monotone transformation functions. Moreover, if the $f_j$'s are monotone and differentiable, the nonparanormal family is equivalent to the Gaussian copula family. As the polychoric and polyserial correlation assumes that observed discrete data are generated from latent continuous variables, they adhere to a latent copula approach.

We propose two estimators of the latent correlation matrix, which can subsequently be plugged into existing precision matrix estimation routines, such as the graphical lasso (glasso) \cite{Friedman08}, CLIME \cite{Cai11}, or the graphical Dantzig selector \cite{Yuan10}. The first is appropriate under a latent Gaussian model and unifies the aforementioned MLEs. The second is more general and is applicable under the latent Gaussian copula model. Both approaches can deal with discrete variables with arbitrarily many levels. We that both estimators exhibit favorable theoretical properties and include empirical results based on real and simulated data. The main contributions of the paper are as follows:
\begin{itemize}
    \item We posit that integrating polychoric and polyserial correlations into the latent Gaussian copula framework offers an elegant, straightforward, and highly effective approach to graphical modeling for comprehensively diverse mixed data sets.
    \item We present theoretical findings on the performance of the proposed estimators, encompassing their behavior in high-dimensional scenarios. The concentration results underscore the statistical validity of the introduced procedures.
    \item We empirically examine the estimators through a series of simulations and a practical example involving real phenotyping data of mixed types sourced from the UK Biobank. Our findings illustrate the practical utility of the proposed methods, demonstrating that their performance often closely aligns with an oracle model granted access to true latent data.
\end{itemize}

Our proposed procedure provides users with a method for conducting statistically sound graphical modeling of mixed data that is both straightforward to implement and carries no more overhead than conventional high-dimensional Gaussian graphical modeling approaches. Our procedure requires no manual specification of variable-type-specific model components, such as bridge functions.

The remainder of this paper is organized as follows. In Sections \ref{sec::gaussian} and \ref{sec::nonparanormal}, we present the estimators based on polychoric and polyserial correlations, including theoretical guarantees in terms of concentration inequalities. In Section \ref{sec::numerical_results}, we describe the experimental setup used to test the proposed approaches on simulated data together with the results themselves. 
We conclude with a summary of our findings in Section \ref{sec::conclusions} and point towards our \texttt{R} package \pkg{hume}, providing users with a convenient implementation of the methods developed in this study.

\section{Background and model set-up}
\label{sec::gaussian}

The objective of this paper is to learn the structure of undirected graphical models applicable to a wide range of mixed and high-dimensional data. To achieve this, we extend the Gaussian copula model \cite{Liu09, Liu12, Xue12}, enabling the incorporation of both discrete and continuous data of any nature.

\begin{definition}[The nonparanormal model]
    A random vector of continuous variables \(\mathbf{X} = (X_1, \dots, X_d)\) follows a $d$-dimensional nonparanormal distribution if there exists a set of monotone and differentiable univariate functions $f = \{f_1,\dots, f_d\}$ such that the transformed vector \(f(\mathbf{X}) = (f_1(X)_1, \dots, f_d(X)_d)\) is multivariate Gaussian with mean $0$ and covariance matrix \(\Sigma\), i.e. \(f(\mathbf{X})\sim N(0,\Sigma)\). We write
    \begin{equation}\label{def::nonparanormal}
        \mathbf{X} \sim \text{NPN}(0, \Sigma, f),
    \end{equation}
    where without loss of generality, the diagonal entries in \(\Sigma\) are equal to one.
\end{definition}

As demonstrated by \citet{Liu09}, the model in Eq. \eqref{def::nonparanormal} is a semiparametric Gaussian copula model. The following definition indicates how to extend this model to the presence of general mixed data.

\begin{definition}[latent Gaussian copula model for general mixed data]\label{latent_gaussian_cm}
    Let $\mathbf{X} = (\mathbf{X}_1,\mathbf{X}_2)$ be a $d$-dimensional random vector with \(\mathbf{X}_1\) a $d_1$-dimensional vector of possibly ordered discrete variables, and \(\mathbf{X}_2\) a $d_2$-dimensional vector of continuous variables with \(d = d_1 + d_2\). Suppose there exists a $d_1$-dimensional random vector of latent continuous variables $\mathbf{Z}_1 = (Z_1, \dots, Z_{d_1})^T$ such that the following relation holds:
    \begin{equation}\label{latent_ordered}
        X_j = x_j^{r} \quad if \quad \gamma_j^{r-1} \leq Z_j < \gamma_j^r \quad \text{for all } j = 1, \dots d_1 \ \text{and } r = 1, \dots, l_j+1,
    \end{equation}
    where $\gamma^r_j$ represents some unknown thresholds with $\gamma_j^0 = -\infty$ and $\gamma_j^{l_j +1} = +\infty$, $x^r_j \in \mathbb{N}_0$ and $l_{j} +1$ the number of discrete levels of $X_j$ for all $j \in 1, \dots, d_1$.

    Then, $\mathbf{X}$ satisfies the latent Gaussian copula model if $\mathbf{Z} \coloneqq (\mathbf{Z}_1, \mathbf{X}_2) \sim \text{NPN}(0, \mathbf{\Sigma}, f)$. We write
    \begin{equation}
        \mathbf{X} \sim \text{LNPN}(0, \mathbf{\Sigma}, f, \mathbf{\gamma}),
    \end{equation}
    where $\mathbf{\gamma} = \cup_{j=1}^{d_1} \{\gamma_j^r, r = 0, \dots, l_j+1\}$.
\end{definition}

Note that Definition \ref{latent_gaussian_cm} entails the class of Gaussian copula models if no discrete variables are present and the class of latent Gaussian models if $\mathbf{Z} = (\mathbf{Z}_1, \mathbf{X}_2) \sim \text{N}(0, \mathbf{\Sigma})$. As shown by \citet{Fan17}, the latent Gaussian copula model (LGCM) is invariant concerning any re-ordering of the discrete variables.

We denote \([d] = \{1,\dots,d\}\), \([d_1] = \{1,\dots,d_1\}\), and \([d_2] = \{d_1 +1, \dots, d_2\}\), respectively. Several identifiability issues arise in the latent Gaussian copula class.
First, the mean and the variances are not identifiable unless the monotone transformations \(f\) were restricted to preserve them. Note that this only affects the diagonal entries in \(\mathbf\Sigma\), not the full covariance matrix. Therefore, without loss of generality, we assume the mean to be the zero vector and \(\Sigma_{jj} = 1\) for all \(j \in [d]\). Another identifiability issue relates to the unknown threshold parameters. To ease notation, let \(\Gamma_j^r \equiv f_j(\gamma_j^r)\) and \(\Gamma_j \equiv \{f_j(\gamma_j^r)\}_{r=0}^{l_j+1}\). In the LGCM, only the transformed thresholds \(\Gamma_j\) rather than the original thresholds are identifiable from the discrete variables. We assume, without loss of generality, that the transformed thresholds retain the limiting behavior of the original thresholds, i.e., \(\Gamma_{j}^{0} = -\infty\) and \(\Gamma_j^{l_j+1} = \infty\).

Let $\mathbf{\Omega}= \mathbf{\Sigma}^{-1}$ denote the latent precision matrix. Then, the zero-pattern of $\mathbf{\Omega}$ under the LGCM still encodes the conditional independencies of the latent continuous variables \cite{Liu09}. Thus, the underlying undirected graph is represented by $\mathbf{\Omega}$ just as for the parametric normal. Note that the LGCM for general mixed data in Definition \ref{latent_gaussian_cm} agrees with that of \citet{Quan18} and of \citet{Feng19}. The problem phrased by \citet{Fan17} is a special case of  Definition \ref{latent_gaussian_cm}. A more detailed comparison between both approaches can be found in Section \ref{sec::nonparanormal}. Nominal discrete variables need to be transformed into a dummy system.

For the remainder of the paper, assume we observe an independent $n$-sample of the $d$-dimensional vector $\mathbf{X}$ which is assumed to follow an LGCM of the form \(\text{LNPN}(0, \mathbf{\Sigma}, f, \Gamma)\), where \(\Gamma = \cup_{j=1}^{d_1}\Gamma_j\). We estimate $\mathbf{\Sigma}$ by considering the corresponding entries separately i.e. the couples $(X_j, X_k)$ for \(j,k \in [d]\). Consequently, we have to keep in view three possible cases depending on the couple's variable types, respectively:

\begin{description}[labelwidth=4em,leftmargin =\dimexpr\labelwidth+\labelsep\relax, font=\mdseries]
    \item[\textit{Case I}:] Both $X_j$ and $X_k$ are continuous, i.e. $j,k \in [d_2]$.
    \item[\textit{Case II}:] $X_j$ is discrete and $X_k$ is continuous, i.e. \(j\in [d_1], k\in [d_2]\) and vice versa.
    \item[\textit{Case III}:] Both $X_j$ and $X_k$ are discrete, i.e. $j,k \in [d_1]$.
\end{description}

\subsection{Maximum-likelihood estimation under the latent Gaussian model}\label{sec::latent_gaussian}

At the outset, we examine each of the three cases under the latent Gaussian model, a special case of the LGCM where all transformations are identity functions. Consider \textit{Case I}, where both $X_j$ and $X_k$ are continuous. This corresponds to the regular Gaussian graphical model set-up discussed thoroughly, for instance, in \cite{Ravikumar11}. Hence, the estimator for $\mathbf\Sigma$ when both $X_j$ and $X_k$ are continuous is:
\begin{definition}[MLE $\hat{\mathbf{\Sigma}}^{(n)}$ of $\mathbf{\Sigma}$; \textit{Case I}]\label{def1}
    Let $\Bar{x}_j$ denote the sample mean of $X_j$. The estimator $\hat{\mathbf{\Sigma}}^{(n)} = (\hat{\Sigma}_{jk}^{(n)})_{d_1 < j < k\leq d_2}$ of the correlation matrix $\mathbf{\Sigma}$ is defined by:
    \begin{equation}
        \hat{\Sigma}_{jk}^{(n)} = \frac{\sum_{i=1}^n(x_{ij}- \Bar{x}_j)(x_{ik}- \Bar{x}_k)}{\sqrt{\sum_{i=1}^n(x_{ij}- \Bar{x}_j)^2} \sqrt{\sum_{i=1}^n(x_{ik}- \Bar{x}_k)^2}}
    \end{equation}
    for all $d_1 < j < k \leq d_2$.
\end{definition}
This is the Pearson product-moment correlation coefficient, which, of course, coincides with the maximum likelihood estimator (MLE) for the bivariate normal couple $\{(X_j, X_k)\}_{i=1}^n$.

Turning to \textit{Case II}, let $X_j$ be ordinal and $X_k$ be continuous. We are interested in the product-moment correlation $\Sigma_{jk}$ between two jointly Gaussian variables, where $X_j$ is not directly observed but only the ordered categories (see Eq. \eqref{latent_ordered}). This is called the \textit{polyserial} correlation \cite{Olsson82}. The likelihood and log-likelihood of the $n$-sample are defined by:
\begin{equation}\label{polyserial_likelihood}
    \begin{split}
        L_{jk}^{(n)}(\Sigma_{jk}, x_j^r,x_k) &= \prod_{i=1}^n p(x^r_{ij},x_{ik}, \Sigma_{jk}) = \prod_{i=1}^n p(x_{ik})p(x^r_{ij} \mid x_{ik}, \Sigma_{jk}) \\
        \ell_{jk}^{(n)}(\Sigma_{jk}, x^r_j,x_k) &= \sum_{i=1}^n \big[\log(p(x_{ik})) + \log(p(x^r_{ij} \mid x_{ik}, \Sigma_{jk}))\big],
    \end{split}
\end{equation}
where $p(x_{ij}^{r},x_{ik}, \Sigma_{jk})$ denotes the joint probability of  $X_j$ and $X_k$ and $p(x_{ik})$ the marginal density of the Gaussian variable $X_k$. MLEs are obtained by differentiating the log of the likelihood in Eq. \eqref{polyserial_likelihood} with respect to the unknown parameters, setting the partial derivatives to zero, and solving the system of equations for $\Sigma_{jk}, \mu, \sigma^2$, and $\Gamma_j^r$ for $r \in [l_j]$. Under the latent Gaussian model, we have the special case that the thresholds are identifiable from the observed data as \(\Gamma_j^r = \gamma_j^r\).

\begin{definition}[MLE $\hat{\mathbf{\Sigma}}^{(n)}$ of $\mathbf{\Sigma}$; \textit{Case II}]\label{definition_case2}
    Recall the log-likelihood in Eq. \eqref{polyserial_likelihood}. The estimator $\hat{\mathbf{\Sigma}}^{(n)} = (\hat{\Sigma}_{jk}^{(n)})_{1 < j \leq d_1 < k \leq d_2}$ is defined by:
    \begin{equation}
        \begin{split}
            \hat{\Sigma}_{jk}^{(n)} &= \argmax_{\Abs{\Sigma_{jk}} \leq 1} \ell_{jk}^{(n)}(\Sigma_{jk}, x^r_j,x_k) \\
            &= \argmax_{\Abs{\Sigma_{jk}} \leq 1} \frac{1}{n} \ell_{jk}^{(n)}(\Sigma_{jk}, x^r_j,x_k)
        \end{split}
    \end{equation}
    for all $1 < j \leq d_1 < k \leq d_2$.
\end{definition}
\noindent Regularity conditions ensuring consistency and asymptotic efficiency, as well as asymptotic normality, can be verified to hold here \cite{Cox74}.

Lastly, consider \textit{Case III}, where both $X_j$ and $X_k$ are ordinal. The probability of an observation with $X_j = x^r_j$ and $X_k = x^s_k$ is given by
\begin{equation}\label{cell_probabilities}
    \begin{split}
        \pi_{rs} &\coloneqq p(X_j = x^r_j, X_k = x^s_k) \\
        &= p(\Gamma_j^{r-1} \leq Z_j < \Gamma_j^r, \Gamma_k^{s-1} \leq Z_k < \Gamma_k^s) \\
        &= \int_{\Gamma_j^{r-1}}^{\Gamma_j^{r}} \int_{\Gamma_k^{s-1}}^{\Gamma_k^{s}} \phi(z_j,z_k,\Sigma_{jk}) dz_j dz_k,
    \end{split}
\end{equation}
where $r = 1, \dots, l_j$ and $s = 1, \dots, l_k$ and $\phi(x,y,\rho)$ denotes the standard bivariate density with correlation $\rho$. Then, as outlined by \citet{Olsson79} the likelihood and log-likelihood of the $n$-sample are defined as:
\begin{equation}\label{polychoric_likelihood}
    \begin{split}
        L_{jk}^{(n)}(\Sigma_{jk}, x_j^r,x_k^s) &= C \prod_{r=1}^{l_{{j}}} \prod_{s=1}^{l_{{k}}} \pi_{rs}^{n_{rs}}, \\
        \ell_{jk}^{(n)}(\Sigma_{jk}, x_j^r,x_k^s) &= \log(C) + \sum_{r=1}^{l_{{j}}}\sum_{s=1}^{l_{k}} n_{rs} \log(\pi_{rs}),
    \end{split}
\end{equation}
where $C$ is a constant and $n_{rs}$ denotes the observed frequency of $X_j = x^r_j$ and $X_k = x^s_k$ in a sample of size $n= \sum_{r=1}^{l_{{j}}}\sum_{s=1}^{l_{{k}}} n_{rs}$. Differentiating the log-likelihood, setting it to zero, and solving for the unknown parameters yields the estimator for $\Sigma$ for \textit{Case III}:
\begin{definition}[MLE $\hat{\mathbf{\Sigma}}^{(n)}$ of $\mathbf{\Sigma}$; \textit{Case III}]\label{definition_case3}
    Recall the log-likelihood in Eq. \eqref{polychoric_likelihood}. The estimator $\hat{\mathbf{\Sigma}}^{(n)} = (\hat{\Sigma}_{jk}^{(n)})_{1\leq j < k\leq d_1}$ of $\mathbf{\Sigma}$ is defined by:
    \begin{equation}
        \begin{split}
            \hat{\Sigma}_{jk}^{(n)} &= \argmax_{\Abs{\Sigma_{jk}} \leq 1} \ell_{jk}^{(n)}(\Sigma_{jk}, x_j^r,x_k^s) \\
            &= \argmax_{\Abs{\Sigma_{jk}} \leq 1} \frac{1}{n} \ell_{jk}^{(n)}(\Sigma_{jk}, x_j^r,x_k^s),
        \end{split}
    \end{equation}
    for all $1 < j < k \leq d_1 $.
\end{definition}
\noindent Regularity conditions ensuring consistency and asymptotic efficiency, as well as asymptotic normality, can again be verified to hold here \cite{Wallentin17}.

Summing up, under the latent Gaussian model, a special case of the LGCM, $\hat{\mathbf{\Sigma}}^{(n)}$ is a consistent and asymptotically efficient estimator for the underlying latent correlation matrix $\mathbf{\Sigma}$. Corresponding concentration results are derived in Section \ref{sec::convergence_results}.

\section{Latent Gaussian Copula Models}
\label{sec::nonparanormal}

\citet{Fan17} propose the binary LGCM, a special case of the LGCM allowing for the presence of binary and continuous variables. Following the approach of the nonparanormal SKEPTIC \citep{Liu12}, they circumvent the direct estimation of monotone transformation functions $\{f_j\}_{j=1}^d$ by employing rank correlation measures, such as Kendall's tau or Spearman's rho. These measures remain invariant under monotone transformations. Notably, for \textit{Case I}, a well-known mapping exists between Kendall's tau, Spearman's rho, and the underlying Pearson correlation coefficient $\Sigma_{jk}$. As a result, the primary contribution of \citet{Fan17} lies in deriving corresponding bridge functions for cases II and III. To reduce computational burden \citet{Yoon21} propose a hybrid multilinear interpolation and optimization scheme of the underlying latent correlation.

When considering the general mixed case, \citet{Fan17} advocate for binarizing all ordinal variables. This concept has been embraced by \citet{Feng19}, who suggest an initial step of binarizing all ordinal variables to create preliminary estimators. Subsequently, these estimators are meaningfully combined using a weighted aggregate. To extend the binary latent Gaussian copula model and explore generalizations regarding bridge functions, \citet{Quan18} ventured into scenarios where a combination of continuous, binary, and ternary variables is present. However, a notable drawback of this approach becomes evident. Dealing with a mix of binary and continuous variables requires three bridge functions -- one for each case. The complexity grows as discrete variables introduce distinct state spaces. In fact, a combination of continuous variables and discrete variables with $k$ different state spaces necessitates $\binom{k+2}{2}$ bridge functions.

For this reason, we adopt an alternative approach to the latent Gaussian copula model when dealing with general mixed data, allowing discrete variables to possess any number of states. In this strategy, the number of cases to be considered remains consistent at three, as already introduced in the preceding section.

\subsection{Nonparanormal Case I}\label{sec::nonparanormal_case1}

For \textit{Case I}, the mapping between $\Sigma_{jk}$ and the population versions of Spearman's rho and Kendall's tau is well known \citep{Liu09}. Here we make use of Spearman's rho $\rho^{\text{\textit{Sp}}}_{jk} = corr(F_j(X_j),F_k(X_k))$ with $F_j$ and $F_k$ denoting the cumulative distribution functions (CDFs) of $X_j$ and $X_k$, respectively. Then $\Sigma_{jk} = 2\sin{\frac{\pi}{6} \rho^{\text{\textit{Sp}}}_{jk}} \quad \text{for } d_1  < j < k \leq d_2$. In practice, we use the sample estimate
\begin{equation*}
    \hat{\rho}^{\text{\textit{Sp}}}_{jk} = \frac{\sum_{i=1}^n (R_{ji} - \Bar{R}_{j}) (R_{ki} - \Bar{R}_{k})}{\sqrt{\sum_{i=1}^n(R_{ji} - \Bar{R}_{j})^2\sum_{i=1}^n(R_{ki} - \Bar{R}_{k})^2}},
\end{equation*}
with $R_{ji}$ corresponding to the rank of $X_{ji}$ among $X_{j1}, \dots, X_{jn}$ and $\Bar{R}_{j} = 1/n \sum_{i=1}^n R_{ji} = (n+1)/2$; compare \cite{Liu12}. From this, we obtain the following estimator:
\begin{definition}[Nonparanormal estimator $\hat{\mathbf{\Sigma}}^{(n)}$ of $\mathbf{\Sigma}$; \textit{Case I}]\label{case1_nonpara}
    The estimator $\hat{\mathbf{\Sigma}}^{(n)} = (\hat{\Sigma}_{jk}^{(n)})_{d_1 < j< k\leq d_2}$ of the correlation matrix $\mathbf{\Sigma}$ is defined by:
    \begin{equation}\label{spearman_mapping}
        \hat{\Sigma}_{jk}^{(n)} = 2\sin{\frac{\pi}{6} \hat{\rho}^{\text{\textit{Sp}}}_{jk}},
    \end{equation}
    for all $d_1 < j < k \leq d_2$.
\end{definition}

\subsection{Nonparanormal Case II}\label{sec::nonparanormal_case2}

In \textit{Case II}, the complexity increases. Employing a rank-based approach for the nonparanormal model makes direct application of the ML procedure unfeasible, given that the continuous variable is not observed in its Gaussian form. Nevertheless, a two-step approach remains viable. First, an estimate of \(f_j\) must be formulated and subsequently employed in Definition \ref{definition_case2}. Yet, scrutinizing convergence rates for this procedure poses challenges, as the estimated transformation appears in multiple instances within the first-order condition of the MLE. Section 2 in the Supplementary Materials provides further details and compares the two-step likelihood approach to the one we propose below.

Instead, we will proceed by suitably modifying other approaches that address the Gaussian case through a more direct \textit{ad hoc} examination of the relationship between $\Sigma_{jk}$ and the point polyserial correlation \citep{Bedrick92, Bedrick96}.
Section 2 of the Supplementary Materials compares the nonparanormal \textit{Case II} estimation strategies.

In what follows, in the interest of readability, we omit the index in the monotone transformation functions but explicitly allow them to vary among the $\mathbf{Z}$. According to Definition \ref{def1}, we have the following Gaussian conditional expectation
\begin{equation}
    E[f(X_k) \mid f(Z_j)] = \mu_{f(X_k)} + \Sigma_{jk}\sigma_{f(X_k)} f(Z_j), \quad \text{for } 1 \leq j \leq d_1 < k \leq d_2,
\end{equation}
where we can assume w.l.o.g. that $\mu_{f(X_k)} = 0$. After multiplying both sides with the discrete variable $X_j$, we move it into the expectation on the left-hand side of the equation. This is permissible as $X_j$ is a function of $f(Z_j)$, i.e.
\begin{equation*}
    E[f(X_k)X_j \mid f(Z_j)] = \Sigma_{jk}\sigma_{f(X_k)} f(Z_j)X_j.
\end{equation*}
Now let us take again the expectation on both sides, rearrange and expand by $\sigma_{X_j}$, yielding
\begin{equation}\label{population_polyserial_nonpara}
    \Sigma_{jk} = \frac{E[f(X_k)X_j]}{\sigma_{f(X_k)} E[f(Z_j)X_j]} = \frac{r_{f(X_k)X_j}\sigma_{X_j}}{E[f(Z_j)X_j]},
\end{equation}
where $r_{f(X_k)X_j}$ is the product-moment correlation between the Gaussian (unobserved) variable $f(X_k)$ and the observed discretized variable $X_j$.

All that remains is to find sample versions of each of the three components in Eq. \eqref{population_polyserial_nonpara}. Let us start with the expectation in the denominator $E[f(Z_j)X_j]$. By assumption $f(\mathbf{Z}) \sim \text{N}(\mathbf{0},\mathbf{\Sigma})$ and therefore w.l.o.g. $f(Z_j) \sim \text{N}(0,1)$ for all $j \in 1, \dots, d_1$. Consequently, we have:
\begin{equation}
    \begin{split}
        E[f(Z_j)X_j] &= \sum_{r=1}^{l_{j+1}} x^r_j \int_{\Gamma_j^{r-1}}^{\Gamma_j^{r}} f(z_j) d F(f(z_j)) = \sum_{r=1}^{l_{j+1}} x^r_j \int_{\Gamma_j^{r-1}}^{\Gamma_j^{r}} f(z_j) \phi(f(z_j)) dz_j \\
        &= \sum_{r=1}^{l_{j+1}} x^r_j \bigg(\phi(\Gamma_j^r) - \phi(\Gamma_j^{r-1}) \bigg) = \sum_{r=1}^{l_{j}} (x^{r+1}_j - x^r_j)\phi(\Gamma_j^r),
    \end{split}
\end{equation}
where $\phi(t)$ denotes the standard normal density. Whenever the ordinal states are consecutive integers we have $\sum_{r=1}^{l_{j}} (x^{r+1}_j - x^r_j)\phi(\Gamma_j^r) = \sum_{r=1}^{l_{j}}\phi(\Gamma_j^r)$. Based on this derivation, it is straightforward to give an estimate of $E[f(Z_j)X_j]$ once estimates of the thresholds $\Gamma_j$ have been formed (see Section \ref{sec::thresholds} for more details). Let us turn to the numerator of Eq. \eqref{population_polyserial_nonpara}. The standard deviation of $X_j$ does not require any special treatment, and we simply use $\sigma^{(n)}_{X_j} = \sqrt{1/n \sum_{i=1}^n (X_{ij} - \bar{X}_{j})^2}$ to be able to treat discrete variables with a general number of states. However, the product-moment correlation $r_{f(X_k), X_j}$ is inherently more challenging as it involves the (unobserved) transformed version of the continuous variables. Therefore, we proceed to estimate the transformation.

To this end, consider the marginal distribution function of $X_k$, namely \[F_{X_k}(x)=P(X_k \leq x) = P(f(X_k) \leq f(x)) = \Phi(f(x)),\] such that $f(x) = \Phi^{-1}(F_{X_k}(x))$. In this setting, \citet{Liu09} propose to evaluate the quantile function of the standard normal at a Winsorized version of the empirical distribution function. This is necessary as the standard Gaussian quantile function $\Phi^{-1}(\cdot)$ diverges when evaluated at the boundaries of the $[0,1]$ interval. More precisely, consider $\hat{f}(u) = \Phi^{-1}(W_{\delta_n}[\hat{F}_{X_k}(u)])$,where $W_{\delta_n}$ is a Winsorization operator, i.e. \[W_{\delta_n}(u) \equiv \delta_n I(u < \delta_n) + u I(\delta_n \leq u \leq (1-\delta_n)) + (1-\delta_n) I(u > (1-\delta_n)).\] The truncation constant $\delta_n$ can be chosen in several ways.
\citet{Liu09} propose to use $\delta_n = 1/(4n^{1/4}\sqrt{\pi\log n})$ in order to control the bias-variance trade-off.
Thus, equipped with an estimator for the transformation functions, the product-moment correlation is obtained the usual way, i.e.
\begin{equation*}
    r^{(n)}_{\hat{f}(X_k),X_j} = \frac{\sum_{i=1}^n (\hat{f}(X_{ik}) - \mu(\hat{f}))(X_{ij} - \mu(X_j)}{\sqrt{\sum_{i=1}^n \Big(\hat{f}(X_{ik}) - \mu(\hat{f})\Big)^2}\sqrt{\sum_{i=1}^n \Big(X_{ij} - \mu(X_j)\Big)^2}},
\end{equation*}
where $\mu(\hat{f}) \equiv 1/n\sum_{i=1}^n \hat{f}(X_{ik})$ and $\mu(X_j) \equiv 1/n\sum_{i=1}^n X_{ij}$. The resulting estimator is a double-two-step estimator of the mixed couple $X_j$ and $X_k$.
\begin{definition}
    [Estimator $\hat{\mathbf{\Sigma}}^{(n)}$ of $\mathbf{\Sigma}$; \textit{Case II} nonparanormal]
    The estimator $\hat{\mathbf{\Sigma}}^{(n)} = (\hat{\Sigma}_{jk}^{(n)})_{1 < j \leq d_1 < k \leq d_2}$ of the correlation matrix $\mathbf{\Sigma}$ is defined by:
    \begin{equation}
        \hat{\Sigma}_{jk}^{(n)} = \frac{r^{(n)}_{\hat{f}(X_k),X_j} \sigma^{(n)}_{X_j}}{\sum_{r=1}^{l_{j}} \phi(\hat{\Gamma}_j^r)(x_j^{r+1} - x_j^r)}
    \end{equation}
    for all $1 < j \leq d_1 < k \leq d_2$.
\end{definition}

\subsection{Nonparanormal Case III}\label{sec::nonparanormal_case3}

Lastly, let us turn to \textit{Case III} where both $X_j$ and $X_k$ are discrete, but they might differ in their respective state spaces. In the previous section, the ML procedure could no longer be applied directly because we do not observe the continuous variable in its Gaussian form. In \textit{Case III}, however, we only observe the discrete variables generated by the latent scheme outlined in Definition \ref{def1}. Due to the monotonicity of the transformation functions, the ML procedure for \textit{Case III} from Section \ref{sec::latent_gaussian} can still be applied, i.e.

\begin{definition}[Nonparanormal estimator $\hat{\mathbf{\Sigma}}^{(n)}$ of $\mathbf{\Sigma}$; \textit{Case III} ]
    The estimator $\hat{\mathbf{\Sigma}}^{(n)} = (\hat{\Sigma}_{jk}^{(n)})_{1\leq j < k\leq d_1}$ of the correlation matrix $\mathbf{\Sigma}$ is defined by:
    \begin{equation}
        \hat{\Sigma}_{jk}^{(n)} = \argmax_{\Abs{\Sigma_{jk}} \leq 1} \frac{1}{n} \ell_{jk}^{(n)}(\Sigma_{jk}, x_j^r,x_k^s)
    \end{equation}
    for all $1 < j < k \leq d_1 $.
\end{definition}

In summary, the estimator $\hat{\mathbf{\Sigma}}^{(n)}$ under the latent Gaussian copula model is a simple but important tool for flexible mixed graph learning. By using ideas from polyserial and polychoric correlation measures, we not only have an easy-to-calculate estimator but also overcome the issue of finding bridge functions between all different kinds of discrete variables.

\subsection{Threshold estimation}\label{sec::thresholds}

The unknown threshold parameters $\Gamma_j$ for \(j \in [d_1]\) play a key role in linking the observed discrete to the latent continuous variables. Therefore, being able to form accurate estimates of the $\Gamma_j$ is crucial for both the likelihood-based procedures and the nonparanormal estimators outlined above.

We start by highlighting that we set the LGCM model up such that for each $\Gamma_j$, there exists a constant $G$ such that $\Abs{\Gamma^r_j} \leq G$ for all $r \in [l_{j}]$, i.e., the estimable thresholds are bounded away from infinity. Let us define the cumulative probability vector $\pi_j = (\pi^1_j, \dots, \pi^{l_{j}}_j)$. Then, by Eq. \eqref{latent_ordered}, it is easy to see that
\begin{equation}\label{thresholds_identity}
    \begin{split}
        \pi^r_j &= \sum_{i=1}^r P(X_j = x^i_j) = P(X_j \leq x^r_j) \\
        &= P(Z_j \leq \gamma_j^r) = P(f_j(Z_j) \leq f_j(\gamma_j^r)) = \Phi(\Gamma^r_j).
    \end{split}
\end{equation}
From this equation, it is immediately clear that the thresholds satisfy $\Gamma^{r}_{j} = \Phi^{-1}( \pi^r_j )$.
Consequently, when forming sample estimates of the unknown thresholds, we replace the cumulative probability vector with its sample equivalent, namely
\begin{equation}
    \hat\pi^r_j = \sum_{k=1}^r \Big[\frac{1}{n} \sum_{i=1}^n \mathbbm{1}{(X_{ij} = x^k_j)}\Big] = \frac{1}{n} \sum_{i=1}^n \mathbbm{1}{(X_{ij} \leq x^r_j)},
\end{equation}
and plug it into the identity, i.e. $\hat\Gamma^r_j = \Phi^{-1}\big( \hat\pi^r_j \big)$ for $j \in [d_1]$. The following lemma assures that these threshold estimates can be formed with high accuracy.
\begin{lemma}\label{lemma::thresholds}
    Suppose the estimated thresholds are bounded away from infinity, i.e., \(\Abs{\hat\Gamma^r_j} \leq G\) for all $j \in [d_1]$ and $r = 1, \dots, l_j$ and some \(G\). The following bound holds for all $t > 0$ with Lipschitz constant \(L_1 = 1/(\sqrt{\frac{2}{\pi}} \min\{\hat\pi^r_j, 1- \hat\pi^r_j\})\):
    \begin{equation*}
        P\Big(\Abs{\hat\Gamma_j^r - \Gamma_j^r} \geq t \Big) \leq 2\exp{\Big(- \frac{2t^2n}{L_1^2}\Big)}.
    \end{equation*}
\end{lemma}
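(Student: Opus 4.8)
The plan is to combine a Hoeffding bound for the empirical cumulative frequencies $\hat\pi_j^r$ with the Lipschitz continuity of the Gaussian quantile function $\Phi^{-1}$ on a region bounded away from $0$ and $1$. For fixed $j\in[d_1]$ and $r$, note first that $\hat\pi_j^r = \tfrac1n\sum_{i=1}^n \mathbbm{1}(X_{ij}\le x_j^r)$ is an average of i.i.d.\ random variables taking values in $[0,1]$ whose common mean is $P(X_j\le x_j^r)=\pi_j^r$ by Eq.~\eqref{thresholds_identity}. Hoeffding's inequality for bounded summands then gives $P(\Abs{\hat\pi_j^r-\pi_j^r}\ge u)\le 2\exp(-2nu^2)$ for every $u>0$.

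Next I would transfer this bound through $\Phi^{-1}$, using $\hat\Gamma_j^r=\Phi^{-1}(\hat\pi_j^r)$ and $\Gamma_j^r=\Phi^{-1}(\pi_j^r)$. The analytic ingredient is the identity $\tfrac{d}{dp}\Phi^{-1}(p)=1/\phi(\Phi^{-1}(p))$ together with the elementary lower bound $\phi(\Phi^{-1}(p))\ge\sqrt{2/\pi}\,\min\{p,1-p\}$ on $(0,1)$; this can be verified by checking that $g(p)=\phi(\Phi^{-1}(p))-\sqrt{2/\pi}\,p$ vanishes at $p\in\{0,\tfrac12\}$ and has $g'(p)=-\Phi^{-1}(p)-\sqrt{2/\pi}$, whence $g\ge 0$ on $(0,\tfrac12]$, and symmetrically on $[\tfrac12,1)$. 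Applying the mean value theorem to $\Phi^{-1}$ on the interval with endpoints $\pi_j^r$ and $\hat\pi_j^r$, and using the boundedness hypothesis $\Abs{\hat\Gamma_j^r}\le G$ (which forces this interval to stay inside $[\Phi(-G),\Phi(G)]$, hence away from $0$ and $1$), one obtains $\Abs{\hat\Gamma_j^r-\Gamma_j^r}\le L_1\,\Abs{\hat\pi_j^r-\pi_j^r}$ with $L_1$ as in the statement, since the supremum of the U-shaped derivative over that interval is attained at the endpoint closest to $0$ or $1$.

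Combining the two steps finishes the argument: because $\{\Abs{\hat\Gamma_j^r-\Gamma_j^r}\ge t\}\subseteq\{\Abs{\hat\pi_j^r-\pi_j^r}\ge t/L_1\}$, the Hoeffding bound with $u=t/L_1$ yields $P(\Abs{\hat\Gamma_j^r-\Gamma_j^r}\ge t)\le 2\exp(-2t^2 n/L_1^2)$, as claimed.

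The main obstacle is the second step: one must ensure the Lipschitz estimate is applied only where $\Phi^{-1}$ genuinely is Lipschitz, which is precisely what the hypothesis $\Abs{\hat\Gamma_j^r}\le G$ provides, and to land on the stated constant one needs the sharp inequality $\phi(\Phi^{-1}(p))\ge\sqrt{2/\pi}\,\min\{p,1-p\}$ rather than a cruder bound such as $\phi(\pm G)$. Everything else is routine. One minor point is that $L_1$ as written is data-dependent, so strictly speaking the inequality should either be read on the event $\{\Abs{\hat\Gamma_j^r}\le G\}$ or $L_1$ replaced by the deterministic upper bound $1/(\sqrt{2/\pi}\,\Phi(-G))$.
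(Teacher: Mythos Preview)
Your proposal is correct and follows essentially the same approach as the paper's proof: apply the Lipschitz property of $\Phi^{-1}$ on $[\Phi(-G),\Phi(G)]$ to reduce to a deviation of $\hat\pi_j^r$ from $\pi_j^r$, then invoke Hoeffding's inequality. In fact you supply more detail than the paper, which simply asserts the Lipschitz constant $L_1$ without deriving the inequality $\phi(\Phi^{-1}(p))\ge\sqrt{2/\pi}\,\min\{p,1-p\}$; your observation about the data-dependence of $L_1$ is also apt and not addressed in the paper's own argument.
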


The proof of Lemma \ref{lemma::thresholds} is given in Section 5 
of the Supplementary Materials. The requirement that the estimated thresholds are bounded away from infinity typically does not pose any restriction in finite samples. All herein-developed methods are applied in a two-step fashion. In the ensuing theoretical results, we stress this by denoting the estimated thresholds as $\bar{\Gamma}_j^r$.

\subsection{Concentration results}\label{sec::convergence_results}

Define \(\mathbf{\Sigma}^*\) and \(\mathbf{\Omega}^*\) as the true covariance matrix and its inverse, respectively. We start by stating the following assumptions:

\begin{assumption}\label{ass1}
    For all $1 \leq j < k \leq d$, $\Abs{\Sigma_{jk}^*} \neq 1$. In other words, there exists a constant $\delta > 0$ such that $\Abs{\Sigma_{jk}^*} \leq 1 - \delta$.
\end{assumption}

\begin{assumption}\label{ass2}
    For any $\Gamma_j^r$ with $j \in [d_1]$ and $r \in [l_{j}]$ there exists a constant $G$ such that $\Abs{\Gamma_j^r} \leq G$.
\end{assumption}

\begin{assumption}\label{ass3}
    Let $j < k$ and consider the log-likelihood functions in Definition \ref{definition_case2} and in Definition \ref{definition_case3}. We assume that with probability one,
    \begin{itemize}
        \item $\{-1+\delta, 1 - \delta\}$ are not critical points of the respective log-likelihood functions.
        \item The log-likelihood functions have a finite number of critical points.
        \item Every critical point that is different from $\Sigma_{jk}^*$ is non-degenerate.
        \item All joint and conditional states of the discrete variables have positive probability.
    \end{itemize}
\end{assumption}


Assumptions \ref{ass1} and \ref{ass2} ensure that $f(X_j)$ and $f(X_k)$ are not perfectly linearly dependent and that the thresholds are bounded away from infinity, respectively. Importantly, these constraints impose minimal restrictions in practice. Assumption \ref{ass3} guarantees that the likelihood functions in Section \ref{latent_gaussian_cm} exhibit a ``nice'' behavior, representing a mild technical requirement.

\paragraph{Convergence results for latent Gaussian models}
The subsequent theorem, drawing on \citet{Mei18}, hinges on four conditions, all substantiated in Section 3 of the Supplementary Materials. This concentration result specifically pertains to the MLEs introduced in Section \ref{sec::latent_gaussian} within the framework of the latent Gaussian model. We remark that related methodology has been applied by \citet{Anne19} in addressing zero-inflated Gaussian data under double truncation.

\begin{theorem}\label{uniform_convergence}
    Suppose that Assumptions \ref{ass1}--\ref{ass3} hold, and let $j \in [d_1]$ and $k \in [d_2]$ for  \textit{Case II} and $j,k \in [d_1]$ for \textit{Case III}. Let $\alpha \in (0,1)$, and let \(n \geq 4 C \log(n) \log\Big(\frac{B}{\alpha}\Big)\) with some known constants $B$, $C$, and $D$ depending on cases II and III but independent of $(n,d)$. Then, it holds that
    \begin{equation}
        P\Bigg(\max_{j,k}\abs{\hat{\Sigma}_{jk}^{(n)} - \Sigma_{jk}^{*}} \geq D\sqrt{\frac{\log(n)}{n} \log\bigg(\frac{B}{\alpha}\bigg)}\Bigg) \leq \frac{d(d-1)}{2}\alpha.
    \end{equation}
\end{theorem}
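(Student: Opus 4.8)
\section*{Proof proposal}

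The plan is to handle each admissible pair $(j,k)$ separately, to recognize $\hat{\Sigma}_{jk}^{(n)}$ as an interior critical point of a normalized empirical negative log-likelihood that, after a first step in which the nuisance parameters (the thresholds $\Gamma_j,\Gamma_k$, and in \textit{Case II} also the mean and variance of the continuous coordinate) have been replaced by estimates, is a nonconvex one-dimensional objective in $\sigma=\Sigma_{jk}$, and then to invoke the empirical-risk landscape machinery of \citet{Mei18} to localize this critical point near the population maximizer $\Sigma_{jk}^{*}$. A final union bound over the at most $\binom{d}{2}$ relevant pairs, together with the threshold concentration of Lemma \ref{lemma::thresholds}, then produces the stated probability.

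Concretely, fix $(j,k)$ and write $R(\sigma)=-E[\ell_{jk}(\sigma)]$ for the population risk and $\hat R_n(\sigma)=-\tfrac1n\ell_{jk}^{(n)}(\sigma)$ for the empirical risk, each viewed as a function of $\sigma$ on the compact interval $[-1+\delta,\,1-\delta]$ allowed by Assumption \ref{ass1}. By standard identifiability/likelihood theory $\Sigma_{jk}^{*}$ is the unique global minimizer of $R$, and by Assumption \ref{ass3} it is a non-degenerate critical point, all other critical points are non-degenerate and finite in number, and the interval endpoints are not critical points, so that $\hat{\Sigma}_{jk}^{(n)}$ is an interior critical point of $\hat R_n$. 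One must then verify the four conditions of \citet{Mei18}: (i) the per-sample loss and its first two $\sigma$-derivatives are sub-exponential with norms bounded uniformly over the admissible parameter region; (ii) $R$ is strongly convex (equivalently, the Fisher information is bounded below) in a neighborhood of $\Sigma_{jk}^{*}$; (iii) the gradient and Hessian of $R$ are bounded; and (iv) a mild regularity of the sampling holds. These are the content of Section 3 of the Supplementary Materials; the key computations reduce to differentiating the bivariate-normal cell probabilities $\pi_{rs}$ (and, in \textit{Case II}, the conditional normal probabilities) with respect to $\sigma$ via Plackett's/Price's formula, the resulting quantities — and hence the derivatives of $\log\pi_{rs}$ — being bounded because $\pi_{rs}$ is bounded away from $0$ (Assumption \ref{ass3}), $|\sigma|\le 1-\delta$ (Assumption \ref{ass1}), and the thresholds are bounded (Assumption \ref{ass2}).

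Granting these conditions, the landscape theorem of \citet{Mei18} yields, for $n\ge 4C\log(n)\log(B/\alpha)$, an event of probability at least $1-\alpha$ on which $\hat R_n$ has a unique critical point in a neighborhood of $\Sigma_{jk}^{*}$, that critical point coincides with the constrained maximizer $\hat{\Sigma}_{jk}^{(n)}$ (using that $R$'s global minimizer is unique and that the endpoints and the spurious critical points are excluded by Assumption \ref{ass3}), and $|\hat{\Sigma}_{jk}^{(n)}-\Sigma_{jk}^{*}|\le D\sqrt{\tfrac{\log n}{n}\log(B/\alpha)}$. Because the first step uses estimated thresholds $\bar\Gamma$ rather than the true ones, one additionally controls $|\hat{\Sigma}_{jk}^{(n)}(\bar\Gamma)-\hat{\Sigma}_{jk}^{(n)}(\Gamma^{*})|$ by a Lipschitz argument in the plugged-in thresholds and bounds $\|\bar\Gamma-\Gamma^{*}\|$ via Lemma \ref{lemma::thresholds}; this contributes only a term of the same $\sqrt{\log n/n}$ order, absorbed into $D$. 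The same scheme applies verbatim in both cases, with case-specific constants $B,C,D$. Taking the per-pair bound with failure probability $\alpha$ and a union bound over the at most $\binom{d}{2}=d(d-1)/2$ pairs (the $O(\sum_j l_j)$ threshold events being absorbed into the constants) gives the displayed inequality.

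I expect the main obstacle to be item (i): establishing the uniform sub-exponential control of the score and its derivative over the whole admissible region, which needs explicit, quantitative lower bounds on the cell probabilities and on the Fisher information in terms of $\delta$ and $G$, and a careful treatment of their interplay with the plugged-in threshold estimates. The remaining ingredients — the Lipschitz perturbation in the thresholds, the identification of the constrained maximizer with the localized critical point using Assumption \ref{ass3}, and the union bound — are comparatively routine.
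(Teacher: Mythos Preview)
Your proposal is correct and follows essentially the same route as the paper: verify the conditions of \citet{Mei18} (sub-Gaussian gradient, sub-exponential Hessian, Hessian regularity, strongly Morse population risk) for the one-dimensional per-pair log-likelihood via boundedness arguments driven by Assumptions \ref{ass1}--\ref{ass3}, apply their landscape theorem to obtain the per-pair deviation bound, and finish with a union bound over $\binom{d}{2}$ pairs. Your enumeration of the four conditions is phrased slightly differently from the paper's (in particular, the paper needs the full \emph{strongly Morse} condition rather than just local strong convexity at $\Sigma_{jk}^{*}$, and it organizes the Hessian regularity as boundedness at a point plus a Lipschitz/third-derivative bound), and your explicit Lipschitz-in-$\bar\Gamma$ perturbation step is a refinement the paper's proof does not spell out; but the overall architecture and the key computations coincide.
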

\textit{Case I} of the latent Gaussian model addresses the well-understood scenario involving observed Gaussian variables, with concentration results and rates of convergence readily available -see, for example, Lemma 1 in \citet{Ravikumar11}. Consequently, the MLEs converge to \(\mathbf{\Sigma}^{*}\) at the optimal rate of \(n^{-1/2}\), mirroring the convergence rate as if the underlying latent variables were directly observed.

\paragraph{Convergence of nonparanormal estimators.} Recall the three cases, which, in principle, will have to be considered again.

\begin{description}[labelwidth=4em, leftmargin =\dimexpr\labelwidth+\labelsep\relax, font=\mdseries]
    \item[\textit{Case I}:] When both random variables are continuous, concentration results follow immediately from \citet{Liu12} who make use of Hoeffding's inequalities for $U$-statistics.
    \item[\textit{Case II}:] For the case where one variable is discrete and the other one continuous, we present concentration results below.
    \item[\textit{Case III}:] When both variables are discrete, we make an important observation that Theorem \ref{uniform_convergence} above still applies and needs not to be altered. We do not observe the continuous variables directly but only their discretized versions. Consequently, the threshold estimates remain valid under the monotone transformation functions, and so does the polychoric correlation.
\end{description}

\noindent The following theorem provides concentration properties for \textit{Case II} under the LGCM.
\begin{theorem}\label{concentration_caseII}
    Suppose that Assumptions \ref{ass1} and \ref{ass2} hold and $j \in [d_1]$ and $k \in [d_2]$. Then for any $\epsilon \in \Big[C_M\sqrt{\frac{\log d \log^2 n}{\sqrt{n}}},8(1+4c^2)\Big]$, with sub-Gaussian parameter $c$, generic constants $k_i, i = 1,2,3$ and constant $C_M = \frac{48}{\sqrt{\pi}} \big(\sqrt{2M} - 1\big)(M+2)$ for some $M \geq 2\big(\frac{\log d_2}{\log n} +1\big)$ with $C_\Gamma = \sum_{r=1}^{l_j} \phi(\bar{\Gamma}_j^r)(x_j^{r+1} - x_j^r)$ and Lipschitz constant $L$ the following probability bound holds
    \begin{multline*}
        P\left(\max_{jk}\abs{\hat{\Sigma}_{jk}^{(n)} -  \Sigma_{jk}^{*}} \geq \epsilon \right) \\
        \begin{aligned}
             & \leq 8\exp\Bigg(2\log d - \frac{\sqrt{n}\epsilon^2}{(64 \ L \ C_\gamma \ l_{\max} \ \pi)^2 \log n}\Bigg)                                                 \\
             & + 8\exp\left( 2\log d - \frac{n \epsilon^2}{(4L \ C_\gamma)^2 \ 128(1+4c^2)^2} \right)                                                                   \\
             & + 8\exp\Big(2\log d - \frac{\sqrt{n}}{8\pi\log n}\Big) + 4\exp\Big(- \frac{k_1 n^{3/4} \sqrt{\log n}}{k_2+ k_3} \Big) + \frac{2}{\sqrt{\pi \log(nd_2)}}.
        \end{aligned}
    \end{multline*}

\end{theorem}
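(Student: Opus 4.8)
The plan is to exploit the multiplicative structure of the \textit{Case II} estimator, $\hat\Sigma^{(n)}_{jk} = r^{(n)}_{\hat f(X_k),X_j}\,\sigma^{(n)}_{X_j}/C_\Gamma$, and reduce the problem to separate concentration statements for its three ingredients, since the population identity \eqref{population_polyserial_nonpara} has exactly the same form with $r_{f(X_k),X_j}$, $\sigma_{X_j}$ and $E[f(Z_j)X_j] = \sum_{r=1}^{l_j}\phi(\Gamma^r_j)(x_j^{r+1}-x_j^r)$ in place of the sample quantities. I would first write $\hat\Sigma^{(n)}_{jk} - \Sigma^*_{jk}$ as a telescoping sum and use the Lipschitz continuity of the map $(a,b,c)\mapsto ab/c$ on any compact set bounded away from $c=0$. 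Assumptions \ref{ass1} and \ref{ass2} keep the denominator bounded away from zero and the numerator factors bounded, and on a ``good event'' where the sample versions also lie in fixed compact sets this yields $|\hat\Sigma^{(n)}_{jk}-\Sigma^*_{jk}| \le L\big(|r^{(n)}_{\hat f(X_k),X_j}-r_{f(X_k),X_j}| + |\sigma^{(n)}_{X_j}-\sigma_{X_j}| + |C_\Gamma^{(n)}-C_\Gamma|\big)$ for a deterministic Lipschitz constant $L$ depending only on $\delta$, $G$ and the number of levels. It then suffices to bound the three increments and take a union bound over the at most $d_1 d_2 \le \binom{d}{2}$ relevant index pairs, which is what turns each exponential tail into a factor $\exp(2\log d - \cdot)$.

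The two straightforward increments come next. Since $X_j$ takes finitely many bounded values (of order $l_{\max}$), $|\sigma^{(n)}_{X_j}-\sigma_{X_j}|$ concentrates at the parametric rate by Hoeffding's inequality applied to the averages of $X_j$ and $X_j^2$, producing one of the $n$-rate exponential terms. For the denominator, $\phi$ is $1$-Lipschitz, so $|C_\Gamma^{(n)}-C_\Gamma| \le l_{\max}\max_r |\bar\Gamma_j^r - \Gamma_j^r|$, and Lemma \ref{lemma::thresholds} converts this into a sub-Gaussian tail at rate $n$; exactly as in the \textit{Case III} discussion, this part is unaffected by the monotone transformations.

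The crux is the correlation increment $r^{(n)}_{\hat f(X_k),X_j}-r_{f(X_k),X_j}$, which I would split as $\big(r^{(n)}_{\hat f(X_k),X_j}-r^{(n)}_{f(X_k),X_j}\big) + \big(r^{(n)}_{f(X_k),X_j}-r_{f(X_k),X_j}\big)$: a transformation-estimation part and a pure-sampling part. For the latter, a Bernstein/Hoeffding bound for the empirical correlation of a bounded discrete variable with a truncated sub-Gaussian variable (sub-Gaussian parameter $c$) contributes the remaining $n$-rate exponential term. For the former, recall $\hat f(u) = \Phi^{-1}(W_{\delta_n}[\hat F_{X_k}(u)])$ and $f(u) = \Phi^{-1}(F_{X_k}(u))$ with $\delta_n = 1/(4n^{1/4}\sqrt{\pi\log n})$; I would condition on the event $\{\sup_u |\hat F_{X_k}(u) - F_{X_k}(u)| \le \delta_n\}$, whose complement has probability at most $2\exp(-2n\delta_n^2) = 2\exp(-\sqrt n/(8\pi\log n))$ by the Dvoretzky--Kiefer--Wolfowitz inequality — this is the third exponential term. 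On this event $\Phi^{-1}$ is evaluated only on $[\delta_n,1-\delta_n]$, where its derivative is $O(\delta_n^{-1})$ up to a $\sqrt{\log(1/\delta_n)}$ factor, so $\sup_u |\hat f(u)-f(u)| = O(\delta_n^{-1}\sup_u|\hat F_{X_k}-F_{X_k}|)$; feeding this into the numerator and denominator of the empirical correlation — after truncating the unbounded Gaussian score $f(X_k)$ at a level $\asymp\sqrt{M\log n}$ so that only bounded summands are averaged — yields the $\sqrt n$-rate terms carrying the $\log n$ and $l_{\max}$ factors, the $n^{3/4}\sqrt{\log n}$ term (of order $\sqrt n/\delta_n$), and the residual $2/\sqrt{\pi\log(nd_2)}$, the latter being the probability that the truncation fails for one of the $d_2$ sub-Gaussian continuous coordinates under the stated choice $M \ge 2(\log d_2/\log n + 1)$.

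The main obstacle is precisely this transformation-estimation step: controlling $\Phi^{-1}$ near the endpoints of $[0,1]$, carefully tracking the $n^{1/4}$-order losses introduced by the Winsorization level $\delta_n$ and by truncating the Gaussian score, and calibrating the constant $C_M = \frac{48}{\sqrt\pi}(\sqrt{2M}-1)(M+2)$ together with the lower endpoint $C_M\sqrt{\log d\log^2 n/\sqrt n}$ of the admissible range for $\epsilon$ so that these slower $\sqrt n$-rate terms dominate and the final bound is meaningful. After that, one simply collects the five contributions, takes the union bound over the index pairs, and reads off the stated inequality; everything outside the transformation analysis reduces to standard concentration for bounded random variables, the DKW inequality, and Lemma \ref{lemma::thresholds}.
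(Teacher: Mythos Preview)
Your plan is essentially the paper's strategy: a Lipschitz decomposition of the ratio, the Dvoretzky--Kiefer--Wolfowitz inequality together with a mean-value bound on $\Phi^{-1}$ on $[\delta_n,1-\delta_n]$, a truncation of the Gaussian score at level $\sqrt{M\log n}$ controlled by the Gaussian maximal inequality, and a Bernstein argument on the ``end region'' yielding the $n^{3/4}\sqrt{\log n}$ term. Two bookkeeping differences are worth noting. First, the paper does not work with the triple $(r^{(n)}_{\hat f(X_k),X_j},\sigma^{(n)}_{X_j},C_\Gamma)$ but rewrites the estimator as $S_{\hat f(X_k)X_j}\big/\big(\sigma^{(n)}_{\hat f(X_k)}\,C_\Gamma\big)$; since $r^{(n)}_{\hat f(X_k),X_j}\sigma^{(n)}_{X_j}=S_{\hat f(X_k)X_j}/\sigma^{(n)}_{\hat f(X_k)}$, your $\sigma^{(n)}_{X_j}$ increment is superfluous and the real second ingredient that must be analysed is $\sigma^{(n)}_{\hat f(X_k)}$, which your sketch buries inside the correlation denominator --- the paper treats it explicitly with the same middle/end-region split as the covariance. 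Second, the theorem is stated for the two-step estimator with $C_\Gamma=\sum_r\phi(\bar\Gamma_j^r)(x_j^{r+1}-x_j^r)$ held fixed, so your third increment $|C_\Gamma^{(n)}-C_\Gamma|$ via Lemma~\ref{lemma::thresholds} is not part of the proof (the paper simply remarks that the Lipschitz property of $\phi$ would transfer Lemma~\ref{lemma::thresholds} if needed, then proceeds with $\bar\Gamma$ given). With these adjustments your outline matches the paper's argument.
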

The proof of the theorem is given in Section 6
of the Supplementary Materials. The first four terms in the probability bound stem from finding bounds to different regions of the support of the transformed continuous variable. The last term is a consequence of the fact that we estimate the transform directly.

Regarding the scaling of the dimension in terms of sample size, the ensuing corollary follows immediately.
\begin{corollary}
    For some known constant $K_{\Sigma}$ independent of $d$ and $n$ we have
    \begin{equation}
        P\Bigg(\max_{j,k}\abs{\hat{\Sigma}_{jk}^{(n)} - \Sigma_{jk}^*} > K_{\Sigma}\sqrt{\frac{\log d \log n}{\sqrt{n}}} \Bigg) = o(1).
    \end{equation}
\end{corollary}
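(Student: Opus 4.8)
The plan is to split the maximum over all pairs $(j,k)$ according to the three cases of Section~\ref{sec::gaussian} and to argue that the slowest rate, that of \textit{Case~II}, dominates. For \textit{Case~I} the bound of \citet{Liu12} (via Hoeffding's inequality for $U$-statistics) gives $\max_{j,k}\abs{\hat\Sigma_{jk}^{(n)}-\Sigma_{jk}^{*}} = O_P\big(\sqrt{\log d/n}\big)$. For \textit{Case~III}, Theorem~\ref{uniform_convergence} applies verbatim (the discretization acts only through the monotone $f_j$, as noted in Section~\ref{sec::convergence_results}); taking $\alpha = \alpha_n \propto d^{-3}$ makes the union factor $\tfrac{d(d-1)}{2}\alpha_n = o(1)$ and yields the rate $\sqrt{\log n\log d/n}$, with the required $n \geq 4C\log n\log(B/\alpha_n)$ holding under any scaling for which the claimed rate is informative. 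Since $\sqrt{\log d/n}$ and $\sqrt{\log n\log d/n}$ are both $o\big(\sqrt{\log d\log n/\sqrt n}\big)$, it then remains only to control \textit{Case~II} through Theorem~\ref{concentration_caseII}.

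For \textit{Case~II} I would set $\epsilon = \epsilon_n := K_\Sigma\sqrt{\log d\log n/\sqrt n}$. Since $\epsilon_n = o(1)$ we have $\epsilon_n \le 8(1+4c^2)$ for $n$ large, and choosing $K_\Sigma$ no smaller than the constant forced by the lower endpoint of the admissible interval keeps $\epsilon_n$ in the range required by Theorem~\ref{concentration_caseII}. Substituting $\epsilon_n$ into the five-term bound, I would check that each summand vanishes. In the first term the factors $\sqrt n$ and $\log n$ cancel, leaving exponent $\big(2 - K_\Sigma^2/(64\,L\,C_\gamma\,l_{\max}\,\pi)^2\big)\log d$, which is $\le -2\log d$ once $K_\Sigma$ is chosen large relative to $L, C_\gamma, l_{\max}$, so that term is $O(d^{-2}) = o(1)$ in the regime $d\to\infty$. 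In the second term the exponent equals $2\log d$ minus a positive constant times $K_\Sigma^2\sqrt n\log d\log n$, hence tends to $-\infty$ because $\sqrt n\log n\to\infty$. In the third, $2\log d - \sqrt n/(8\pi\log n)\to-\infty$ precisely because $\sqrt{\log d\log n/\sqrt n} = o(1)$ forces $\log d\log n = o(\sqrt n)$. The fourth term $\to 0$ since $n^{3/4}\sqrt{\log n}\to\infty$, and the fifth, $2/\sqrt{\pi\log(nd_2)}$, $\to 0$ since $nd_2\to\infty$. Thus the \textit{Case~II} probability is $o(1)$, and a union over the three cases with the common constant $K_\Sigma$ (enlarged if necessary so that it also dominates the \textit{Case~I} and \textit{Case~III} rates) gives the claim.

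The argument is essentially constant-tracking, in line with the paper's remark that the corollary ``follows immediately''; there is no deep obstacle. I expect the only points needing care to be: (i) pinning down $K_\Sigma$ large enough that the first exponential's exponent is strictly negative while remaining compatible with the lower endpoint of the admissible $\epsilon$-interval in Theorem~\ref{concentration_caseII} (which incidentally suggests the logarithmic factor in the stated rate should be read in tandem with that interval); (ii) making explicit the high-dimensional scaling $d = d_n \to \infty$ with $\log d = o(\sqrt n/\log n)$, under which the stated rate is meaningful and all five terms decay; and (iii) verifying that the faster \textit{Case~I} and \textit{Case~III} rates are genuinely $o(\epsilon_n)$, so that a single $K_\Sigma$ controls the overall maximum.
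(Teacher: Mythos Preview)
Your proposal is essentially correct and matches the paper's approach: the paper simply asserts the corollary ``follows immediately'' from Theorem~\ref{concentration_caseII}, and your \textit{Case~II} analysis is precisely the detailed version of that claim---substituting $\epsilon_n = K_\Sigma\sqrt{\log d\log n/\sqrt n}$ and checking each of the five summands vanishes.

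One remark on scope: as placed in the paper, the corollary is stated immediately after Theorem~\ref{concentration_caseII} and the surrounding discussion (``The nonparanormal estimator for \textit{Case~II} converges\ldots'') makes clear it concerns only the \textit{Case~II} pairs $j\in[d_1]$, $k\in[d_2]$. Your decomposition into all three cases is therefore more than is required, though harmless and arguably useful if one later wants a uniform statement for the full matrix $\hat{\mathbf\Sigma}^{(n)}$. Your observation in point~(i) is also well taken: the lower endpoint of the admissible interval in Theorem~\ref{concentration_caseII} carries $\log^2 n$ rather than $\log n$, so a strictly constant $K_\Sigma$ cannot simultaneously satisfy the interval constraint and the corollary's stated rate for all large $n$; this is a minor inconsistency in the paper's own statements rather than a flaw in your argument.
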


The nonparanormal estimator for \textit{Case II} converges to \(\Sigma_{jk}^*\) at rate \(n^{-1/4}\), which is slower than the optimal parametric rate of \(n^{-1/2}\). This stems not from the presence of the discrete variable but from the direct estimation of the transformation function \({f}_j\) and the corresponding truncation constant \(\delta_n\). Both \citet{Xue12} and \citet{Liu12} discuss room for improvement of the estimator for \(f_j\) to get a rate closer to the optimal one. Improvements depend strongly on the choice of truncation constant, striking a bias-variance balance in high-dimensional settings. In all of the numerical experiments below, we find that Theorem \ref{concentration_caseII} gives a worst-case rate that does not appear to negatively impact performance compared to estimators that attain the optimal rate.

\subsection{Estimating the precision matrix}\label{sec:precision_matrix}

Similar to \citet{Fan17}, we plug our estimate of the sample correlation matrix into existing routines for estimating $\mathbf{\Omega}^*$. In particular, we employ the graphical lasso (glasso) estimator \citep{Friedman08}, i.e.
\begin{equation}\label{glasso}
    \hat{\mathbf\Omega} = \argmin_{\mathbf\Omega \succeq 0} \big[\text{tr}(\hat{\mathbf\Sigma}^{(n)}\mathbf\Omega) - \log\Abs{\mathbf\Omega} + \lambda \sum_{j\neq k}\Abs{\Omega_{jk}}\big],
\end{equation}
where $\lambda > 0$ is a regularization parameter. As $\hat{\mathbf\Sigma}^{(n)}$ exhibits at worst the same theoretical properties as established in 
\citet{Liu09}, convergence rate and graph selection results follow immediately.

We do not penalize diagonal entries of $\mathbf\Omega$ and therefore have to make sure that $\hat{\mathbf\Sigma}^{(n)}$ is at least positive semidefinite to establish convergence in Eq. \eqref{glasso}. Hence, we need to project $\hat{\mathbf\Sigma}^{(n)}$ into the cone of positive semidefinite matrices; see also \citep{Liu12, Fan17}. In practice, we use an efficient implementation of the alternating projections method proposed by \citet{Higham88}.

To select the tuning parameter in Eq. \eqref{glasso}
\citet{Foygel10} introduce an extended BIC (eBIC) in particular for Gaussian graphical models establishing consistency in higher dimensions under mild asymptotic assumptions. We consider
\begin{equation}\label{EBIC}
    eBIC_\theta = -2 \ell^{(n)}(\hat{\mathbf\Omega}(E)) + \Abs{E} \log(n) + 4 \Abs{E}\theta \log(d),
\end{equation}
where $\theta \in [0,1]$ governs penalization of large graphs. Furthermore, $\abs{E}$ represents the cardinality of the edge set of a candidate graph on $d$ nodes and $\ell^{(n)}(\hat{\mathbf\Omega}(E))$ denotes the corresponding maximized log-likelihood which in turn depends on $\lambda$ from Eq. \eqref{glasso}. In practice, first, one retrieves a small set of models over a range of penalty parameters $\lambda > 0$ (called \textit{glasso path}). Then, we calculate the eBIC for each model in the path and select the one with the minimal value.

\section{Numerical results}\label{sec::numerical_results}


To numerically assess the accuracy of our mixed graph estimation approach, we commence with a simulation study in which the estimators are rigorously evaluated in a gold-standard fashion and compared against oracles.

\subsection{Simulation setup}\label{sec::setup}




We start by constructing the underlying precision matrix $\mathbf{\Omega}^*$ whose zero pattern encodes the undirected graph. We set $\mathbf{\Omega}^*_{jj} = 1$ and $\mathbf{\Omega}_{jk}^* = s \cdot b_{jk}$ if $j\neq k$, where $s$ is a constant signal strength chosen to assure positive definiteness. Furthermore, $b_{jk}$ are realizations of a Bernoulli random variable with corresponding success probability $p_{jk} = (2\pi)^{-1/2} \exp\big[\Norm{v_j - v_k}_2 / (2c) \big]$. In particular, $v_j = (v_j^{(1)}, v_j^{(2)})$ are independent realizations of a bivariate uniform $[0,1]$ distribution and $c$ controls the sparsity of the graph.

Throughout the simulation, we set $s = 0.15$ and incrementally increase the dimensionality s.t. $d \in \{50,250,750\}$, representing a transition from small to large-scale graphs. We let $\mathbf{\Sigma}^* = (\mathbf{\Omega}^*)^{-1}$ be rescaled such that all diagonal elements are equal to $1$. Given $\mathbf\Sigma^*$, we first obtain the partially latent continuous data \(\mathbf{Z} = (\mathbf{Z}_1, \mathbf{X}_2)\) where $\mathbf{Z} \sim \text{NPN}_d(\mathbf{0}, \mathbf{\Sigma}^*, f)$. In practice, we draw \(n\) i.i.d. samples from \(\text{N}_d(\mathbf{0}, \mathbf{\Sigma}^*)\) and apply the back-transform \(f^{-1}\) to each individual variable.

To generate general mixed data \(\mathbf{X} = (\mathbf{X}_1,\mathbf{X}_2)\) according to the LGCM we need to appropriately threshold \(\mathbf{Z}_1\). Let $\mathbf{X}_1$ be partitioned into equally sized collections of binary, ordinal, and Poisson distributed random variables, i.e., $\mathbf{X}_1 = (\mathbf{X}_1^{\text{bin}}, \mathbf{X}_1^{\text{ord}},\mathbf{X}_1^{\text{pois}})$. We use the inverse probability integral transform (IPT) to generate random samples from the respective cumulative distribution functions, corresponding to the relationship described in Eq. \eqref{latent_ordered}. For $\mathbf{X}_1^{\text{bin}}$ IPT is employed with success probability drawn from Uniform$[0.4,0.6]$ for $80\%$ of $\mathbf{X}_1^{\text{bin}}$. We assign unbalanced classes to the remaining $20\%$, where the success probability is drawn from Uniform$[0.05,0.1]$. Regarding $\mathbf{X}_1^{\text{ord}}$, IPT is used to generate samples from the multinomial distribution. To that end, we draw the number of categories from Uniform$[3,7]$ and round it to the nearest integer. We set the probability of falling into one of these categories to be proportional to their number. Lastly, $\mathbf{X}_1^{\text{pois}}$ is generated using IPT with the rate parameter set to $6$. In case we only need a mix of binary and continuous data, we set \(\mathbf{X}_1 = \mathbf{X}_1^{\text{bin}}\).

Throughout the experiments, $\hat{\Omega}$ is chosen by minimizing the eBIC according to the procedure outlined in Section \ref{sec:precision_matrix} with $\theta = 0.1$ for the low and medium, $\theta = 0.5$ for the high dimensional graphs.
The sample size $n$ is set to \(200\) for \(d\in \{50, 250\}\) and \(300\) for \(d = 750\). We set the number of simulation runs to \(100\). Lastly, we choose the sparsity parameter $c$ such that the number of edges aligns roughly with the dimension -- except for $d = 50$, where we allow for $200$ edges following \citet{Fan17}.

\paragraph{Performance metrics.}
To evaluate performance, we report the mean estimation error $\Norm{\hat{\mathbf{\Omega}} - \mathbf\Omega^*}_F$ using the Frobenius norm. Additionally, we employ graph recovery metrics. For this purpose, we calculate the number of true positives $\text{TP}(\lambda)$ and false positives $\text{FP}(\lambda)$ based on the \textit{glasso path}. $\text{TP}(\lambda)$ represents the count of non-zero lower off-diagonal elements that are consistent both in $\mathbf\Omega^*$ and $\hat{\mathbf\Omega}$, while $\text{FP}(\lambda)$ denotes the count of non-zero lower off-diagonal elements in $\hat{\mathbf\Omega}$ that are zero in $\mathbf\Omega^*$.

The true positive rate $\text{TPR}(\lambda)$ and false positive rate $\text{FPR}(\lambda)$ are defined as $\text{TPR} = \frac{\text{TP}(\lambda)}{\abs{E}}$ and $\text{FPR} = \frac{\text{FP}(\lambda)}{d(d-1)/2 - \abs{E}}$, respectively. Finally, we consider the area under the curve (AUC), where a value of $0.5$ corresponds to random guessing of edge presence and a value of $1$ indicates perfect error-free recovery of the underlying latent graph (in the rank sense of ROC analysis).


\subsection{Simulation results}

\paragraph{Binary-continuous data}

We start by considering a mix of binary and continuous variables generated as outlined in Section \ref{sec::setup} to compare our methods against the bridge function approach of \citet{Fan17}. For this purpose, \figref{fig:bench_binary} depicts the mean estimation error $\Norm{\hat{\mathbf\Omega} - \mathbf\Omega^*}_F$ and the AUC for the different estimators under the different $(d,n)$ regimes. We include the following estimators for \(\mathbf\Omega^*\): (1) An oracle estimator (\texttt{oracle}) that corresponds to estimating $\hat{\mathbf\Sigma}^{(n)}$ using the mapping between Spearman's rho and \(\hat\Sigma_{jk}^*\) (Eq. \eqref{spearman_mapping}) based on realization of the (partially) latent continuous data $(\mathbf{Z_1},\mathbf{X_2})$. (2) The bridge function based estimator (\texttt{bridge}) proposed by \citet{Fan17}. (3) The polychhoric and polyserial MLE estimator (\texttt{mle}) proposed in Section \ref{sec::latent_gaussian}. (4) The general mixed estimator (\texttt{poly}) proposed in Section \ref{sec::nonparanormal}. In the left column, we set $f_j(x) = x$ for all $j$, i.e., we recover the latent Gaussian model. In the right column, we set $f_j(x) = x^{1/3}$ for all $j$ to recover the LGCM.

\begin{figure}
    \centering
    \includegraphics[width=\textwidth]{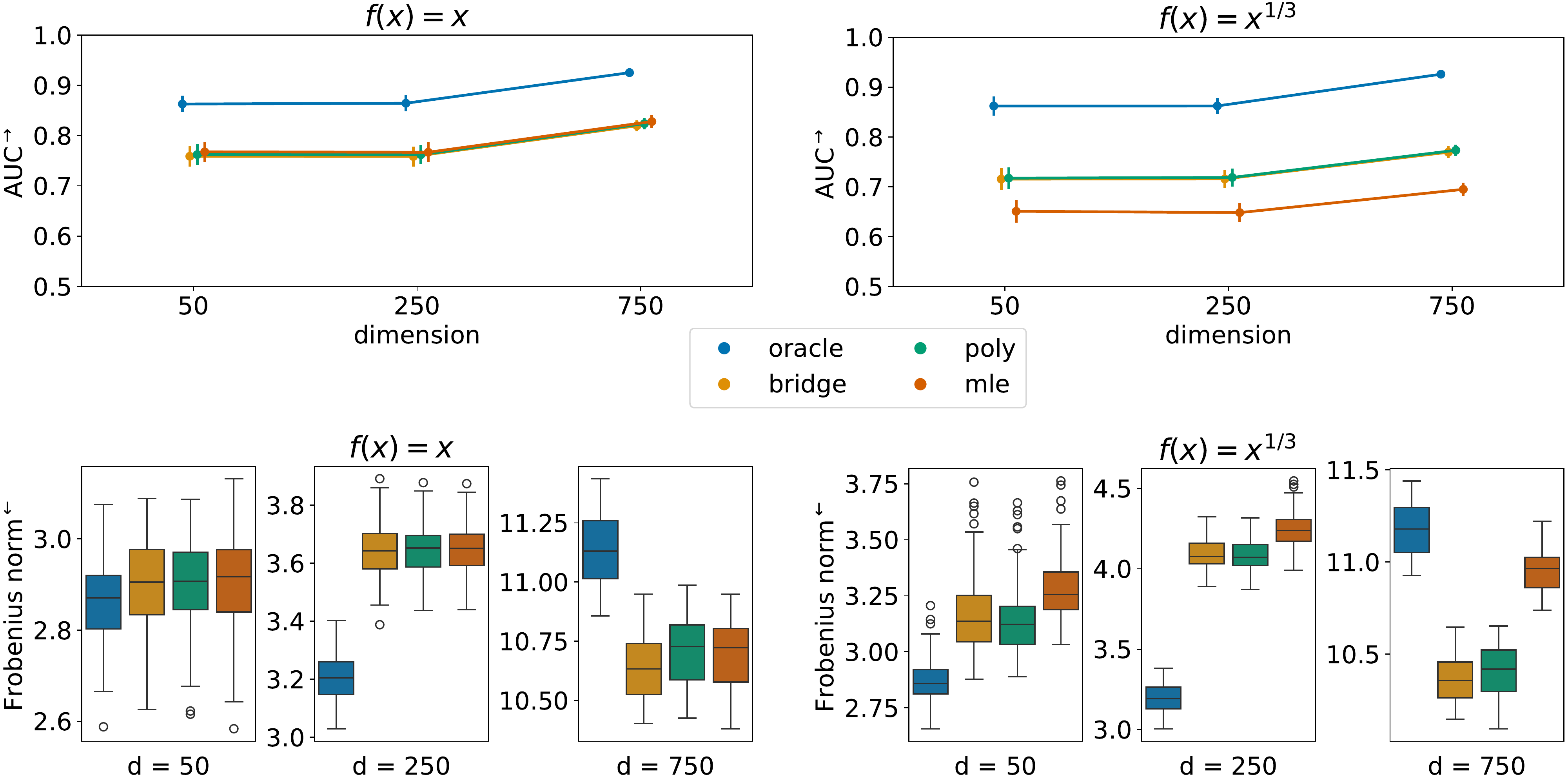}
    \caption{Simulation results for the binary-continuous data setting based on \(100\) simulation runs. The left column corresponds to the latent Gaussian model, where the transformation function is the identity. The right colum depicts results for the LGCM with \(f_j(x) = x^{1/3}\) for all \(j\). The top row reports mean and standard deviation of the AUC along simulation runs, and the bottom row depicts boxplots of the estimation error $\Norm{\hat{\mathbf\Omega} - \mathbf\Omega^*}_F$. The y-axis labels have superscript arrows attached to indicate the direction of improvement: \(\rightarrow\) implies that larger values are better, and \(\leftarrow\) implies that smaller values are better.}
    \label{fig:bench_binary}
\end{figure}

\figref{fig:bench_binary} suggests that under the latent Gaussian model (left column), there are virtually no differences between all non-oracle estimators in graph recovery or estimation error. As expected, the \texttt{oracle} has the highest AUC and lowest estimation error across scenarios. The only exception is the estimation error when the dimension is \(d = 750\). This surprising result stems from an increased FPR for \texttt{oracle} (see Figure 3 in the Supplementary Materials) when minimizing the eBIC with the additional penalty set to $\theta = 0.5$. A higher penalty seems appropriate in this case. Meanwhile, the non-oracle estimators are more conservative, and the additional penalty appears to be chosen correctly in these cases.

In the right column of \figref{fig:bench_binary}, binary-continuous mixed data is generated from the LGCM. The \textit{Case I} and \textit{Case II} MLEs are misspecified in this case, which translates to lower AUC and higher estimation error.
The remaining estimators are unaffected by the transformation. Encouragingly, we find no substantial performance difference between the bridge function approach and our procedure in any of the metrics considered, including the TPR and FPR results in Figure 3 in the Supplementary Materials.

\paragraph{General mixed data}
Let us turn to the general mixed setting. While the bridge function approach by \citet{Fan17} does not extend beyond the binary-continuous mix, we can still compare our approach to the ensemble method developed by \citet{Feng19}. Due to its close connection to the original method, we continue to denote the proposed ensemble estimator \texttt{bridge}.
\begin{figure}
    \centering
    \includegraphics[width=\textwidth]{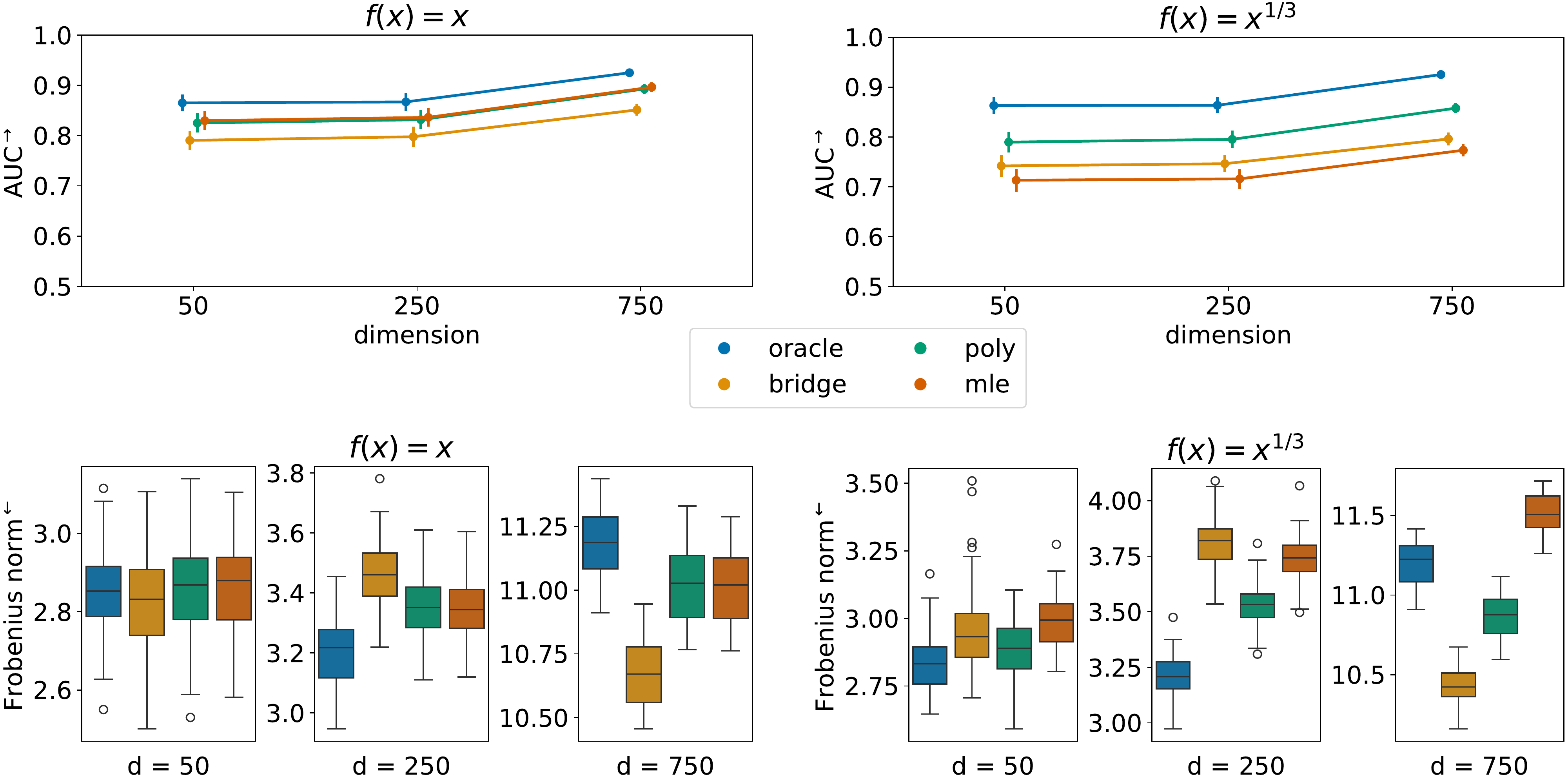}
    \caption{Simulation results for the general mixed data setting based on \(100\) simulation runs. The left column corresponds to the latent Gaussian model, where the transformation function is the identity. The right colum depicts results for the LGCM with \(f_j(x) = x^{1/3}\) for all \(j\). The top row reports mean and standard deviation of the AUC along simulation runs, and the bottom row depicts boxplots of the estimation error $\Norm{\hat{\mathbf\Omega} - \mathbf\Omega^*}_F$. The y-axis labels have superscript arrows attached to indicate the direction of improvement: \(\rightarrow\) implies that larger values are better, and \(\leftarrow\) implies that smaller values are better.}
    \label{fig:bench_genral}
\end{figure}

Similar to above, \figref{fig:bench_genral} depicts the mean estimation error $\Norm{\hat{\mathbf\Omega} - \mathbf\Omega^*}_F$ and the AUC and left and right columns correspond to latent Gaussian und LGCM settings, respectively. This time, given general mixed data, differences in terms of AUC between the estimators are noticeable. When the transformations are the identity, the MLE is correctly specified, and it is tied with \texttt{poly} in terms of AUC. The \texttt{bridge} estimator performs worse than the other two estimators. When the transformations are $f_j(x) = x^{1/3}$, the MLE is misspecified, and our \texttt{poly} estimator performs best among non-oracle estimators in terms of AUC. The \texttt{bridge} estimator performs only marginally better than the misspecified \texttt{mle}.

Turning to estimation error results, when \(f_j(x) = x\), the \texttt{poly} and \texttt{mle} estimators perform similarly across dimensions. As the \texttt{oracle} estimator is formed on the latent continuous data, it is unaffected by any discretization and behaves the same as in the binary-continuous case above. The \texttt{bridge} ensemble estimator accounts for a slightly higher estimation error when \(d = 250\) and a lower one when \(d=750\). This pattern can be explained by the FPR of the \texttt{bridge} estimator as illustrated in Figure 4 in the Supplementary Materials. While the FPR is slightly higher in the \(d=250\) case, it is lower in the \(d=750\) case. Similar to the \texttt{oracle} results, this appears to be a consequence of the additional penalty term in the eBIC. Considering the estimation error when \(f_j(x) = x^{1/3}\), the \texttt{poly} and \texttt{bridge} estimators retain their performance. As before, the \texttt{mle} estimator is misspecified and performs worse than the other two estimators.

Overall, the simulation results suggest that our proposed \texttt{poly} estimator performs similarly (binary-continuous data setting) or better (general mixed setting) than the current state-of-the-art. In particular, the \texttt{poly} estimator achieves good performance scores regarding recovery of the graph structure in the general mixed setting. Estimation error results are more sensitive to the choice of the additional high-dimensional penalty. These empirical results suggest that despite the theoretically slower convergence rate, the \texttt{poly} estimator is competitive regarding graph recovery and estimation error.



\section{Conclusion}\label{sec::conclusions}

Estimating high-dimensional undirected graphs from general mixed data is a challenging task. We propose an innovative approach that blends classical generalized correlation measures, specifically polychoric and polyserial correlations, with recent concepts from high-dimensional graphical modeling and copulas.

A pivotal insight guiding our approach is recognizing that polychoric and polyserial correlations can be effectively modeled through a latent Gaussian copula. Although adapting polyserial correlation to the nonparanormal case demands careful consideration, the polychoric correlation requires no adjustments. The resulting estimators exhibit favorable theoretical properties, even in high dimensions, and demonstrate robust empirical performance in our simulation study.

Our advocated framework builds on prior work extending the graphical lasso for Gaussian observations to nonparanormal models and subsequently to mixed data, as seen in the contributions of \citet{Fan17}, followed by \citet{Quan18} and \citet{Feng19}. A key distinction in our approach is the absence of the need to specify bridge functions. Indeed, our method seamlessly handles various types of mixed data without requiring additional user effort. 

\section{Software}\label{sec5}

Software in the form of the \texttt{R} package \pkg{hume} is available on the corresponding author's GitHub page (\url{https://github.com/konstantingoe/hume}). The \texttt{R}-code to reproduce the simulation study conducted in the paper is available under \url{https://github.com/konstantingoe/mixed_hidim_graphs}.

\section*{Supplementary Materials}\label{sec::sm}

We refer the reader to the Supplementary Materials for technical appendices, including proofs of the theorems and lemmas in the main manuscript. Additionally, we present further simulation results and an analysis of real phenotyping data concerning severe COVID-19 outcomes from the UK Biobank.

\section*{Funding}

This work was supported by the Helmholtz AI project
``Scalable and Interpretable Models for Complex And Structured Data'' (SIMCARD),
the Medical Research Council [programme number
        MC UU 00002/17] and the National Institute for Health Research (NIHR) Cambridge Biomedical Research Centre.
\\

\noindent This project also received funding from the European Research Council (ERC) under the European Union’s Horizon 2020 research and innovation programme [grant agreement No 883818].



\begin{acks}[Acknowledgments]
    We thank the anonymous reviewers whose insightful comments and constructive feedback significantly enhanced the quality and clarity of this paper. We further want to thank Hongjian Shi for his helpful insights on the subject.\\

    \paragraph{Conflict of Interest:} None declared.
\end{acks}

\clearpage
\newpage
\renewcommand{\thesection}{SM} 
\section*{Supplementary Materials}

\section{Methodology}

In the following sections, we derive the MLEs for the latent Gaussian model. Then, we provide the proof for Theorem 3.2, followed by the proof of Theorem 3.3.

\subsection{\textit{Case II} MLE derivation}

In \textit{Case II}, we consider the instance where $X_j$ is ordinal and $X_k$ is continuous. Recall that in the latent Gaussian model, we take all \(\{f_k\}_{k=1}^d\) to be the identity. Consequently, \(X_k\) is Gaussian and \(\Gamma_j^r = f_j(\gamma_j^r) = \gamma_j^r\) for all \(j \in [d_1]\) and \(r \in [l_j]\). Recall that we are interested in the product-moment correlation $\Sigma_{jk}$ between two jointly Gaussian variables, where $X_j$ is not directly observed, but only the ordered categories (Eq. (2) in the Manuscript) are given. The likelihood of the $n$-sample is defined by:
\begin{equation}\label{polyserial_likelihood_appendix}
    \begin{split}
        L_{jk}^{(n)}(\Sigma_{jk}, x_j^r,x_k) &= \prod_{i=1}^n p(x_{ij}^{r},x_{ik}, \Sigma_{jk}) \\
        &= \prod_{i=1}^n p(x_{ik})p(x_{ij}^{r} \mid x_{ik}, \Sigma_{jk}),
    \end{split}
\end{equation}
where $p(x_{ij}^{r},x_{ik}, \Sigma_{jk})$ denotes the joint probability of  $X_j$ and $X_k$ and $p(x_{ik})$ the marginal density of the Gaussian variable $X_k$, i.e.
\begin{equation*}\label{marginal_normal}
    p(x_{ik}) = \big(2\pi\sigma\big)^{-\frac{1}{2}} \exp\Bigg[-\frac{1}{2}\bigg(\frac{x_{ik} - \mu}{\sigma_{ik}}\bigg)^2\Bigg].
\end{equation*}
Furthermore, the conditional probability of $X_j$ in Eq. 
(5) in the Manuscript can be written as:
\begin{equation}\label{threshold_conditionalprob}
    \begin{split}
        p(X_j = x_j^r \mid X_k, \Sigma_{jk}) = p(\Gamma_j^{r-1} \leq Z_j < \Gamma_j^r \mid X_k, \Sigma_{jk}) \\
        = p(Z_j \leq \Gamma_j^{r} \mid X_k, \Sigma_{jk}) - p(Z_j \leq \Gamma_j^{r-1} \mid X_k, \Sigma_{jk}) \\
        \Phi(\tilde{\Gamma}_j^{r}) - \Phi(\tilde{\Gamma}_j^{r-1}), \quad r = 1, \dots, l_{j},
    \end{split}
\end{equation}
where
\begin{equation*}
    \tilde{\Gamma}_j^{r} = \frac{\Gamma_j^{r} - \Sigma_{jk}\tilde{X_k} }{\sqrt{1-(\Sigma_{jk})^2}},
\end{equation*}
with $\tilde{X_k} = \frac{X_k-\mu_k}{\sigma_k}$ and $\Phi(t)$ denoting the standard normal distribution function. This follows straight from the fact that the conditional distribution of $Z_j$ is Gaussian with mean $\Sigma_{jk}\tilde{X_k}$ and variance $(1-(\Sigma_{jk})^2)$. The log-likelihood is then $\ell_{jk}^{(n)}(\Sigma_{jk}, x_j^r,x_k)$ with
\begin{equation}\label{polyserial_log-likelihood}
    \ell_{jk}^{(n)}(\Sigma_{jk}, x_j^r,x_k) = \sum_{i=1}^n \big[\log(p(x_{ik})) + \log(p(x_{j}^{r} \mid x_{ik}, \Sigma_{jk}))\big].
\end{equation}

Due to the heavy computational burden involved when estimating all parameters simultaneously, a \textit{two-step estimator} has been proposed \cite{Olsson82}. That is, in a first step $\mu_k, \sigma_k^2$ are estimated by $\bar{X_k}$ and $s_k^2$, respectively. Moreover, the thresholds $\Gamma_j^r, r = 1, \dots, l_{j}$ are estimated by the quantile function of the standard normal distribution evaluated at the cumulative marginal proportions of $x_j^r$ just as described in Section 3.4. 

In a second step, all that remains is obtaining the MLE for $\Sigma_{jk}$ now with the readily computed estimates from the first step:
\begin{equation}\label{FOC}
    \frac{\partial \ell_{jk}^{(n)}(\Sigma_{jk}, x_j^r,x_k)}{\partial \Sigma_{jk}} = \sum_{i=1}^n \frac{1}{p(x_{ij}^{r} \mid x_{ik}, \Sigma_{jk})} \frac{\partial p(x_{ij}^{r} \mid x_{ik}, \Sigma_{jk})}{\partial \Sigma_{jk}}.
\end{equation}
Let us take a closer look at the partial derivative of the conditional probability in Eq. \eqref{FOC}:
\begin{equation}\label{case2_firstdiff}
    \begin{split}
        \lefteqn{\frac{\partial p(x_{ij}^{r} \mid x_{ik}, \Sigma_{jk})}{\partial \Sigma_{jk}}} \\
        &= \frac{\partial \Phi({\tilde{\Gamma}}_j^{r})}{\partial \Sigma_{jk}} - \frac{\partial\Phi({\tilde{\Gamma}}_j^{r-1})}{\partial \Sigma_{jk}} \\
        &= \phi({\tilde{\Gamma}}_j^{r})\frac{\partial {\tilde{\Gamma}}_j^{r}}{\partial\Sigma_{jk}} - \phi({\tilde{\Gamma}}_j^{r-1})\frac{\partial {\tilde{\Gamma}}_j^{r}}{\partial\Sigma_{jk}} \\
        &= (1-(\Sigma_{jk})^2)^{-\frac{3}{2}}\Big[\phi({\tilde{\Gamma}}_j^{r})({\Gamma}_j^r\Sigma_{jk} - {\tilde{x}}_{ik}) - \phi({\tilde{\Gamma}}_j^{r-1})({\Gamma}_j^{r-1}\Sigma_{jk} - {\tilde{x}}_{ik})\Big],
    \end{split}
\end{equation}
where
\[{\tilde{X}}_{k} = \frac{X_{k} - \bar{X}_k}{\sqrt{s_k^2}} \quad \text{and} \quad {\tilde{\Gamma}}_j^{r} = \frac{{\Gamma}_j^{r} - \Sigma_{jk}{\tilde{X}}_k}{\sqrt{1-(\Sigma_{jk})^2}}.\]
The last equality in Eq. \eqref{case2_firstdiff} follows from taking the derivative and applying the chain rule. Putting all the pieces together, we obtain
\begin{equation}\label{MLE_polyserial}
    \begin{split}
        \frac{\partial\ell_{jk}^{(n)}(\Sigma_{jk}, x_j^r,x_k)}{\partial \Sigma_{jk}} = \sum_{i=1}^n &\Bigg[\frac{1}{p(x_{ij}^{r} \mid x_{ik}, \Sigma_{jk})} (1-(\Sigma_{jk})^2)^{-\frac{3}{2}} \\
        &\Big[\phi({\tilde{\Gamma}}_j^{r})({\Gamma}_j^r\Sigma_{jk} - {\tilde{x}}_{ik}) - \phi({\tilde{\Gamma}}_j^{r-1})({\Gamma}_j^{r-1}\Sigma_{jk} - {\tilde{x}}_{ik})\Big]\Bigg].
    \end{split}
\end{equation}

Setting Eq. \eqref{MLE_polyserial} to zero and solving for $\Sigma_{jk}$ yields the \textit{Case II} two-step MLE for $\Sigma_{jk}$. Note that this is a nonlinear optimization problem, which can efficiently be solved utilizing some quasi-Newton method.

\subsection{\textit{Case III} MLE derivation}

Turning to \textit{Case III}, where both $X_j$ and $X_k$ are ordinal variables, we argue in Section 3.3 in the manuscript that the MLE for the polychoric correlation remains valid for the LGCM. The probability of an observation taking values $X_j = x^r_j$ and $X_k = x^s_k$ is
\begin{equation}\label{cell_probabilities_appendix}
    \begin{split}
        \pi_{rs} &\coloneqq p(X_j = x^r_j, X_k = x^s_k) \\
        &= p(\Gamma_j^{r-1} \leq Z_j < \Gamma_j^r, \Gamma_k^{s-1} \leq Z_k < \Gamma_k^s) \\
        &= p(\Gamma_j^{r-1} \leq f_j(Z_j) < \Gamma_j^r, \Gamma_k^{s-1} \leq f_k(Z_k) < \Gamma_k^s) \\
        &= \int_{\Gamma_j^{r-1}}^{\Gamma_j^{r}} \int_{\Gamma_k^{s-1}}^{\Gamma^k_{s}} \phi_2(z_j,z_k,\Sigma_{jk}) dz_j dz_k,
    \end{split}
\end{equation}
where $r \in [l_j]$ and $s \in [l_k]$ and $\phi_2(x,y,\rho)$ denotes the standard bivariate density with correlation $\rho$. Then, as in the manuscript, the likelihood and log-likelihood of the $n$-sample are defined as:
\begin{equation}\label{polychoric_likelihood_appendix}
    \begin{split}
        L_{jk}^{(n)}(\Sigma_{jk}, x_j^r,x_k^s) &= C \prod_{r=1}^{l_{{j}}} \prod_{s=1}^{l_{{k}}} \pi_{rs}^{n_{rs}}, \\
        \ell_{jk}^{(n)}(\Sigma_{jk}, x_j^r,x_k^s) &= \log(C) + \sum_{r=1}^{l_{{j}}}\sum_{s=1}^{l_{{k}}} n_{rs} \log(\pi_{rs}).
    \end{split}
\end{equation}
where $C$ is a constant and $n_{rs}$ denotes the observed frequency of $X_j = x^r_j$ and $X_k = x^s_k$ in a sample of size $n= \sum_{r=1}^{l_{{j}}}\sum_{s=1}^{l_{{k}}} n_{rs}$.
Similar to \textit{Case II} above, we employ the \textit{two-step estimator} for the polychoric correlation. Given the threshold estimates from the first step, let us state the derivative of $\ell_{jk}^{(n)}(\Sigma_{jk}, x_j^r,x_k^s)$ with respect to $\Sigma_{jk}$ explicitly. First, recall that from Eq. \eqref{cell_probabilities_appendix}
\begin{equation}
    \begin{split}
        \pi_{rs} &= \int_{{\Gamma}_j^{r-1}}^{{\Gamma}_j^{r}} \int_{{\Gamma}_k^{s-1}}^{{\Gamma}^k_{s}} \phi_2(z_j,z_k,\Sigma_{jk}) dz_j dz_k \\
        &= \Phi_2({\Gamma}_j^r, {\Gamma}_k^s, \Sigma_{jk}) - \Phi_2({\Gamma}_j^{r-1}, {\Gamma}_k^s, \Sigma_{jk}) \\
        &- \Phi_2({\Gamma}_j^r, {\Gamma}_k^{s-1}, \Sigma_{jk}) + \Phi_2({\Gamma}_j^{r-1}, {\Gamma}_k^{s-1}, \Sigma_{jk}),
    \end{split}
\end{equation}
where $\Phi_2(u,v,\rho)$ is the standard bivariate normal distribution function with correlation parameter $\rho$. Note also that we have $\frac{\partial \Phi_2(u,v, \rho)}{\partial \rho} = \phi_2(u,v, \rho)$; see \citep{Tallis62}. Taking the derivative of $\ell_{jk}^{(n)}(\Sigma_{jk}, x_j^r,x_k^s)$ with respect to $\Sigma_{jk}$ yields
\begin{multline*}
    \frac{\partial \ell^{(n)}(\Sigma_{jk}, x_j^r,x_k^s)}{\partial \Sigma_{jk}} = \sum_{r=1}^{l_{X_{j}}}\sum_{s=1}^{l_{X_{k}}} \frac{n_{rs}}{\pi_{rs}} \frac{\partial \pi_{rs}}{\partial \Sigma_{jk}} \\
    \begin{aligned}
        = \sum_{r=1}^{l_{X_{j}}}\sum_{s=1}^{l_{X_{k}}} \frac{n_{rs}}{\pi_{rs}} \Big[ & \phi_2({\Gamma}_j^r, {\Gamma}_k^s, \Sigma_{jk}) - \phi_2({\Gamma}_j^{r-1}, {\Gamma}_k^s, \Sigma_{jk}) -             \\
                                                                                     & \phi_2({\Gamma}_j^r, {\Gamma}_k^{s-1}, \Sigma_{jk}) + \phi_2({\Gamma}_j^{r-1}, {\Gamma}_k^{s-1}, \Sigma_{jk})\Big].
    \end{aligned}
\end{multline*}

Again, setting the derivative to zero and solving for $\Sigma_{jk}$ using some quasi-Newton method yields the \textit{Case III} two-step MLE for $\Sigma_{jk}$.

\section{Comparison of \textit{Case II} estimators under the LGCM}\label{sec::case2_comparison}

In this section, we conduct an empirical comparison of \textit{Case II} estimators within the framework of the LGCM. Specifically, we examine the \textit{Case II} MLE derived under the latent Gaussian model, as discussed in Section 2.1 of the Manuscript, which involves incorporating the estimated transformations \(\hat{f}\) at appropriate locations. Furthermore, we investigate the ad hoc estimator presented in Section 3.2 of the Manuscript in more detail.

We start by rewriting Eq. \eqref{MLE_polyserial}, where we replace occurrences of the $X_k$ with the corresponding transformation $\hat{f}_k(X_k)$. The resulting transformation-based first-order condition (FOC) for the \textit{Case II} MLE under the LGCM becomes:

\begin{align}\label{eq:case2_foc_transformed}
    \MoveEqLeft \frac{\partial\ell_{jk}^{(n)}(\Sigma_{jk}, x_j^r,\hat{f_k}(x_k))}{\partial \Sigma_{jk}}                                                                                                                                 \\
    = \sum_{i=1}^n & \Bigg[\frac{1}{\Phi(\tilde{\Gamma}_j^{r}(\hat{f_k})) - \Phi(\tilde{\Gamma}_j^{r-1}(\hat{f_k}))}(1-(\Sigma_{jk})^2)^{-\frac{3}{2}}                                                                                  \\
                   & \Big[\phi({\tilde{\Gamma}}_j^{r}(\hat{f_k}))({\Gamma}_j^r\Sigma_{jk} - \hat{f_k}(\tilde{x}_{ik})) - \phi({\tilde{\Gamma}}_j^{r-1}(\hat{f_k}))({\Gamma}_j^{r-1}\Sigma_{jk} - \hat{f_k}(\tilde{x}_{ik}))\Big]\Bigg],
\end{align}
where
\[\hat{f}_k(\tilde{x}_{ik}) = \frac{\hat{f}_k(x_{ik}) - \hat{f}_k(\bar{x}_k)}{\sqrt{\hat{f}_k(s_k)^2}} \quad \text{and} \quad {\tilde{\Gamma}}_j^{r}(\hat{f}_k) = \frac{{\Gamma}_j^{r} - \Sigma_{jk}\hat{f}_k(\tilde{x}_{ik})}{\sqrt{1-(\Sigma_{jk})^2}}.\]

In the ensuing empirical evaluation, the data generation scheme is as follows. First, we generate \(n\) data points \((z_{ij}, x_{ik})_{i=1}^n\) from a standard bivariate normal with correlation \(\Sigma^*_{jk}\). Second, we apply the same transformation \(f_t^{-1}(x) = 5x^5\) for \(t \in \{j,k\}\) to all the data points. Third, we generate binary data \(x_{ij}^r\) by randomly choosing \(f^{-1}_j(z_{ij})\)-thresholds (guaranteeing relatively balanced classes) and then applying inversion sampling.

Computing the transformation-based MLE for \textit{Case II} can be achieved in several ways. Consider the plugged-in log-likelihood function, i.e.
\begin{equation}\label{eq:plugged_in_objective}
    \begin{split}
        \ell_{jk}^{(n)}(\Sigma_{jk}, x_j^r,\hat{f}_k(x_k)) & = \sum_{i=1}^n \big[\log(p(\hat{f}_k(x_{ik}))) + \log(p(x_{j}^{r} \mid \hat{f}_k(x_{ik}), \Sigma_{jk}))\big]                      \\
        & = \sum_{i=1}^n \big[\log(p(\hat{f}_k(x_{ik}))) + \Phi(\tilde{\Gamma}_j^{r}(\hat{f_k})) - \Phi(\tilde{\Gamma}_j^{r-1}(\hat{f_k}))\big].
    \end{split}
\end{equation}
One strategy to optimize Eq. \eqref{eq:plugged_in_objective} is direct maximization with a quasi-Newton optimization procedure to determine the optimal values for $\hat\Sigma_{jk}$. This strategy is used, for instance, in the R package \texttt{polycor} \citep{polycor2022}. Alternatively, another approach involves utilizing the Eq. \eqref{eq:case2_foc_transformed} and solving for $\hat\Sigma_{jk}$ through a nonlinear root-finding method. To do this, we employ Broyden's method \citep{Broyden1965}.

In \figref{fig:case2_comparison}, we generate data according to the scheme above for \(n=1000\) and a grid of true correlation values \(\Sigma^*_{jk} \in [0, 0.98]\) with a step size of \(s=0.02\). Due to symmetry, taking only positive correlations is sufficient for comparison purposes. For each correlation value along the grid, we generate \(100\) mixed binary-continuous data pairs and compute the MLE (using the abovementioned strategies) and the ad hoc estimator from Section 3.2 in the main text. We plot the true correlation values against the absolute difference between estimates and true correlation and the corresponding Monte-Carlo standard error for the MLE and the ad hoc estimator.

\begin{figure}\label{fig:case2_comparison}
    \includegraphics[width=\textwidth]{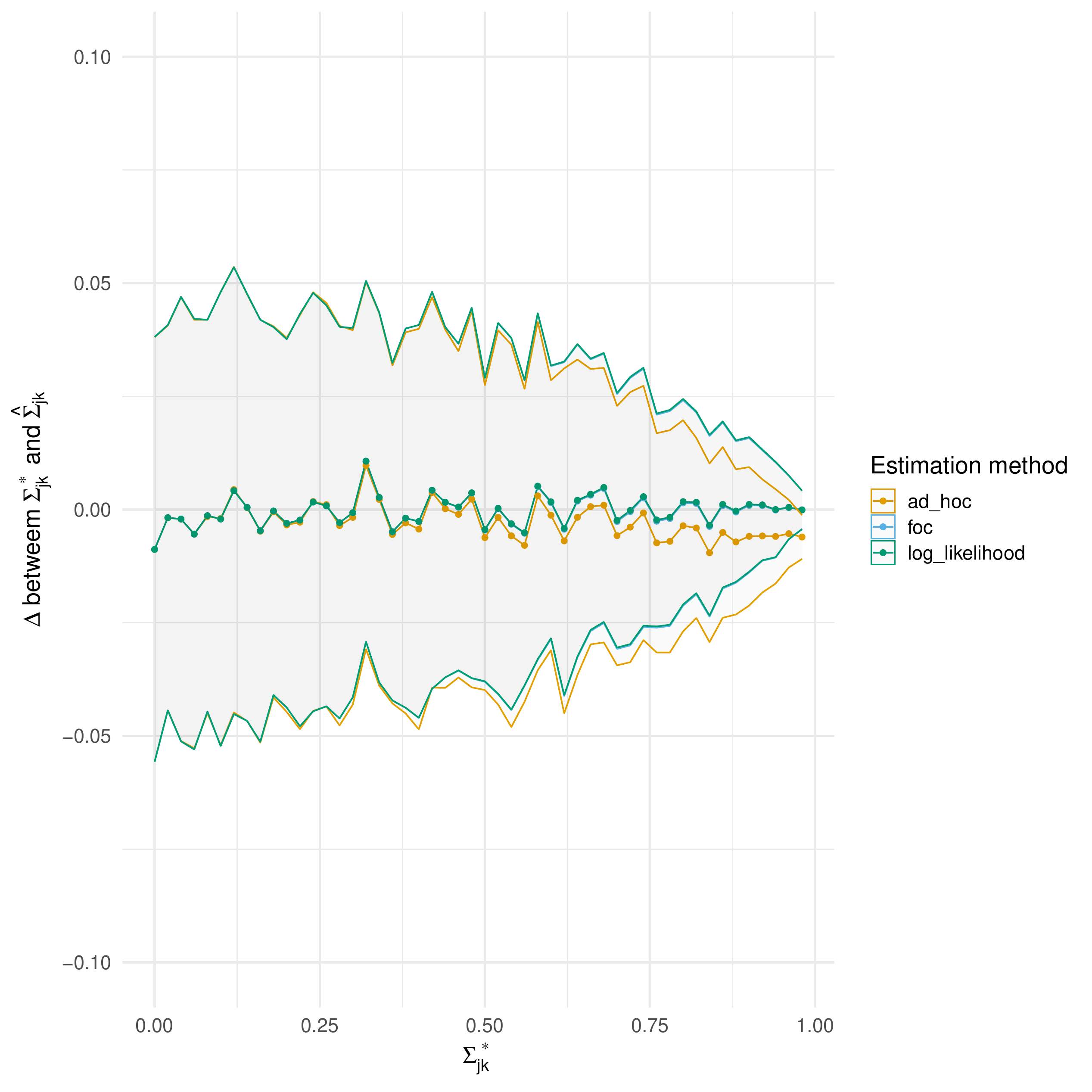}
    \caption{Comparison of the \textit{Case II} MLE under the LGCM and the ad hoc estimator.}
\end{figure}

As expected, both strategies for attaining the MLE yield the same results. The ad hoc estimator's bias becomes noticeable only when the underlying correlation \(\Sigma^*_{jk}\) exceeds \(0.75\) and it remains at such a mild level that we consider it negligible \citep[see][for a similar observation]{Olsson82}. The strength of the ad hoc estimator lies in its simplicity and computational efficiency. The left panel of \figref{fig:case2_speed} shows the median computation time surrounded by the first and third quartiles. We compare the two MLE optimization strategies and the ad hoc estimator for a grid of sample sizes \(n \in [50, 10000]\) with a step size of \(s_t = 50\). Here we fix \(\Sigma^*_{jk} = .87\) and repeat each calculation \(100\) times recording the time elapsed.

\begin{figure}\label{fig:case2_speed}
    \includegraphics[width=\textwidth]{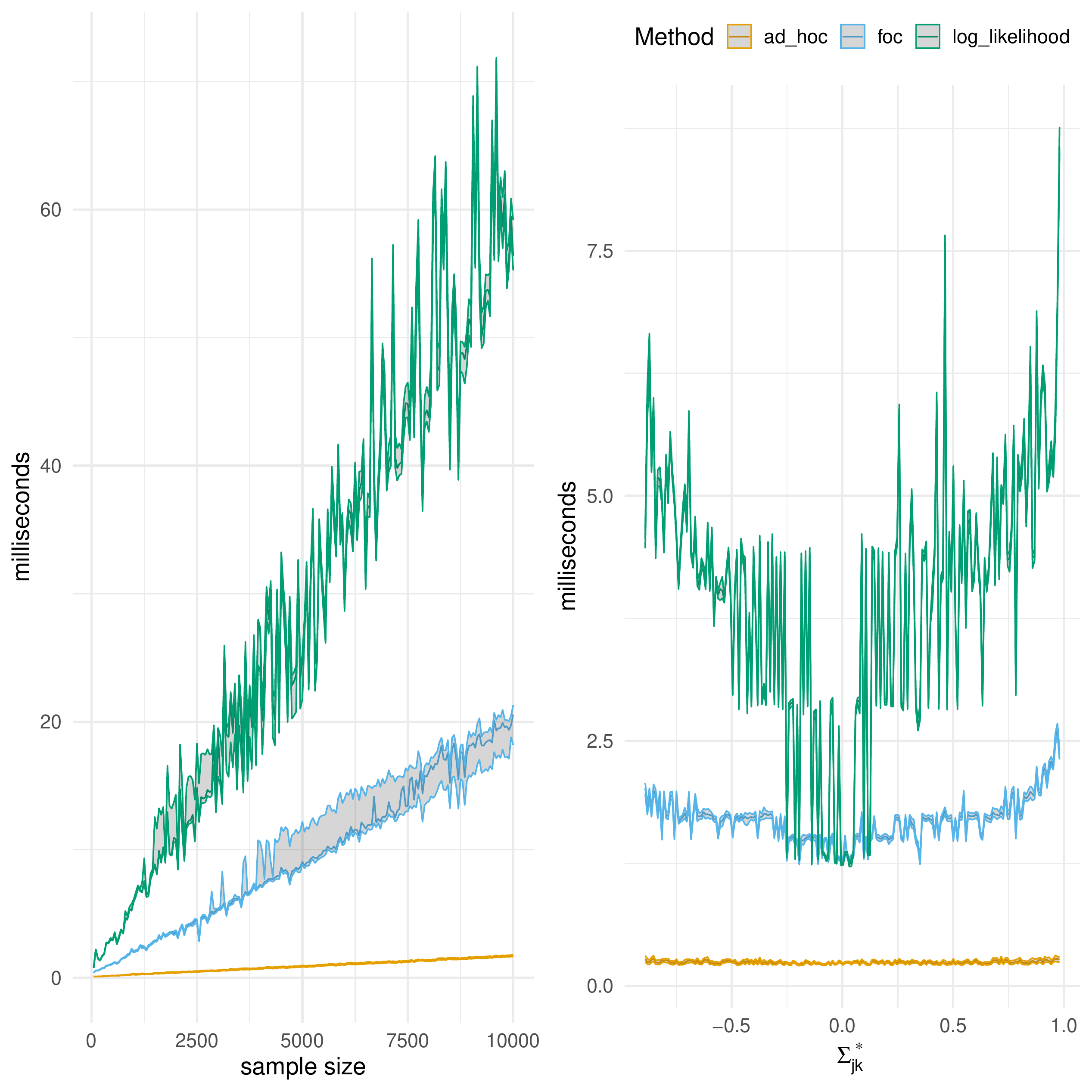}
    \caption{Computation time in milliseconds for the \textit{Case II} MLE and ad hoc estimators. We report the median (solid line) and the first and third quartile (shaded area) of recorded computation time. In the left panel, we compare computation time against a grid of sample sizes \(n \in [50, 10000]\) with a step size of \(s_t = 50\). In the right panel, we compare computation time against a grid of true correlation values \(\Sigma^*_{jk} \in [-.98, .98]\).}
\end{figure}

The right panel of \figref{fig:case2_speed} demonstrates computation time across a grid of length \(200\) of values for \(\Sigma_{jk}^* \in [-.98, 98]\). The sample size is, in this case, fixed at \(n=1000\). The ad hoc estimator is consistently and considerably faster than the MLE, regardless of the strategy used. The difference in computation time is especially pronounced for large sample sizes and correlation values approaching the endpoints of the $[-1,1]$-interval. Setting the FOC to zero and solving for \(\Sigma_{jk}\) is computationally more efficient than directly maximizing the log-likelihood function. The time difference in MLE strategies is more pronounced at the endpoints of the $[-1,1]$ interval. The ad hoc estimator is not affected by this issue. Therefore, in the high-dimensional setting we consider in this paper, the ad hoc estimator is preferable to the MLE due to (1) its computational efficiency, (2) its simplicity, which allows us to form concentration inequalities, and (3) its robustness to the underlying correlation value.

\section{Proof of Theorem 3.2
 }\label{proof_convergence}

\begin{condition}[Gradient statistical noise]\label{Gradient statistical noise}
    The gradient of the log-likelihood function is $\tau^2$-sub-Gaussian. That is, for any $\lambda \in \mathbb{R}$ and for all $\Sigma_{jk} \in [-1+\delta, 1-\delta]$ for $j,k$ according to \textit{Case II} or \textit{Case III}.
    \begin{equation}
        \mathbb{E}\Bigg[\exp\Bigg(\lambda \Big(\frac{\partial\ell_{jk}}{\partial \Sigma_{jk}} - \mathbb{E}\frac{\partial\ell_{jk}}{\partial \Sigma_{jk}} \Big) \Bigg)\Bigg] \leq \exp\Big(\frac{\tau^2\lambda^2}{2}\Big),
    \end{equation}
    where $\ell_{jk}$ corresponds to the log-likelihood functions in Definitions 2.4 and 2.5 of the main Manuscript, respectively.
    \paragraph{\textit{Case II}}: Recall that
    \begin{equation*}
        \frac{\partial\ell_{jk}(\Sigma_{jk}, x_j^r,x_k)}{\partial \Sigma_{jk}} = \frac{1}{p(x_{ij}^{r} \mid x_{ik}, \Sigma_{jk})} \frac{\partial p(x_{ij}^{r} \mid x_{ik}, \Sigma_{jk})}{\partial \Sigma_{jk}}.
    \end{equation*}
    Replacing these with the derivations made in Eq. \eqref{MLE_polyserial}, we write
    \begin{multline*}
        \frac{\partial\ell_{jk}(\Sigma_{jk}, x_j^r,x_k)}{\partial \Sigma_{jk}} = \\
        \frac{(1-(\Sigma_{jk})^2)^{-\frac{3}{2}}}{\Phi({\tilde{\Gamma}}_j^{r}) - \Phi({\tilde{\Gamma}}_j^{r-1})} \Bigg[\phi({\tilde{\Gamma}}_j^{r})({\Gamma}_j^r\Sigma_{jk} - {\tilde{x}}_{k}) - \phi({\tilde{\Gamma}}_j^{r-1})({\Gamma}_j^{r-1}\Sigma_{jk} - {\tilde{x}}_{k})\Bigg].
    \end{multline*}
    It is easy to see that $p(x_{ij}^{r} \mid x_{ik}, \Sigma_{jk}) \in (0,1)$ almost surely. Assumption 3.3
    makes sure that we exclude impossible events where $p(x_{ij}^{r} \mid x_{ik}, \Sigma_{jk}) = 0$. Moreover, we require that $\Gamma_j^r > \Gamma_j^{r-1}, \forall j \in 1, \dots, d_1$ this implies that $\Phi({\tilde{\Gamma}}_j^{r}) > \Phi({\tilde{\Gamma}}_j^{r-1})$. In other words, there exists a $\kappa >0$ such that $p(x_{ij}^{r} \mid x_{ik}, \Sigma_{jk}) \leq \frac{1}{\kappa}$.

    Let us now turn to $\partial p(x_{ij}^{r} \mid x_{ik}, \Sigma_{jk})/\partial \Sigma_{jk}$. First, for all $\Sigma_{jk} \in [-1+\delta, 1-\delta]$ we clearly have $1 \leq (1-(\Sigma_{jk})^2)^{-\frac{3}{2}} \leq \varpi$ for $\varpi > 1$. What's more, the density of the standard normal is bounded, i.e., $\abs{\phi(t)} \leq (2\pi)^{-\frac{1}{2}}$ for all $t \in \mathbb{R}$. Similarly,
    \begin{equation*}
        \abs{\phi({\tilde{\Gamma}}_j^{r})({\Gamma}_j^r\Sigma_{jk} - {\tilde{x}}_{k}) - \phi({\tilde{\Gamma}}_j^{r-1})({\Gamma}_j^{r-1}\Sigma_{jk} - {\tilde{x}}_{k})} \leq  \abs{\phi({\tilde{\Gamma}}_j^{r})({\Gamma}_j^r\Sigma_{jk} - {\tilde{x}}_{k})} \leq L_1,
    \end{equation*}
    due to Assumption 3.2. 
    Therefore,
    \begin{equation*}
        \abs{\frac{\partial\ell_{jk}(\Sigma_{jk}, x_j^r,x_k)}{\partial \Sigma_{jk}}} \leq \kappa L_1,
    \end{equation*}
    and $\Big(\frac{\partial\ell_{jk}}{\partial \Sigma_{jk}} - \mathbb{E}\frac{\partial\ell_{jk}}{\partial \Sigma_{jk}} \Big)$ is zero-mean and bounded. Then by  \citeauthor{Hoeffding63}'s \cite{Hoeffding63} lemma, the gradient of the log-likelihood function is $\tau^2$-sub-Gaussian with $\tau = 2\kappa L_1$

    \paragraph{\textit{Case III}}: Recall that we have
    \begin{equation*}
        \frac{\partial \ell_{jk}(\Sigma_{jk}, x_j^r,x_k^s)}{\partial \Sigma_{jk}} = \frac{1}{\pi_{rs}} \frac{\partial \pi_{rs}}{\partial \Sigma_{jk}}, \quad \text{for some }j<k \in [d_1].
    \end{equation*}
    Considering
    \begin{align*}
        \pi_{rs} = & \Phi_2({\Gamma}_j^r, {\Gamma}_k^s, \Sigma_{jk}) - \Phi_2({\Gamma}_j^{r-1}, {\Gamma}_k^s, \Sigma_{jk})          \\
        -          & \Phi_2({\Gamma}_j^r, {\Gamma}_k^{s-1}, \Sigma_{jk}) + \Phi_2({\Gamma}_j^{r-1}, {\Gamma}_k^{s-1}, \Sigma_{jk}),
    \end{align*}
    we note that this again has to be in $(0,1)$ due to Assumptions 3.1 and 3.2,
    such that $\pi_{rs} \leq \frac{1}{\xi}$. 

    Now let us show that
    \begin{align*}
        \frac{\partial \pi_{rs}}{\partial \Sigma_{jk}}
        = \Big[ & \phi_2({\Gamma}_j^r, {\Gamma}_k^s, \Sigma_{jk}) - \phi_2({\Gamma}_j^{r-1}, {\Gamma}_k^s, \Sigma_{jk})              \\
        -       & \phi_2({\Gamma}_j^r, {\Gamma}_k^{s-1}, \Sigma_{jk}) + \phi_2({\Gamma}_j^{r-1}, {\Gamma}_k^{s-1}, \Sigma_{jk})\Big]
    \end{align*}
    is bounded. Indeed, the density of the standard bivariate normal random variable is of the form $\phi_2(x,y) = c e^{-q(x,y)}$. Since $q(x,y)$ is a quadratic function of $x,y$ it follows that $\abs{\phi_2(x,y)} \leq c$. Therefore, every element in $\frac{\partial \pi_{rs}}{\partial \Sigma_{jk}}$ is bounded and thus $\abs{\frac{\partial \pi_{rs}}{\partial \Sigma_{jk}}} \leq K_1$. By the same argument as for \textit{Case II} $\Big(\frac{\partial\ell_{jk}}{\partial \Sigma_{jk}} - \mathbb{E}\frac{\partial\ell_{jk}}{\partial \Sigma_{jk}} \Big)$ is zero-mean and bounded and by Hoeffding's lemma the gradient of the log-likelihood function is $\tau^2$-sub-Gaussian with $\tau = 2\xi K_1$. Based on these arguments, we can conclude that the condition for gradient statistical noise is satisfied.
\end{condition}

\begin{condition}[Hessian statistical noise]\label{Hessian statistical noise}
    The Hessian of the log-likelihood function is $\tau^2$-sub-exponential, i.e. for all $\Sigma_{jk} \in [-1+\delta, 1-\delta]$ and for $j,k$ according to \textit{Case II} or \textit{Case III} we have
    \begin{equation}
        \norm{\frac{\partial^2\ell_{jk}}{\partial \Sigma_{jk}^{2}}}_{\psi_{1}} \leq \tau^2,
    \end{equation}
    where $\Norm{\cdot}_{\psi_{1}}$ denotes the \textit{Orlicz} $\psi_{1}$-norm, defined as
    \begin{equation*}
        \Norm{X}_{\psi_{1}} \coloneqq \sup_{p\geq 1} \frac{1}{p}\mathbb{E}\Big(\abs{X - \mathbb{E}(X)}^p\Big)^{\frac{1}{p}}.
    \end{equation*}
    Note that $\ell_{jk}$ corresponds to the respective log-likelihood functions in Definitions 2.4 and 2.5.

    \paragraph{\textit{Case II}}: We have
    \begin{equation}\label{hessian_case2}
        \begin{split}
            \frac{\partial^2\ell_{jk}}{\partial \Sigma_{jk}^{2}} &= \frac{\partial^2 p(x_j^{r} \mid x_{k}, \Sigma_{jk})/\partial \Sigma_{jk}^{2}}{p(x_j^{r} \mid x_{k}, \Sigma_{jk})} - \Bigg(\frac{\partial p(x_j^{r} \mid x_{k}, \Sigma_{jk}) / \partial \Sigma_{jk}}{p(x_j^{r} \mid x_{k}, \Sigma_{jk})}\Bigg)^2. \\
            \text{Clearly, } \abs{\frac{\partial^2\ell_{jk}}{\partial \Sigma_{jk}^{2}}} &\leq \frac{\abs{\partial^2 p(x_j^{r} \mid x_{k}, \Sigma_{jk})/\partial \Sigma_{jk}^{2}}}{\abs{p(x_j^{r} \mid x_{k}, \Sigma_{jk})}} + \Bigg( \frac{\abs{\partial p(x_j^{r} \mid x_{k}, \Sigma_{jk}) / \partial \Sigma_{jk}}}{\abs{p(x_j^{r} \mid x_{k}, \Sigma_{jk})}} \Bigg)^2 \\
            &\leq \kappa L_2 + \kappa^2 L_1^2,
        \end{split}
    \end{equation}
    where it remains to show that $\abs{\partial^2 p(x_{ij}^{r} \mid x_{ik}, \Sigma_{jk})/\partial \Sigma_{jk}^{2}} \leq L_2$. Indeed, we can rewrite our objective as follows:
    \begin{multline}\label{second_derivative_case2}
        \frac{\partial}{\partial \Sigma_{jk}} \Bigg(\frac{\partial\ell_{jk}(\Sigma_{jk}, x_j^r,x_k)}{\partial \Sigma_{jk}}\Bigg) \\
        \begin{aligned}
             & = \frac{\partial}{\partial \Sigma_{jk}}\Bigg( (1-(\Sigma_{jk})^2)^{-\frac{3}{2}} \Bigg[\phi({\tilde{\Gamma}}_j^{r})({\Gamma}_j^r\Sigma_{jk} - {\tilde{x}}_{k}) - \phi({\tilde{\Gamma}}_j^{r-1})({\Gamma}_j^{r-1}\Sigma_{jk} - {\tilde{x}}_{k})\Bigg] \Bigg)      \\
             & = \frac{3\Sigma_{jk}}{1-\Sigma_{jk}^2} (1-(\Sigma_{jk})^2)^{-\frac{3}{2}} \phi({\tilde{\Gamma}}_j^{r})({\Gamma}_j^r\Sigma_{jk} - {\tilde{x}}_{k}) + \frac{\phi^\prime({\tilde{\Gamma}}_j^{r})({\Gamma}_j^r\Sigma_{jk} - {\tilde{x}}_{k})^2}{(1-\Sigma_{jk}^2)^3} \\
             & + \frac{\phi({\tilde{\Gamma}}_j^{r}){\Gamma}_j^{r}}{(1-\Sigma_{jk}^2)^{-\frac{3}{2}}} - \frac{3\Sigma_{jk}}{1-\Sigma_{jk}^2} (1-(\Sigma_{jk})^2)^{-\frac{3}{2}} \phi({\tilde{\Gamma}}_j^{r-1})({\Gamma}_j^{r-1}\Sigma_{jk} - {\tilde{x}}_{k})                    \\
             & -  \frac{\phi^\prime({\tilde{\Gamma}}_j^{r-1})({\Gamma}_j^{r-1}\Sigma_{jk} - {\tilde{x}}_{k})^2}{(1-\Sigma_{jk}^2)^3} - \frac{\phi({\tilde{\Gamma}}_j^{r-1}){\Gamma}_j^{r-1}}{(1-\Sigma_{jk}^2)^{-\frac{3}{2}}}.
        \end{aligned}
    \end{multline}
    Thus,
    \begin{align*}
        \abs{\frac{\partial}{\partial \Sigma_{jk}} \Bigg(\frac{\partial\ell_{jk}(\Sigma_{jk}, x_j^r,x_k)}{\partial \Sigma_{jk}}\Bigg)}
         & \leq \abs{\frac{3\Sigma_{jk}}{1-\Sigma_{jk}^2} (1-(\Sigma_{jk})^2)^{-\frac{3}{2}} \phi({\tilde{\Gamma}}_j^{r})({\Gamma}_j^r\Sigma_{jk} - {\tilde{x}}_{k})}                                                 \\
         & + \abs{\frac{\phi^\prime({\tilde{\Gamma}}_j^{r})({\Gamma}_j^r\Sigma_{jk} - {\tilde{x}}_{k})^2}{(1-\Sigma_{jk}^2)^3} + \frac{\phi({\tilde{\Gamma}}_j^{r}){\Gamma}_j^{r}}{(1-\Sigma_{jk}^2)^{-\frac{3}{2}}}} \\
         & \leq L_2,
    \end{align*}
    due to Assumptions 3.1 and 3.2 
    and because both $\phi(t)$ and $\phi^{\prime}(t)$ are bounded for all $t\in\mathbb{R}$. Therefore, the inequality in Eq. \eqref{hessian_case2} follows and $\frac{\partial^2\ell_{jk}}{\partial \Sigma_{jk}^{2}} - \mathbb{E}\bigg(\frac{\partial^2\ell_{jk}}{\partial \Sigma_{jk}^{2}}\bigg)$ is bounded by $2(\kappa L_2 + \kappa^2 L_1^2)$. Hence, for all $p\geq1$
    \begin{equation}
        \frac{1}{p}\mathbb{E}\Bigg[\abs{\partial^2\ell_{jk}/\partial \Sigma_{jk}^{2} - \mathbb{E}\big(\partial^2\ell_{jk}/\partial \Sigma_{jk}^{2}\big)}^p\Bigg]^{\frac{1}{p}} \leq \frac{2}{p}\big(\kappa L_2 + \kappa^2 L_1^2\big).
    \end{equation}
    Finally, for $\tau = 2\kappa L_1$ we can choose $L_1$ and $\kappa$ such that \[2(\kappa L_2 + \kappa^2 L_1^2) \leq \tau^2 = 4\kappa^2 L_1^2.\] Thus, the \textit{Hessian statistical noise}-condition for \textit{Case II} is satisfied.

    \paragraph{\textit{Case III}:} Let us start with the Hessian of \(\ell_{jk}\) in the polychoric case:
    \begin{equation}\label{hessian_case3}
        \begin{split}
            \frac{\partial^2 \ell_{jk}(\Sigma_{jk}, x_j^r,x_k^s)}{\partial \Sigma_{jk}^{2}} &= \frac{\partial^2 \pi_{rs}/\partial \Sigma_{jk}^{2}}{\pi_{rs}} - \Bigg(\frac{\partial \pi_{rs}/\partial \Sigma_{jk}}{\pi_{rs}}\Bigg)^2 \\
            \text{Thus: } \abs{\frac{\partial^2 \ell_{jk}(\Sigma_{jk}, x_j^r,x_k^s)}{\partial \Sigma_{jk}^{2}}} &\leq \frac{\abs{\partial^2 \pi_{rs}/\partial \Sigma_{jk}^{2}}}{\abs{\pi_{rs}}} + \Bigg(\frac{\abs{\partial \pi_{rs}/\partial \Sigma_{jk}}}{\abs{\pi_{rs}}} \Bigg)^2 \\
            &\leq \xi K_2 + \xi^2K_1^2.
        \end{split}
    \end{equation}
    Again it remains to show that $\partial^2 \pi_{rs}/\partial \Sigma_{jk}^{2} \leq K_2$. Consider
    \begin{equation*}
        \begin{split}
            \lefteqn{\abs{\frac{\partial}{\partial \Sigma_{jk}}\bigg(\frac{\partial \pi_{rs}}{\partial \Sigma_{jk}}\bigg)}} \\
            &= \Big\lvert \frac{\partial}{\partial \Sigma_{jk}}\phi_2({\Gamma}_j^r, {\Gamma}_k^s, \Sigma_{jk}) - \phi_2({\Gamma}_j^{r-1}, {\Gamma}_k^s, \Sigma_{jk}) \\
            &- \phi_2({\Gamma}_j^r, {\Gamma}_k^{s-1}, \Sigma_{jk}) + \phi_2({\Gamma}_j^{r-1}, {\Gamma}_k^{s-1}, \Sigma_{jk})\Big\rvert \\
            &\leq \abs{\frac{\partial}{\partial \Sigma_{jk}}\phi_2({\Gamma}_j^r, {\Gamma}_k^s, \Sigma_{jk})} + \abs{\frac{\partial}{\partial \Sigma_{jk}}\phi_2({\Gamma}_j^{r-1}, {\Gamma}_k^s, \Sigma_{jk})} \\
            &+ \abs{\frac{\partial}{\partial \Sigma_{jk}}\phi_2({\Gamma}_j^r, {\Gamma}_k^{s-1}, \Sigma_{jk})} + \abs{\frac{\partial}{\partial \Sigma_{jk}}\phi_2({\Gamma}_j^{r-1}, {\Gamma}_k^{s-1}, \Sigma_{jk})} \\
            &\leq K_2,
        \end{split}
    \end{equation*}
    where each of the derivatives of the bivariate density is bounded, since we assume that the correlation is bounded away from one and minus one, i.e., \(\Sigma_{jk} \in [-1 + \delta, 1 - \delta]\).

    Similar to \textit{Case II}, for $\tau = 2\xi K_1$ we can choose $K_1$ and $\xi$ such that \[2(\xi k_2 + \xi^2 k_1^2) \leq \tau^2 = 4\xi^2 K_1^2.\]
    Consequently, the \textit{Hessian statistical noise}-condition for \textit{Case III} is satisfied, which concludes the proof of the \textit{Hessian statistical noise}-condition.
\end{condition}

Concerning the third condition, we introduce some additional notation. Denote the sample risk by $\hat{R}_{jk}^{(n)}(\Sigma_{jk})$ for \(j,k\) according to \textit{Case II} and \textit{Case III}, i.e.,
\begin{equation*}
    \text{\textit{Case II}:} \quad \hat{R}_{jk}^{(n)}(\Sigma_{jk}) = \frac{1}{n} \sum_{i=1}^n \big[\log(p(x_{ik})) + \log(p(x_{ij}^{r} \mid x_{ik}, \Sigma_{jk}))\big],
\end{equation*}
and
\begin{equation*}
    \text{\textit{Case III}:} \quad \hat{R}_{jk}^{(n)}(\Sigma_{jk}) = \frac{1}{n} \sum_{r=1}^{l_{X_{j}}}\sum_{s=1}^{l_{X_{k}}} n_{rs} \log(\pi_{rs}).
\end{equation*}
Lastly, we define $R_{jk}(\Sigma_{jk}) = \mathbb{E}_{\Sigma_{jk}^*}\hat{R}_{jk}^{(n)}(\Sigma_{jk})$ to be the population risk for each of the respective cases.
\begin{condition}[Hessian regularity]
    The Hessian regularity condition consists of three parts:
    \begin{enumerate}
        \item The second derivative of the population risk $R_{jk}(\Sigma_{jk})$ is bounded at one point. That is, there exists one $\Abs{\Bar{\Sigma}_{jk}} \leq 1- \delta$ and $H > 0$ such that $\Abs{R_{jk}^{\prime\prime}(\Bar{\Sigma}_{jk})} \leq H$.
        \item The second derivative of the log-likelihood with respect to $\Sigma_{jk}$ is Lipschitz continuous with integrable Lipschitz constant, i.e. there exists a $M^* > 0$ such that $\mathbb{E}[M] \leq M^*$, where
              \begin{equation*}
                  M = \sup_{\substack{\Abs{\Sigma_{jk}^{(1)}}, \  \Abs{\Sigma_{jk}^{(2)}} \leq 1- \delta, \\ \Sigma_{jk}^{(1)} \neq \Sigma_{jk}^{(2)}}} \frac{\abs{\ell_{jk}^{\prime\prime}(\Sigma_{jk}^{(1)}) - \ell_{jk}^{\prime\prime}(\Sigma_{jk}^{(2)})}}{\abs{\Sigma_{jk}^{(1)} - \Sigma_{jk}^{(2)}}}.
              \end{equation*}
        \item The constants $H$ and $M^*$ are such that $H \leq \tau^2$ and $M^*\leq \tau^3$.
    \end{enumerate}
    We need some intermediate results that make dealing with $R_{jk}(\Sigma_{jk})$ easier. First, note that $\mathbb{E}_{\Sigma_{jk}^*}\hat{R}_{jk}^{(n)}(\Sigma_{jk}) = \mathbb{E}_{\Sigma_{jk}^*}\ell_{jk}(\Sigma_{jk})$. Second, for all $\Sigma_{jk} \in [-1+\delta, 1-\delta]$ and $m \in \{1,2\}$
    \begin{equation*}
        R_{jk}^m(\Sigma_{jk}) = \frac{\partial^m}{\partial\Sigma_{jk}^m}\mathbb{E}_{\Sigma_{jk}^*}\ell_{jk}(\Sigma_{jk}) = \mathbb{E}_{\Sigma_{jk}^*}\frac{\partial^m}{\partial\Sigma_{jk}^m}\ell_{jk}(\Sigma_{jk}),
    \end{equation*}
    by Lemma \ref{expectation_commutes_case2} and Corollary \ref{expectation_commutes_case3}.

    \begin{enumerate}
        \item Recall the first part of the Hessian regularity condition, whereby Eq. \eqref{hessian_case2} and Eq. \eqref{hessian_case3} for all $\Sigma_{jk} \in [-1+\delta, 1-\delta]$ we have
              \[\text{\textit{Case II}:} \quad \Abs{\frac{\partial^2}{\partial\Sigma_{jk}^2}\ell_{jk}(\Sigma_{jk})} \leq \kappa L_2 + \kappa^2L_1^2\]
              and
              \[\text{\textit{Case III}:} \quad \Abs{\frac{\partial^2}{\partial\Sigma_{jk}^2}\ell_{jk}(\Sigma_{jk})} \leq \xi K_2 + \xi^2K_1^2\] for cases II and III, respectively. Consequently, the claim in the first part of the Hessian regularity condition holds for \textit{Case II} and \textit{Case III} for any $\Abs{\Bar{\Sigma}_{jk}} \leq 1-\delta$ with $H_{1} = \kappa L_2 + \kappa^2L_1^2$ and $H_{2} = \xi K_2 + \xi^2K_1^2$. Moreover, we also have $H_1 \leq \tau^2 = 4\kappa^2L_1^2$ and $H_2 \leq \tau^2 = 4\xi^2K_1^2$.

        \item The second part of the Hessian regularity condition requires that the second derivative of the log-likelihood with respect to $\Sigma_{jk}$ is Lipschitz continuous with integrable Lipschitz constant. By the mean-value-theorem, all we need to show is that we can find a bound on the third derivative of the log-likelihood function.

              \paragraph{\textit{Case II}:} Note that we have
              \begin{align*}
                  \MoveEqLeft \frac{\partial^3}{\partial\Sigma_{jk}^3}\ell_{jk}(\Sigma_{jk})                                                                                                                                                                                                                           \\
                   & = \frac{\partial}{\partial\Sigma_{jk}}\Bigg[\frac{\partial^2\ell_{jk}}{\partial \Sigma_{jk}^{2}}\Bigg]                                                                                                                                                                                            \\
                   & = \frac{\partial}{\partial\Sigma_{jk}}\Bigg[\frac{\partial^2 p(x_j^{r} \mid x_{k}, \Sigma_{jk})/\partial \Sigma_{jk}^{2}}{p(x_j^{r} \mid x_{k}, \Sigma_{jk})} - \Bigg(\frac{\partial p(x_j^{r} \mid x_{k}, \Sigma_{jk}) / \partial \Sigma_{jk}}{p(x_j^{r} \mid x_{k}, \Sigma_{jk})}\Bigg)^2\Bigg] \\
                   & = \frac{\partial^3 p(x_j^{r} \mid x_{k}, \Sigma_{jk}) / \partial \Sigma_{jk}^3}{p(x_j^{r} \mid x_{k}, \Sigma_{jk})}                                                                                                                                                                               \\
                   & - 3\frac{\big(\partial p(x_j^{r} \mid x_{k}, \Sigma_{jk}) / \partial \Sigma_{jk}\big)\big( \partial^2 p(x_j^{r} \mid x_{k}, \Sigma_{jk}) / \partial \Sigma_{jk}^2\big)}{(p(x_j^{r} \mid x_{k}, \Sigma_{jk}))^2}                                                                                   \\
                   & +  2\Bigg(\frac{\partial p(x_j^{r} \mid x_{k}, \Sigma_{jk}) / \partial \Sigma_{jk}}{p(x_j^{r} \mid x_{k}, \Sigma_{jk})}\Bigg)^3.
              \end{align*}
              Hence \[\abs{\frac{\partial^3}{\partial\Sigma_{jk}^3}\ell_{jk}(\Sigma_{jk})}
                  \leq \kappa L_3 + 3\kappa^2L_2L_1 + 2 \kappa^3L_1^3.\]
              It remains to show therefore, that \[\abs{\partial^3 p(x_j^{r} \mid x_{k}, \Sigma_{jk}) / \partial \Sigma_{jk}^3} \leq L_3.\]
              When taking the derivative of Eq. \eqref{second_derivative_case2}, it is obvious that the resulting statement is bounded due to Assumptions 3.1 and 3.2
              and the fact that $\phi(t), \phi^\prime(t), \phi^{\prime\prime}(t)$ are all bounded for all $t \in \mathbb{R}$.

              Therefore, by applying the mean-value-theorem we get \[M_1 \leq \kappa L_3 + 3\kappa^2L_2L_1 + 2 \kappa^3L_1^3,\]
              and the natural choice for \[M_1^* = \kappa L_3 + 3\kappa^2L_2L_1 + 2 \kappa^3L_1^3,\] where it follows that \[M_1^* \leq \tau^3 = 8\kappa^3L_1^3.\]

              \paragraph{\textit{Case III}:} We proceed similarly and first consider
              \begin{align}
                  \begin{split}
                      \MoveEqLeft \frac{\partial^3}{\partial\Sigma_{jk}^3}\ell_{jk}(\Sigma_{jk}) \\
                      & = \frac{\partial}{\partial\Sigma_{jk}}\Bigg[\frac{\partial^2 \ell_{jk}(\Sigma_{jk}, x_j^r,x_k^s)}{\partial \Sigma_{jk}^{2}}\Bigg] \\
                      &= \frac{\partial}{\partial\Sigma_{jk}}\Bigg[\frac{\partial^2 \pi_{rs}/\partial \Sigma_{jk}^{2}}{\pi_{rs}} - \Bigg(\frac{\partial \pi_{rs}/\partial \Sigma_{jk}}{\pi_{rs}}\Bigg)^2\Bigg] \\
                      &= \frac{\partial^3 \pi_{rs} / \partial \Sigma_{jk}^3}{\pi_{rs}} - 3\frac{\big(\partial \pi_{rs} / \partial \Sigma_{jk}\big)\big( \partial^2 \pi_{rs} / \partial \Sigma_{jk}^2\big)}{(\pi_{rs})^2} \\
                      &+ 2\Bigg(\frac{\partial \pi_{rs} / \partial \Sigma_{jk}}{\pi_{rs}}\Bigg)^3.
                  \end{split}
              \end{align}
              Hence \[\abs{\frac{\partial^3}{\partial\Sigma_{jk}^3}\ell_{jk}(\Sigma_{jk})} \leq \xi K_3 + 3\xi^2K_2K_1 + 2 \xi^3K_1^3.\]
              Taking a closer look at $\abs{\partial^3 \pi_{rs} / \partial \Sigma_{jk}^3}$, boundedness again follows from the fact that the quadratic function in the exponential of the bivariate normal density does not vanish. Thus, we have \[M_2 \leq \xi K_3 + 3\xi^2K_2K_1 + 2 \xi^3K_1^3,\] and the natural choice for \[M_2^* = \xi K_3 + 3\xi^2K_2K_1 + 2 \xi^3K_1^3,\] where we have \[M_2^* \leq \tau^3 = 8\xi^3K_1^3.\] These steps validate the \textit{Hessian regularity} condition.
    \end{enumerate}

    \begin{condition}[Population risk is strongly Morse]
        There exist $\epsilon > 0$ and $\eta > 0$ such that $R_{jk}(\Sigma_{jk})$ is $(\epsilon,\eta)$\textit{-strongly Morse}, i.e.
        \begin{enumerate}
            \item For all $\Sigma_{jk}$ such that $\Abs{\Sigma_{jk}} = 1-\delta$ we have that $\Abs{R_{jk}^\prime(\Sigma_{jk})} > \epsilon$.
            \item For all $\Sigma_{jk}$ such that $\Abs{\Sigma_{jk}} \leq 1-\delta$: \[ \Abs{R_{jk}^\prime(\Sigma_{jk})} \leq \epsilon \implies \Abs{R_{jk}^{\prime\prime}(\Sigma_{jk})} \geq \eta.\]
        \end{enumerate}
        Put differently, $R_{jk}(\Sigma_{jk})$ is $(\epsilon,\eta)$\textit{-strongly Morse} if the boundaries $\{-1+ \delta, 1-\delta\}$ are not critical points of $R_{jk}(\Sigma_{jk})$ and if $R_{jk}(\Sigma_{jk})$ only has finitely many critical points that are all non-degenerate.

        Let us verify that $R^{\prime\prime}(\Sigma_{jk}) \neq 0$ for cases II and III. Indeed by Lemma \ref{expectation_commutes_case2} and Corollary \ref{expectation_commutes_case3} we can rewrite $R_{jk}^{\prime\prime}(\Sigma_{jk})$ and obtain
        \begin{align*}
            \MoveEqLeft R^{\prime\prime}(\Sigma_{jk})                                                                                                                                                                                                                                                                                                     \\
             & = \mathbb{E}_{\Sigma_{jk}}\Bigg[\frac{\partial^2 \ell_{jk}(\Sigma_{jk})}{\partial\Sigma_{jk}^2}\Bigg]                                                                                                                                                                                                                                      \\
             & = \mathbb{E}_{\Sigma_{jk}^*} \begin{dcases*}
                                                \frac{\partial^2 p(x_j^{r} \mid x_{k}, \Sigma_{jk})/\partial \Sigma_{jk}^{2}}{p(x_j^{r} \mid x_{k}, \Sigma_{jk})} - \Bigg(\frac{\partial p(x_j^{r} \mid x_{k}, \Sigma_{jk}) / \partial \Sigma_{jk}}{p(x_j^{r} \mid x_{k}, \Sigma_{jk})}\Bigg)^2 & \text{\textit{Case II}},  \\
                                                \frac{\partial^2 \pi(\Sigma_{jk})_{rs}/\partial \Sigma_{jk}^{2}}{\pi(\Sigma_{jk})_{rs}} - \Bigg(\frac{\partial \pi(\Sigma_{jk})_{rs}/\partial \Sigma_{jk}}{\pi(\Sigma_{jk})_{rs}}\Bigg)^2                                                       & \text{\textit{Case III}},
                                            \end{dcases*}
        \end{align*}
        where in $\pi(\Sigma_{jk})_{rs}$ we made the dependence on $\Sigma_{jk}$ explicit.
        \paragraph{\textit{Case II}:} Recall that we have
        \begin{align*}
            \MoveEqLeft\mathbb{E}_{\Sigma_{jk}^*}\Bigg[\frac{\partial^2 p(x_j^{r} \mid x_{k}, \Sigma_{jk}^*)/\partial \Sigma_{jk}^{2}}{p(x_j^{r} \mid x_{k}, \Sigma_{jk}^*)}\Bigg]                                                  \\
             & = \int_{-\infty}^{\infty} \sum_{r=1}^{l_j+1} \frac{\partial^2 p(x_j^{r} \mid x_{k}, \Sigma_{jk}^*)/\partial \Sigma_{jk}^{2}}{p(x_j^{r} \mid x_{k}, \Sigma_{jk}^*)} p(x_j^{r}, x_{k}; \Sigma_{jk}^*) dx_k             \\
             & = \int_{-\infty}^{\infty} \sum_{r=1}^{l_j+1} \frac{\partial^2 p(x_j^{r} \mid x_{k}, \Sigma_{jk}^*)/\partial \Sigma_{jk}^{2}}{p(x_j^{r} \mid x_{k}, \Sigma_{jk}^*)} p(x_j^{r} \mid x_{k}, \Sigma_{jk}^*)p(x_{k}) dx_k \\
             & = \sum_{r=1}^{l_j+1} \partial^2 p(x_j^{r} \mid x_{k}, \Sigma_{jk}^*)/\partial \Sigma_{jk}^{2},
        \end{align*}
        with
        \begin{align*}
            \MoveEqLeft \sum_{r=1}^{l_j+1} \partial^2 p(x_j^{r} \mid x_{k}, \Sigma_{jk}^*)/\partial \Sigma_{jk}^{2}                                                                                                                 \\
             & =\sum_{r=1}^{l_j+1} \Bigg[ \frac{3\Sigma_{jk}}{1-\Sigma_{jk}^2} (1-(\Sigma_{jk})^2)^{-\frac{3}{2}} \phi({\tilde{\Gamma}}_j^{r})({\Gamma}_j^r\Sigma_{jk} - {\tilde{x}}_{k})                                           \\
             & + \frac{\phi^\prime({\tilde{\Gamma}}_j^{r})({\Gamma}_j^r\Sigma_{jk} - {\tilde{x}}_{k})^2}{(1-\Sigma_{jk}^2)^3} + \frac{\phi({\tilde{\Gamma}}_j^{r}){\Gamma}_j^{r}}{(1-\Sigma_{jk}^2)^{-\frac{3}{2}}}                 \\
             & - \frac{3\Sigma_{jk}}{1-\Sigma_{jk}^2} (1-(\Sigma_{jk})^2)^{-\frac{3}{2}} \phi({\tilde{\Gamma}}_j^{r-1})({\Gamma}_j^{r-1}\Sigma_{jk} - {\tilde{x}}_{k})                                                              \\
             & - \frac{\phi^\prime({\tilde{\Gamma}}_j^{r-1})({\Gamma}_j^{r-1}\Sigma_{jk} - {\tilde{x}}_{k})^2}{(1-\Sigma_{jk}^2)^3} - \frac{\phi({\tilde{\Gamma}}_j^{r-1}){\Gamma}_j^{r-1}}{(1-\Sigma_{jk}^2)^{-\frac{3}{2}}}\Bigg] \\
             & = 0,
        \end{align*}
        since all terms except the ones involving $\phi({\tilde{\Gamma}}_j^{0})$ and $\phi({\tilde{\Gamma}}^j_{l_j+1})$ cancel and furthermore $ \lim\limits_{t \to \pm \infty} \phi(t) = \lim\limits_{t \to \pm \infty} \phi^\prime(t) = 0$ .

        \paragraph{\textit{Case III}:} Similarly, consider
        \begin{align*}
            \MoveEqLeft \mathbb{E}_{\Sigma_{jk}^*}\Bigg[\frac{\partial^2 \pi(x_j^{r},x^k_{s}; \Sigma_{jk}^*)/\partial \Sigma_{jk}^{2}}{\pi(x_j^{r},x^k_{s}; \Sigma_{jk}^*)}\Bigg]          \\
             & = \sum_r \sum_s \Bigg[\frac{\partial^2 \pi(x_j^{r},x^k_{s}; \Sigma_{jk}^*)/\partial \Sigma_{jk}^{2}}{\pi(x_j^{r},x^k_{s}; \Sigma_{jk}^*)} P(X_j = x^r_j, X_k = x^s_k)\Bigg] \\
             & = \sum_r\sum_s \Big[\partial^2 \pi(x_j^{r},x^k_{s}; \Sigma_{jk}^*)/\partial \Sigma_{jk}^{2}\Big]                                                                            \\
             & = \sum_r\sum_s \Big[q(\Gamma_j^r, \Gamma_k^s, \Sigma_{jk}^*)\phi_2(\Gamma_j^r, \Gamma_k^s, \Sigma_{jk}^*)                                                                   \\
             & - q(\Gamma_j^{r-1}, \Gamma_k^s, \Sigma_{jk}^*)\phi_2(\Gamma_j^{r-1}, \Gamma_k^s, \Sigma_{jk}^*)                                                                             \\
             & - q(\Gamma_j^r, \Gamma_k^{s-1}, \Sigma_{jk}^*)\phi_2(\Gamma_j^r, \Gamma_k^{s-1}, \Sigma_{jk}^*)                                                                             \\
             & + q(\Gamma_j^{r-1}, \Gamma_k^{s-1}, \Sigma_{jk}^*)\phi_2(\Gamma_j^{r-1}, \Gamma_k^{s-1}, \Sigma_{jk}^*)\Big]                                                                \\
             & = q(\Gamma_j^{l_j +1}, \Gamma_k^{l_k +1}, \Sigma_{jk}^*)\phi_2(\Gamma_j^{l_j +1}, \Gamma_k^{l_k +1}, \Sigma_{jk}^*)                                                         \\
             & - q(\Gamma_j^{l_j +1}, \Gamma_k^0, \Sigma_{jk}^*)\phi_2(\Gamma_j^{l_j +1}, \Gamma_k^0, \Sigma_{jk}^*)                                                                       \\
             & - q(\Gamma_j^0, \Gamma_k^{l_k +1}, \Sigma_{jk}^*)\phi_2(\Gamma_j^0, \Gamma_k^{l_k +1}, \Sigma_{jk}^*)                                                                       \\
             & + q(\Gamma_j^0, \Gamma_k^0, \Sigma_{jk}^*)\phi_2(\Gamma_j^0, \Gamma_k^0, \Sigma_{jk}^*)                                                                                     \\
             & = 0,
        \end{align*}
        with $q(s,t,\Sigma_{jk}^*))$ denoting the corresponding quadratic function from the derivative of the bivariate normal density. As above, $\Gamma_k^{l_k+1} = \infty$ and $\Gamma_k^0 = -\infty$ for all $k \in 1, \dots d_1$. This, together with the fact that all other terms cancel when summing over $r,s$, $\phi(\cdot)$ is zero in all points containing $\Gamma_k^{l_k+1}, \Gamma_k^0$ and so the last equality follows.

        From this it follows, that $R_{jk}^{\prime\prime}(\Sigma_{jk}^*)$ can only be zero if $\partial p(x_j^{r} \mid x_{k}, \Sigma_{jk}^*) / \partial \Sigma_{jk}$ for \textit{Case II} and $\partial \pi(\Sigma_{jk}^*)_{rs}/\partial \Sigma_{jk}$ for \textit{Case III} are zero. However, this is not possible due to Assumptions 3.2 and 3.3.
        To see this note that in Eq. \eqref{case2_firstdiff} $\partial p(x_j^{r} \mid x_{k}, \Sigma_{jk}^*) / \partial \Sigma_{jk}$ can only be zero if either ${\Gamma}^r_j = {\Gamma}^{r-1}_j$ which we ruled out in the definition of the LGCM, or if $\Abs{{\Gamma}^r_j} = \Abs{{\Gamma}^{r-1}_j} = \infty$ which is ruled out by Assumption 3.2.
        We would not observe any discrete states in the first place if we had $r=\{0,l_{j}+1\}$. Assumption 3.3
        rules this case out.
        Consequently, there exist $\epsilon > 0$ and $\eta > 0$ such that $R_{jk}(\Sigma_{jk})$ is $(\epsilon,\eta)$\textit{-strongly Morse}
    \end{condition}
\end{condition}

With these considerations, we have verified the required four conditions to hold such that Theorem 2 in \citet{Mei18} 
holds for each couple $(j,k)$ according to cases II and III. More precisely, let $\alpha \in (0,1)$. Now, letting  $n \geq 4 C \log(n) \log(\frac{B}{\alpha})$
where $C = C_0 \Big(\frac{\tau^2}{\epsilon^2} \vee \frac{\tau^4}{\eta^2} \vee \frac{\tau^2L^2}{\eta^4} \Big)$ and $B = \tau(1-\delta)$
with $\tau = 2 [\kappa L_1 \vee \xi K_1]$  and $C_0$ denoting a universal constant. Letting further $L = \sup_{\Sigma_{jk}: \Abs{\Sigma_{jk}} \leq 1- \delta} \Abs{R_{jk}^{\prime\prime\prime}(\Sigma_{jk})}$ we obtain
\begin{equation}\label{individual_bound}
    \mathbb{P}\Bigg( \abs{\hat{\Sigma}^{(n)}_{jk} - \Sigma_{jk}^*} \geq \frac{2\tau}{\eta}\sqrt{C_0\frac{\log(n)}{n}\Big[\log\bigg(\frac{B}{\alpha}\bigg) \vee 1\Big] } \Bigg) \leq \alpha,
\end{equation}
and consequently, the result in Theorem 3.2
follows.



\section{Proof of Lemmas \ref{interchange_caseII_firstdif} to \ref{expectation_commutes_case3}}

\begin{lemma}\label{interchange_caseII_firstdif}
    For all $\Abs{\Sigma_{jk}} \leq 1-\delta$ and all $j \in [d_1], k\in [d_2]$ we have
    \begin{equation*}
        \int_S \frac{\partial }{\partial \Sigma_{jk}} p(x_j^{r} \mid x_{k}, \Sigma_{jk}) d\mu(x^r_j)= \frac{\partial }{\partial \Sigma_{jk}} \int_S p(x_j^{r} \mid x_{k}, \Sigma_{jk}) d\mu(x^r_j),
    \end{equation*}
    where $\mu$ is the counting measure on $S$, the corresponding discrete space.

    \begin{proof}
        Clearly, from Eq. \eqref{case2_firstdiff} we have
        \begin{multline*}
            \frac{\partial }{\partial \Sigma_{jk}} p(x_j^{r} \mid x_{k}, \Sigma_{jk}) \\
            =  (1-(\Sigma_{jk})^2)^{-\frac{3}{2}}\Big[\phi({\tilde{\Gamma}}_j^{r})({\Gamma}_j^r\Sigma_{jk} - {\tilde{x}}_{k}) - \phi({\tilde{\Gamma}}_j^{r-1})({\Gamma}_j^{r-1}\Sigma_{jk} - {\tilde{x}}_{k})\Big],
        \end{multline*}
        and therefore
        \begin{align*}
            \MoveEqLeft \int_S (1-(\Sigma_{jk})^2)^{-\frac{3}{2}}\Big[\phi({\tilde{\Gamma}}_j^{r})({\Gamma}_j^r\Sigma_{jk} - {\tilde{x}}_{k}) - \phi({\tilde{\Gamma}}_j^{r-1})({\Gamma}_j^{r-1}\Sigma_{jk} - {\tilde{x}}_{k})\Big] d\mu(x^r_j) \\
             & = (1-(\Sigma_{jk})^2)^{-\frac{3}{2}} \sum_{r=1}^{l_{j}} \Big[\phi({\tilde{\Gamma}}_j^{r})({\Gamma}_j^r\Sigma_{jk} - {\tilde{x}}_{k}) - \phi({\tilde{\Gamma}}_j^{r-1})({\Gamma}_j^{r-1}\Sigma_{jk} - {\tilde{x}}_{k})\Big]       \\
             & = 0                                                                                                                                                                                                                             \\
             & = \frac{\partial }{\partial \Sigma_{jk}} \sum_{r=1}^{l_{j}} p(x_j^{r} \mid x_{k}, \Sigma_{jk})                                                                                                                                  \\
             & = \frac{\partial }{\partial \Sigma_{jk}} 1,
        \end{align*}
        since all terms except the ones involving $\phi({\tilde{\Gamma}}_j^{0})$ and $\phi({\tilde{\Gamma}}_j^{l_j +1})$ cancel and
        \begin{equation*}
            \lim\limits_{t \to \pm \infty} \phi(t) = \lim\limits_{t \to \pm \infty} \phi^\prime(t) = 0,
        \end{equation*}
        and probabilities associated with all possible values must sum up to one.
    \end{proof}
\end{lemma}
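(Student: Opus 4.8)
The plan is to identify both sides of the asserted identity in closed form and observe that each equals zero, so that the exchange of differentiation and summation holds trivially once both sides are evaluated.

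First I would handle the right-hand side. For each fixed value $x_k$ of the continuous variable and each admissible $\Sigma_{jk}$, the map $r \mapsto p(x_j^r \mid x_k, \Sigma_{jk})$ is a probability mass function on the discrete state space $S$, so that $\int_S p(x_j^r \mid x_k, \Sigma_{jk})\, d\mu(x_j^r) = \sum_r p(x_j^r \mid x_k, \Sigma_{jk}) \equiv 1$ as a function of $\Sigma_{jk}$. Differentiating the constant function $1$ gives $0$, so the right-hand side is $0$.

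Next, the left-hand side. In the estimation problem at hand $S$ is the finite set of observed levels of $X_j$, so the integral against counting measure is a finite sum and term-by-term differentiation is immediate by linearity of the derivative; there is no genuine exchange-of-limits question. Substituting the explicit formula for $\partial p(x_j^r\mid x_k,\Sigma_{jk})/\partial\Sigma_{jk}$ from Eq.~\eqref{case2_firstdiff}, each summand equals $(1-\Sigma_{jk}^2)^{-3/2}\bigl[\phi(\tilde\Gamma_j^{r})(\Gamma_j^r\Sigma_{jk}-\tilde x_k)-\phi(\tilde\Gamma_j^{r-1})(\Gamma_j^{r-1}\Sigma_{jk}-\tilde x_k)\bigr]$, and the sum over $r = 1,\dots,l_j+1$ telescopes; only the two boundary contributions attached to $\Gamma_j^0$ and $\Gamma_j^{l_j+1}$ survive. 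Since the LGCM conventions fix $\Gamma_j^0 = -\infty$ and $\Gamma_j^{l_j+1} = +\infty$, and $|\Sigma_{jk}| \le 1-\delta < 1$ keeps $\sqrt{1-\Sigma_{jk}^2}$ bounded away from zero, the standardized thresholds $\tilde\Gamma_j^0$ and $\tilde\Gamma_j^{l_j+1}$ diverge to $\mp\infty$ as well; because $\phi(t)$, and indeed $t^m\phi(t)$ for any fixed $m$, tends to $0$ as $t\to\pm\infty$, both boundary terms vanish, and the left-hand side is also $0$.

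I expect the interchange itself to be essentially free: once $S$ is finite, nothing beyond linearity of differentiation is needed, and the bulk of the argument is the bookkeeping for the telescoping cancellation and the vanishing of the boundary terms just described. The only place where real care would be needed is if one wants the statement to cover discrete variables with countably infinite support (e.g.\ Poisson counts), where the finite-sum shortcut is unavailable; there I would re-establish the interchange via dominated convergence, or equivalently via uniform convergence on $\Sigma_{jk}\in[-1+\delta,1-\delta]$ of the differentiated series, both of which follow from the super-polynomial decay of $\phi$ together with compactness of the parameter interval bounded away from $\pm1$ (so that $(1-\Sigma_{jk}^2)^{-3/2}$ stays bounded). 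In every case the identity reduces to $0 = 0$.
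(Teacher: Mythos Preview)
Your proposal is correct and follows essentially the same route as the paper: compute both sides explicitly, show the right-hand side is $\partial 1/\partial\Sigma_{jk}=0$ because conditional probabilities sum to one, and show the left-hand side is zero by the telescoping sum from Eq.~\eqref{case2_firstdiff} together with the vanishing of the boundary terms at $\tilde\Gamma_j^0,\tilde\Gamma_j^{l_j+1}=\mp\infty$. Your remark that the interchange is trivial by linearity once $S$ is finite, and your slightly sharper observation that one needs $t^m\phi(t)\to 0$ (not merely $\phi(t)\to 0$) to kill the boundary terms, are both well taken.
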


\begin{corollary}
    For all $\Abs{\Sigma_{jk}} \leq 1-\delta$ and all $j \in [d_1], k\in [d_2]$ we have
    \begin{equation*}
        \int_S \frac{\partial^2 }{\partial \Sigma_{jk}^2} p(x_j^{r} \mid x_{k}, \Sigma_{jk}) d\mu(x^r_j)= \frac{\partial^2}{\partial \Sigma_{jk}^2} \int_S p(x_j^{r} \mid x_{k}, \Sigma_{jk}) d\mu(x^r_j),
    \end{equation*}
    where $\mu$ is the counting measure on $S$, the corresponding discrete space.

    \begin{proof}
        From Eq. \eqref{second_derivative_case2} we obtain
        \begin{align*}
            \MoveEqLeft \frac{\partial^2 }{\partial \Sigma_{jk}^2} p(x_j^{r} \mid x_{k}, \Sigma_{jk})                                                                                                                          \\
             & = \frac{3\Sigma_{jk}}{1-\Sigma_{jk}^2} (1-(\Sigma_{jk})^2)^{-\frac{3}{2}} \phi({\tilde{\Gamma}}_j^{r})({\Gamma}_j^r\Sigma_{jk} - {\tilde{x}}_{k})                                                               \\
             & +\frac{\phi^\prime({\tilde{\Gamma}}_j^{r})({\Gamma}_j^r\Sigma_{jk} - {\tilde{x}}_{k})^2}{(1-\Sigma_{jk}^2)^3} + \frac{\phi({\tilde{\Gamma}}_j^{r}){\Gamma}_j^{r}}{(1-\Sigma_{jk}^2)^{-\frac{3}{2}}}             \\
             & - \frac{3\Sigma_{jk}}{1-\Sigma_{jk}^2} (1-(\Sigma_{jk})^2)^{-\frac{3}{2}} \phi({\tilde{\Gamma}}_j^{r-1})({\Gamma}_j^{r-1}\Sigma_{jk} - {\tilde{x}}_{k})                                                         \\
             & - \frac{\phi^\prime({\tilde{\Gamma}}_j^{r-1})({\Gamma}_j^{r-1}\Sigma_{jk} - {\tilde{x}}_{k})^2}{(1-\Sigma_{jk}^2)^3} - \frac{\phi({\tilde{\Gamma}}_j^{r-1}){\Gamma}_j^{r-1}}{(1-\Sigma_{jk}^2)^{-\frac{3}{2}}}.
        \end{align*}
        By similar arguments to Lemma \ref{interchange_caseII_firstdif}, when taking the sum over all possible states, all terms except the ones involving $\phi({\tilde{\Gamma}}_j^{0})$ and $\phi({\tilde{\Gamma}}_j^{l_j +1})$ still cancel as they appear in every additive term in the above equation -- recall that $\phi^\prime(t) = -t\phi(t)$ -- and equality then follows immediately.
    \end{proof}
\end{corollary}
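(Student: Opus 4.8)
The plan is to compute both sides of the claimed identity explicitly and show that each equals $0$. The only structural observation needed is that $S=\{x_j^1,\dots,x_j^{l_j+1}\}$ is a \emph{finite} set, so $\int_S\,\cdot\,d\mu$ is nothing but the finite sum $\sum_{r=1}^{l_j+1}$; in particular differentiation commutes with it by linearity once each summand is seen to be $C^2$ in $\Sigma_{jk}$, and the explicit computation below merely pins down the common value.

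First I would treat the right-hand side. For every fixed $x_k$ and every $\Sigma_{jk}$ with $|\Sigma_{jk}|\le 1-\delta$ the numbers $\{p(x_j^r\mid x_k,\Sigma_{jk})\}_{r=1}^{l_j+1}$ form a conditional probability mass function, so $\int_S p(x_j^r\mid x_k,\Sigma_{jk})\,d\mu(x_j^r)=\sum_{r=1}^{l_j+1}p(x_j^r\mid x_k,\Sigma_{jk})=1$ identically in $\Sigma_{jk}$; hence the right-hand side is $\partial^2 1/\partial\Sigma_{jk}^2=0$. Equivalently, by Eq.~\eqref{threshold_conditionalprob} the sum telescopes to $\Phi(\tilde\Gamma_j^{l_j+1})-\Phi(\tilde\Gamma_j^{0})=1-0$.

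Next I would treat the left-hand side. By Assumptions 3.1 and 3.2, $\sqrt{1-\Sigma_{jk}^2}$ is bounded away from $0$ on $|\Sigma_{jk}|\le 1-\delta$ and each interior threshold $\Gamma_j^r$ ($1\le r\le l_j$) is finite, so $\tilde\Gamma_j^r=(\Gamma_j^r-\Sigma_{jk}\tilde x_k)/\sqrt{1-\Sigma_{jk}^2}$ is a smooth function of $\Sigma_{jk}$ there; since $\Phi\in C^\infty$, the summands $p(x_j^r\mid x_k,\Sigma_{jk})=\Phi(\tilde\Gamma_j^r)-\Phi(\tilde\Gamma_j^{r-1})$ are twice continuously differentiable. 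Plugging in the closed form of $\partial^2 p(x_j^r\mid x_k,\Sigma_{jk})/\partial\Sigma_{jk}^2$ from Eq.~\eqref{second_derivative_case2}, every summand is a difference of one function of $\tilde\Gamma_j^r$ and the same function of $\tilde\Gamma_j^{r-1}$ (assembled from $\phi$, $\phi'$ and polynomial factors); summing over $r=1,\dots,l_j+1$ telescopes, leaving only the boundary contributions carried by $\phi(\tilde\Gamma_j^{0})$, $\phi'(\tilde\Gamma_j^{0})$, $\phi(\tilde\Gamma_j^{l_j+1})$ and $\phi'(\tilde\Gamma_j^{l_j+1})$, exactly as in Lemma~\ref{interchange_caseII_firstdif}.

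Finally, Assumption 3.2 and the conventions $\Gamma_j^{0}=-\infty$, $\Gamma_j^{l_j+1}=+\infty$ together with $|\Sigma_{jk}|\le 1-\delta<1$ force $\tilde\Gamma_j^{0}=-\infty$ and $\tilde\Gamma_j^{l_j+1}=+\infty$; since $\lim_{t\to\pm\infty}\phi(t)=\lim_{t\to\pm\infty}\phi'(t)=0$ and, using $\phi'(t)=-t\phi(t)$, the accompanying polynomial factors are dominated by the Gaussian tail, all boundary terms vanish and the left-hand side equals $0$, matching the right-hand side. I do not expect a genuine obstacle: the only points to watch are that $\partial^2 p(x_j^r\mid x_k,\Sigma_{jk})/\partial\Sigma_{jk}^2$ really depends on $r$ only through $\tilde\Gamma_j^r$ and $\tilde\Gamma_j^{r-1}$ with opposite signs (so the telescoping goes through) and that the four boundary terms decay; both follow by inspecting Eq.~\eqref{second_derivative_case2} and the rapid decay of $\phi$ and $\phi'$.
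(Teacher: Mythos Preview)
Your proposal is correct and follows essentially the same route as the paper: write out the explicit second derivative from Eq.~\eqref{second_derivative_case2}, observe that summing over $r$ telescopes and leaves only the boundary terms in $\phi(\tilde\Gamma_j^0)$, $\phi'(\tilde\Gamma_j^0)$, $\phi(\tilde\Gamma_j^{l_j+1})$, $\phi'(\tilde\Gamma_j^{l_j+1})$, and then use $\phi'(t)=-t\phi(t)$ together with the decay of $\phi,\phi'$ at $\pm\infty$ to conclude both sides vanish. Your additional remark that finiteness of $S$ already gives the interchange by linearity once $C^2$-smoothness is checked is a nice clarification the paper leaves implicit.
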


\begin{lemma}\label{expectation_commutes_case2}
    For all $\Abs{\Sigma_{jk}} \leq 1-\delta$ we have
    \begin{enumerate}
        \item \begin{equation*}
                  \frac{\partial }{\partial \Sigma_{jk}} \mathbb{E}_{\Sigma_{jk}^*} \big[\ell_{jk}(\Sigma_{jk}, x^r_j,x_k)\big] = \mathbb{E}_{\Sigma_{jk}^*} \Bigg[\frac{\partial }{\partial \Sigma_{jk}} \ell_{jk}(\Sigma_{jk}, x^r_j,x_k) \Bigg],
              \end{equation*}
              i.e.
              \begin{align*}
                  \frac{\partial }{\partial \Sigma_{jk}} & \int_{S\times \mathbb{R}} \log L_{jk}(\Sigma_{jk}, x^r_j,x_k) L_{jk}(\Sigma_{jk}^*, x^r_j,x_k) d\varepsilon(x^r_j,x_k)                                        \\
                  =                                      & \int_{S\times \mathbb{R}} \frac{\partial }{\partial \Sigma_{jk}} \log L_{jk}(\Sigma_{jk}, x^r_j,x_k) L_{jk}(\Sigma_{jk}^*, x^r_j,x_k) d\varepsilon(x^r_j,x_k)
              \end{align*}
              where $\varepsilon$ is the product measure on $S \times \mathbb{R}$ defined by
              \begin{equation*}
                  \varepsilon \coloneqq \mu \otimes \lambda
              \end{equation*}
              with $\mu$ denoting the counting measure on the corresponding discrete space $S$ and $\lambda$ the Lebesgue measure on the corresponding Euclidean space.
        \item \begin{equation*}
                  \frac{\partial }{\partial \Sigma_{jk}} \mathbb{E}_{\Sigma_{jk}^*} \big[\ell_{jk}(\Sigma_{jk}, x_j^r,x_k^s)\big] = \mathbb{E}_{\Sigma_{jk}^*} \Bigg[\frac{\partial }{\partial \Sigma_{jk}} \ell_{jk}(\Sigma_{jk}, x_j^r,x_k^s) \Bigg],
              \end{equation*}
              i.e.
              \begin{align*}
                  \frac{\partial }{\partial \Sigma_{jk}} & \int_{S\times S^\prime} \log L_{jk}(\Sigma_{jk}, x_j^r,x_k^s) L_{jk}(\Sigma_{jk}^*, x_j^r,x_k^s) d\varpi(x_j^r,x_k^s)                                        \\
                  =                                      & \int_{S\times S^\prime} \frac{\partial }{\partial \Sigma_{jk}} \log L_{jk}(\Sigma_{jk}, x_j^r,x_k^s) L_{jk}(\Sigma_{jk}^*, x_j^r,x_k^s) d\varpi(x_j^r,x_k^s)
              \end{align*}
              where $\varpi$ is the product measure on $S \times S^\prime$ defined by
              \begin{equation*}
                  \varpi \coloneqq \mu \otimes \mu^\prime
              \end{equation*}
              with $\mu$ and $\mu^\prime$ denoting the counting measure on the corresponding discrete space $S$ and $S^\prime$, respectively.
    \end{enumerate}

    \begin{proof}
        Let us start with 1. and rewrite the right-hand side:
        \begin{align*}
            \int_{S\times \mathbb{R}} \frac{\partial }{\partial \Sigma_{jk}} & \log L_{jk}(\Sigma_{jk}, x^r_j,x_k) L_{jk}(\Sigma_{jk}^*, x^r_j,x_k) d\varepsilon(x^r_j,x_k)                                                         \\
                                                                             & = \int_\mathbb{R} \sum_{r=1}^{l_{j}} \frac{\partial }{\partial \Sigma_{jk}} \log p(x_j^{r},x_{k}, \Sigma_{jk}) p(x_j^{r},x_{k}, \Sigma^*_{jk}) dx_k.
        \end{align*}
        The left-hand side corresponds to
        \begin{align*}
            \frac{\partial }{\partial \Sigma_{jk}} \int_{S\times \mathbb{R}} & \log L_{jk}(\Sigma_{jk}, x^r_j,x_k) L_{jk}(\Sigma_{jk}^*, x^r_j,x_k) d\varepsilon(x^r_j,x_k)                                                          \\
                                                                             & = \frac{\partial }{\partial \Sigma_{jk}} \int_\mathbb{R} \sum_{r=1}^{l_{j}} \log p(x_j^{r}, x_{k}, \Sigma_{jk}) p(x_j^{r},x_{k}, \Sigma_{jk}^*) dx_k.
        \end{align*}
        By Lebesgue's Dominated Convergence Theorem, we can interchange integration and differentiation as $\log p(x_j^{r}, x_{k}, \Sigma_{jk})$ is absolutely continuous s.t. its derivative exists almost everywhere and
        \begin{equation*}
            \abs{\frac{\partial \log p(x_j^{r}, x_{k}, \Sigma_{jk})}{\partial \Sigma_{jk}}}
        \end{equation*} is upper bounded by some integrable function. Indeed, the latter requirement has already been shown in Condition \ref{Gradient statistical noise}. The second point follows by the same arguments where $\log p(x_j^{r}, x^s_{k}, \Sigma_{jk}) = \log(C) + \log(\pi_{rs})$ is absolutely continuous and bounded as shown in Condition \ref{Gradient statistical noise}. This concludes the proof.
    \end{proof}
\end{lemma}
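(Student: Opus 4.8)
The plan is to establish both identities as instances of differentiation under the integral sign, justified by the Dominated Convergence Theorem (equivalently, the dominated form of the Leibniz rule). I first rewrite each population risk as an integral against the appropriate base measure: for \textit{Case II} this is $\int_{S\times\mathbb{R}} \log L_{jk}(\Sigma_{jk},x_j^r,x_k)\,L_{jk}(\Sigma_{jk}^*,x_j^r,x_k)\,d\varepsilon$ with $\varepsilon=\mu\otimes\lambda$ (counting times Lebesgue), and for \textit{Case III} the analogous integral against $\varpi=\mu\otimes\mu'$ (counting times counting). In both cases the weight $L_{jk}(\Sigma_{jk}^*,\cdot)$ does not depend on the free argument $\Sigma_{jk}$, so the derivative acts only on the $\log L_{jk}(\Sigma_{jk},\cdot)=\ell_{jk}(\Sigma_{jk})$ factor, producing the candidate integrand $\big(\partial\ell_{jk}/\partial\Sigma_{jk}\big)\,L_{jk}(\Sigma_{jk}^*,\cdot)$ that should appear on the right-hand side.

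The crux is exhibiting an integrable dominating function that is uniform in $\Sigma_{jk}$ over a neighborhood of the point of differentiation inside $[-1+\delta,1-\delta]$. Here I would simply invoke the uniform score bounds already derived in verifying Condition~\ref{Gradient statistical noise}: there one shows $\Abs{\partial\ell_{jk}/\partial\Sigma_{jk}}\leq \kappa L_1$ for \textit{Case II} and $\Abs{\partial\ell_{jk}/\partial\Sigma_{jk}}\leq \xi K_1$ for \textit{Case III}, both holding uniformly for all $\Abs{\Sigma_{jk}}\leq 1-\delta$. Consequently the candidate integrand is dominated in absolute value by $\kappa L_1\,L_{jk}(\Sigma_{jk}^*,\cdot)$ (resp.\ $\xi K_1\,L_{jk}(\Sigma_{jk}^*,\cdot)$), a function independent of $\Sigma_{jk}$ that integrates to $\kappa L_1$ (resp.\ $\xi K_1$) because $L_{jk}(\Sigma_{jk}^*,\cdot)$ is a genuine probability mass/density against $\varepsilon$ (resp.\ $\varpi$). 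This supplies the required majorant.

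With the majorant in hand I would finish by checking the two standard hypotheses of the Leibniz rule. First, for $\varepsilon$-almost every (resp.\ $\varpi$-every) sample point, $\Sigma_{jk}\mapsto \ell_{jk}(\Sigma_{jk})$ is absolutely continuous on the compact interval $[-1+\delta,1-\delta]$ — its derivative is the explicit bounded expression from Eq.~\eqref{MLE_polyserial} for \textit{Case II} and the polychoric score for \textit{Case III} — so the pointwise derivative exists a.e.; Assumption~\ref{ass3} excludes the zero-probability cells on which $\log L_{jk}$ would be undefined. Second, the uniform domination of the previous paragraph licenses passing the limit in the difference quotient through the integral. The Dominated Convergence Theorem then interchanges differentiation and integration, yielding exactly the two claimed identities, the only distinction between the cases being bookkeeping of the base measure ($\mu\otimes\lambda$ versus $\mu\otimes\mu'$).

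I expect the genuine obstacle to be not the interchange itself — routine once a majorant is available — but ensuring the score bound is \emph{uniform} over the whole interval rather than merely pointwise, and that the weight $L_{jk}(\Sigma_{jk}^*,\cdot)$ is truly integrable against the mixed measure. Both are already secured: the former by the work done for Condition~\ref{Gradient statistical noise}, the latter because $L_{jk}(\Sigma_{jk}^*,\cdot)$ is a bona fide likelihood. Thus the lemma reduces to assembling these ingredients.
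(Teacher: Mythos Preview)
Your proposal is correct and follows essentially the same route as the paper: both rewrite the expectation as an integral against the product measure, invoke the Dominated Convergence/Leibniz rule, and supply the majorant via the uniform score bound $\Abs{\partial\ell_{jk}/\partial\Sigma_{jk}}\leq\kappa L_1$ (resp.\ $\xi K_1$) established in Condition~\ref{Gradient statistical noise}. If anything, your write-up is more explicit than the paper's about why the majorant is integrable (constant times a probability density) and why the bound is uniform in $\Sigma_{jk}$.
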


\begin{corollary}\label{expectation_commutes_case3}
    For all $\abs{\Sigma_{jk}} \leq 1-\delta$ we have \\
    $\frac{\partial^2 }{\partial \Sigma_{jk}^2} \mathbb{E}_{\Sigma_{jk}^*} \big[\ell_{jk}(\Sigma_{jk}, x^r_j,x_k)\big] = \mathbb{E}_{\Sigma_{jk}^*} \Bigg[\frac{\partial^2 }{\partial \Sigma_{jk}^2} \ell_{jk}(\Sigma_{jk}, x^r_j,x_k) \Bigg]$, for \textit{Case II} and \\
    $\frac{\partial^2 }{\partial \Sigma_{jk}^2} \mathbb{E}_{\Sigma_{jk}^*} \big[\ell_{jk}(\Sigma_{jk}, x_j^r,x_k^s)\big] = \mathbb{E}_{\Sigma_{jk}^*} \Bigg[\frac{\partial^2 }{\partial \Sigma_{jk}^2} \ell_{jk}(\Sigma_{jk}, x_j^r,x_k^s) \Bigg]$, for case III.

    \begin{proof}
        The claim follows immediately by the same arguments as in Lemma \ref{expectation_commutes_case2} and the bound on the second derivative of the log-likelihood functions in Condition \ref{Hessian statistical noise}, respectively.
    \end{proof}
\end{corollary}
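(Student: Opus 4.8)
The plan is to deduce the second-derivative interchange from the first-derivative interchange of Lemma~\ref{expectation_commutes_case2} by one additional application of the dominated convergence theorem, the dominating function being supplied by the uniform, data-free bound on the Hessian of the log-likelihood established in Condition~\ref{Hessian statistical noise}. Write $G(\Sigma_{jk}) := \mathbb{E}_{\Sigma_{jk}^*}[\ell_{jk}(\Sigma_{jk})]$, where $\ell_{jk}$ is the \textit{Case II}, resp.\ \textit{Case III}, log-likelihood appearing in Lemma~\ref{expectation_commutes_case2}, with its data arguments suppressed from the notation. That lemma states that $G$ is differentiable on $\{|\Sigma_{jk}|\le 1-\delta\}$ with $G'(\Sigma_{jk}) = H(\Sigma_{jk}) := \mathbb{E}_{\Sigma_{jk}^*}[\partial_{\Sigma_{jk}}\ell_{jk}(\Sigma_{jk})]$, so it suffices to prove $H'(\Sigma_{jk}) = \mathbb{E}_{\Sigma_{jk}^*}[\partial^2_{\Sigma_{jk}}\ell_{jk}(\Sigma_{jk})]$.

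Fix $\Sigma_{jk}$ with $|\Sigma_{jk}| \le 1-\delta$ and let $h \to 0$ along values with $\Sigma_{jk}+h \in [-1+\delta,1-\delta]$. For $\mathbb{E}_{\Sigma_{jk}^*}$-almost every realization the map $\Sigma_{jk}\mapsto \partial_{\Sigma_{jk}}\ell_{jk}(\Sigma_{jk})$ is continuously differentiable on this interval: in \textit{Case II} this is visible from the explicit formula for $\partial^2_{\Sigma_{jk}}\ell_{jk}$ in Eq.~\eqref{second_derivative_case2}, and in \textit{Case III} from the analogous second derivative of $\pi_{rs}$, both being smooth in $\Sigma_{jk}$ on $(-1,1)$. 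Consequently the difference quotient $q_h := h^{-1}\big(\partial_{\Sigma_{jk}}\ell_{jk}(\Sigma_{jk}+h) - \partial_{\Sigma_{jk}}\ell_{jk}(\Sigma_{jk})\big)$ is a measurable function of the data that converges pointwise, almost surely, to $\partial^2_{\Sigma_{jk}}\ell_{jk}(\Sigma_{jk})$ as $h\to 0$; and by the mean value theorem $q_h = \partial^2_{\Sigma_{jk}}\ell_{jk}(\Sigma_{jk}+\theta_h h)$ for some $\theta_h\in(0,1)$, so Condition~\ref{Hessian statistical noise} yields $|q_h| \le \kappa L_2 + \kappa^2 L_1^2$ in \textit{Case II} and $|q_h| \le \xi K_2 + \xi^2 K_1^2$ in \textit{Case III}, a finite constant independent of $h$ and of the data. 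Since $L_{jk}(\Sigma_{jk}^*,\cdot)$ is a probability law this constant is integrable, so by dominated convergence and linearity of the expectation
\begin{equation*}
    H'(\Sigma_{jk}) = \lim_{h\to 0} \mathbb{E}_{\Sigma_{jk}^*}[q_h] = \mathbb{E}_{\Sigma_{jk}^*}\Big[\lim_{h\to 0} q_h\Big] = \mathbb{E}_{\Sigma_{jk}^*}\Big[\tfrac{\partial^2}{\partial\Sigma_{jk}^2}\ell_{jk}(\Sigma_{jk})\Big],
\end{equation*}
which is exactly the claimed identity, established simultaneously for \textit{Case II} and \textit{Case III}.

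I do not anticipate any genuine difficulty: this is the textbook ``differentiate under the integral sign'' argument, and the decisive domination step is immediate because Condition~\ref{Hessian statistical noise} already bounds $\partial^2_{\Sigma_{jk}}\ell_{jk}$ pointwise and uniformly on $[-1+\delta,1-\delta]$ by a constant. The only point meriting a word of care is differentiability at the two endpoints $\Sigma_{jk} = \pm(1-\delta)$, where the derivatives are to be read as one-sided limits; the Hessian bound of Condition~\ref{Hessian statistical noise} holds up to and including these endpoints, so the same dominated-convergence estimate applies there without change.
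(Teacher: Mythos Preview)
Your proposal is correct and takes essentially the same approach as the paper: both invoke the first-derivative interchange of Lemma~\ref{expectation_commutes_case2} and then justify the second differentiation under the integral sign via dominated convergence, with the uniform Hessian bound from Condition~\ref{Hessian statistical noise} serving as the dominating (constant) function. You have simply spelled out the difference-quotient and mean-value-theorem mechanics that the paper leaves implicit in the phrase ``by the same arguments.''
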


\section{Proof of Lemma 3.1 of the Manuscript 
 }\label{lemma_threshold_proof}

First, note that $\Phi^{-1}(\cdot)$ is Lipschitz on $[\Phi(-G),\Phi(G)]$ with a Lipschitz constant \(L_1 = \nabla \Phi^{-1}(\hat\pi_j^r) =  1/(\sqrt{\frac{2}{\pi}} \min\{\hat\pi^r_j, 1- \hat\pi^r_j\})\). Then we have
\begin{align*}
    \Abs{\hat{\Gamma}^r_j - \Gamma^r_j} & = \Abs[\Big]{\Phi^{-1}\big(\frac{1}{n} \sum_{i=1}^n \mathbbm{1}{(X_{ij} \leq x^r_j)}\big) - \Phi^{-1}\big(\Phi(\Gamma^r_j)\big)} \\
                                        & \leq L_1\Abs[\Big]{\frac{1}{n} \sum_{i=1}^n \mathbbm{1}{(X_{ij} \leq x^r_j)} - \Phi(\Gamma^r_j)}.
\end{align*}

Applying Hoeffding's inequality, we obtain for some $t > 0$

\begin{align*}
    P\Big(\Abs{\hat\Gamma_j^r - \Gamma_j^r} \geq t \Big) & \leq P\Big(L_1\Abs[\Big]{\frac{1}{n} \sum_{i=1}^n \mathbbm{1}{(X_{ij} \leq x^r_j)} - \Phi(\Gamma^r_j)} \geq t  \Big) \\
                                                         & \leq 2\exp{\Big(- \frac{2t^2n}{L_1^2}\Big)}.
\end{align*}
This concludes the proof.

\section{Proof of Theorem 3.3
 }\label{proof_concentration2}

In what follows, the proof of Theorem 3.3
revolves largely around the Winsorized estimator introduced in Section 3.2. 
Recall that \[\hat{f}(x) = \Phi^{-1}(W_{\delta_n}[\hat{F}_{X_k}(x)])\] where \[W_{\delta_n}(u) \equiv \delta_n I(u < \delta_n) + u I(\delta_n \leq u \leq (1-\delta_n)) + (1-\delta_n) I(u > (1-\delta_n)),\] with truncation constant $\delta_n = 1/(4n^{1/4}\sqrt{\pi\log n})$. Recall $f(x) = \Phi^{-1}(F_{X_k}(x))$, and let $g = f^{-1}$.

Assume w.l.o.g. that we have consecutive integer scoring in our discrete variable $X_j$ such that the polyserial ad hoc estimator simplifies as
\begin{equation}
    \hat{\Sigma}_{jk}^{(n)} = \frac{r^{(n)}_{\hat{f}(X_k),X_j} \sigma^{(n)}_{X_j}}{\sum_{r=1}^{l_{j}} \phi(\bar{\Gamma}_j^r)(x_j^{r+1} - x_j^r)} = \frac{r^{(n)}_{\hat{f}(X_k),X_j} \sigma^{(n)}_{X_j}}{\sum_{r=1}^{l_{j}} \phi(\bar{\Gamma}_j^r)} = \frac{S_{\hat{f}(X_k)X_j}}{\sigma^{(n)}_{\hat{f}(X_k)}\sum_{r=1}^{l_{j}} \phi(\bar{\Gamma}_j^r)},
\end{equation}
for all $j \in [d_1], k \in [d_2]$. $S_{\hat{f}(X_k)X_j}$ denotes the sample covariance between the $\hat{f}(X_k)$ and the $X_j$, i.e.
\begin{equation*}
    S_{\hat{f}(X_k)X_j} = \frac{1}{n}\sum_{i=1}^n \Big(\hat{f}(X_{ik}) - \mu_n(\hat{f})\Big)\Big(X_{ij} - \mu_n(X_j)\Big),
\end{equation*}
where $\mu_n(\hat{f}) = 1/n\sum_{i=1}^n\hat f(X_{ik})$ and $\mu_n(X_j) = 1/n\sum_{i=1}^n X_{ij}$. Moreover, $\sigma^{(n)}_{\hat{f}(X_k)}$ denotes the sample standard deviation of the Winsorized estimator, i.e.
\begin{equation*}
    \sigma^{(n)}_{\hat{f}(X_k)} \equiv \sqrt{\frac{1}{n}\sum_{i=1}^n \Big(\hat{f}(X_{ik}) - \mu_n(\hat{f})\Big)^2}.
\end{equation*}
Recall that we treat the threshold estimates as given. In particular, we have here $\phi(\bar{\Gamma}_j^r)$, therefore note further that $\phi(\cdot)$, namely the density function of the standard normal is Lipschitz with Lipschitz constant $L_0 = (2\pi)^{-1/2}\exp(-1/2)$, s.t.
\begin{equation*}
    \abs{\phi(\bar{\Gamma}^r_j) - \phi(\Gamma^r_j)} \leq L_0 \abs{\bar{\Gamma}^r_j - \Gamma^r_j} \leq \abs{\bar{\Gamma}^r_j - \Gamma^r_j},
\end{equation*}
as $L_0 < 1$. Consequently, the statements regarding the accuracy of the threshold estimates in Section 3.4 
still hold here.

The outline of the proof is as follows: We start by forming concentration bounds for the sample covariance and the sample standard deviation separately. Then, we argue that the quotient of the two will be accurate in terms of a Lipschitz condition on the corresponding compactum. Let us start with the sample covariance. To study the Winsorized estimator, we consider the interval $[g(-\sqrt{M\log n}), g(\sqrt{M\log n})]$ for a choice of $M>2$. As the behavior of the estimator is different for the endpoints, we further split this interval into a middle and an end part, respectively, i.e.,
\begin{equation*}
    \begin{split}
        \mathbb{M}_n &\equiv (g(-\sqrt{\beta\log n}), g(\sqrt{\beta\log n})) \\
        \mathbb{E}_n &\equiv [g(-\sqrt{M\log n}), g(-\sqrt{\beta\log n})) \cup (g(\sqrt{\beta\log n}), g(\sqrt{M\log n})].
    \end{split}
\end{equation*}
This is only necessary for $\hat{f}(X_k)$ since $X_j \in 1, \dots, l_{j}$ is discrete and can therefore only take finitely many values. Now consider the sample covariance, where we have for any $t > 0$ that
\begin{align*}
    \MoveEqLeft P\Bigg(\max_{j,k} \abs{S_{\hat{f}(X_k)X_j} - S_{f(X_k)X_j}} > 2t \Bigg)                              \\
     & = P\Bigg(\max_{j,k} \Big\lvert\frac{1}{n}\sum_{i=1}^n \Big[\hat{f}(X_{ik})X_{ij} - f(X_{ik})X_{ij}            \\
     & \quad - \mu_n(\hat{f})\mu_n(X_j) + \mu_n(f)\mu_n(X_j)\Big]\Big\rvert > 2t \Bigg)                              \\
     & \leq P\Bigg(\max_{j,k}\abs{\frac{1}{n}\sum_{i=1}^n \Big[(\hat{f}(X_{ik}) - f(X_{ik}))X_{ij})\Big]} > t \Bigg) \\
     & \quad + P\Bigg(\max_{j,k}\abs{(\mu_n(\hat{f}) - \mu_n(f))\mu_n(X_j)} > t \Bigg).
\end{align*}
Let us take a closer look at the second term
\begin{multline*}
    P\Bigg(\max_{j,k}\abs{(\mu_n(\hat{f}) - \mu_n(f))\mu_n(X_j)} > t \Bigg) \\
    \begin{aligned}
         & = P\Bigg(\max_{j,k}\abs{ \frac{1}{n}\sum_{i=1}^n \left(\hat{f}(X_{ik}) - f(X_{ik})\right) \frac{1}{n}\sum_{i=1}^n X_{ij} } > t \Bigg)              \\
         & = P\Bigg(\max_{k}\abs{ \frac{1}{n}\sum_{i=1}^n \left(\hat{f}(X_{ik}) - f(X_{ik})\right)} \max_{j} \abs{\frac{1}{n}\sum_{i=1}^n X_{ij}}  > t \Bigg) \\
         & \leq P\Bigg(\max_{k}\abs{ \frac{1}{n}\sum_{i=1}^n \left(\hat{f}(X_{ik}) - f(X_{ik})\right)} > \frac{t}{l_{\max}} \Bigg),
    \end{aligned}
\end{multline*}
where $X_j$ is a discrete random variable with finite level set and $l_{\max} \equiv \max_j l_{j} > 0$.

Now, define
\begin{equation*}
    \bigtriangleup_i(j,k) \equiv (\hat{f}(X_{ik}) - f(X_{ik}))X_{ij}
\end{equation*}
and
\begin{equation*}
    \tilde{\bigtriangleup}_{r,s} \equiv (\hat{f}(s) - f(s))r,
\end{equation*}
for $r = 1, \dots, l_{j}$. Furthermore, consider the event $\mathcal{A}_n$, where
\begin{equation*}
    \mathcal{A}_n \equiv \{g(-\sqrt{M\log n}) \leq X_{1k}, \dots, X_{nk} \leq g(\sqrt{M\log n}), k = d_1 + 1, \dots, d\}.
\end{equation*}
The bound for the complement arises from the Gaussian maximal inequality \citet[Lemma 13]{Liu09}, i.e.,
\begin{equation*}
    P(\mathcal{A}_n^c) \leq P\left(\max_{i,k\in \{1,\dots,n\}\times\{d_1+1, \dots, d\}} \abs{f(X_{ik})} > \sqrt{2\log(nd_2)}\right) \leq \frac{1}{2\sqrt{\pi\log(nd_2)}}.
\end{equation*}

The following lemma gives insight into the behavior of the Winsorized estimator along the end region.
\begin{lemma}\label{end_region}
    On the event $\mathcal{A}_n$, consider $\beta = \frac{1}{2}$, $t \geq C_M\sqrt{\frac{\log d_2 log^2 n}{n^{1/2}}}$ and $A = \sqrt{\frac{2}{\pi}}(\sqrt{M}- \sqrt{\beta})$, then
    \begin{equation*}
        P\Bigg(\max_{j,k} \frac{1}{n} \sum_{X_k \in \mathbb{E}_n} \abs{(\hat{f}(X_{ik}) - f(X_{ik}))X_{ij}} > \frac{t}{2}\Bigg) \leq \exp\Big(- \frac{k_1 n^{3/4} \sqrt{\log n}}{k_2+ k_3} \Big),
    \end{equation*}
    and
    \begin{equation*}
        P\Bigg(\max_{k} \frac{1}{n} \sum_{X_k \in \mathbb{E}_n} \abs{\hat{f}(X_{ik}) - f(X_{ik})} > \frac{t}{2}\Bigg) \leq \exp\Big(- \frac{k_1 n^{3/4} \sqrt{\log n}}{k_2+ k_3} \Big),
    \end{equation*}
    where $k_i, i \in \{1,2,3\}$ are generic constants independent of sample size and dimension.

    \begin{proof}
        Let $\theta_1 \equiv \frac{n^{\beta/2}t}{4A\sqrt{\log n}}$ and let us first consider the bound for the first inequality.
        \begin{multline*}
            P\Bigg(\max_{j,k} \frac{1}{n} \sum_{X_k \in \mathbb{E}_n} \abs{(\hat{f}(X_{ik}) - f(X_{ik}))X_{ij}} > \frac{t}{2}\Bigg) \\
            =  P\Big(\max_{j,k} \frac{1}{n} \sum_{i: X_{ik} \in \mathbb{E}_n} \abs{(\hat{f}(X_{ik}) - f(X_{ik}))X_{ij}} > \frac{t}{2} \\
            \cap \max_{jk} \sup_{r \in \{1,\dots,l_{j}\}, s \in \mathbb{E}_n} \abs{\hat{f}(t) - f(t)}\abs{r} > \theta_1 \Big) \\
            + P\Big(\max_{j,k} \frac{1}{n} \sum_{i: X_{ik} \in \mathbb{E}_n} \abs{(\hat{f}(X_{ik}) - f(X_{ik}))X_{ij}} > \frac{t}{2} \\
            \cap \max_{jk} \sup_{r \in \{1,\dots,l_{j}\}, s \in \mathbb{E}_n} \abs{\hat{f}(s) - f(s)}\abs{r} \leq \theta_1 \Big) \\
            \leq P\Big(\max_{jk} \sup_{r \in \{1,\dots,l_{j}\}, s \in \mathbb{E}_n} \abs{\hat{f}(s) - f(s)}\abs{r} > \theta_1 \Big) + P\Big(\frac{1}{n} \sum_{i = 1}^n 1_{\{X_{ik} \in \mathbb{E}_n\}} > \frac{t}{2\theta_1}\Big).
        \end{multline*}
        Similarly, for the bound of the second inequality, we have
        \begin{multline*}
            P\Big(\max_k \frac{1}{n} \sum_{i: X_{ik} \in \mathbb{E}_n} \abs{\hat{f}(X_{ik}) - f(X_{ik})} > \frac{t}{2}\Big) \\
            =  P\Big(\max_k \frac{1}{n} \sum_{i: X_{ik} \in \mathbb{E}_n} \abs{\hat{f}(X_{ik}) - f(X_{ik})} > \frac{t}{2} \cap \max_k \sup_{s \in \mathbb{E}_n} \abs{\hat{f}(s) - f(s)} > \theta_1 \Big) \\
            + P\Big(\max_k \frac{1}{n} \sum_{i: X_{ik} \in \mathbb{E}_n} \abs{\hat{f}(X_{ik}) - f(X_{ik})} > \frac{t}{2} \cap \max_k \sup_{s \in \mathbb{E}_n} \abs{\hat{f}(s) - f(s)} \leq \theta_1\Big) \\
            \leq P\Big(\max_k \sup_{s \in \mathbb{E}_n} \abs{\hat{f}(s) - f(s)} > \theta_1 \Big) + P\Big(\frac{1}{n} \sum_{i = 1}^n 1_{\{X_{ik} \in \mathbb{E}_n\}} > \frac{t}{2\theta_1}\Big).
        \end{multline*}
        Recall that $\sup \{1,\dots,l_{j}\} = l_{j} > 0$. Furthermore, Lemma 16 in \cite{Liu09} states that for all $n$
        \begin{equation}
            \sup_{s \in \mathbb{E}_n}\abs{\hat{f}(s) - f(s)} < \sqrt{2 (M + 2) \log n}
        \end{equation}
        With this in mind, we have
        \begin{equation*}
            P\Big(\max_k \sup_{s \in \mathbb{E}_n} \abs{\hat{f}(s) - f(s)} > \theta_1 \Big) \leq d_2 P\Big(\sup_{s \in \mathbb{E}_n} \abs{\hat{f}(s) - f(s)} > \theta_1 \Big).
        \end{equation*}
        Recall that $C_M = 8/\sqrt{\pi}(\sqrt{2M} -1)(M+2)$ and since $t \geq C_M\sqrt{\frac{\log d_2 log^2 n}{n^{1/2}}}$, we have
        \begin{equation*}
            \theta_1 = \frac{n^{\beta/2}t}{4A\sqrt{\log n}} \geq \frac{C_M \sqrt{\log d_2 log^2 n}}{4A\sqrt{\log n}} = 2(M + 2)\log n.
        \end{equation*}
        Consequently, we have
        \begin{equation*}
            \theta_1 \geq 2(M + 2)\log n \geq \sqrt{2(M + 2)\log n},
        \end{equation*}
        as well as
        \begin{equation*}
            \frac{\theta_1}{l_{j}} \geq \sqrt{2(M + 2)\log n},
        \end{equation*}
        such that
        \begin{equation*}
            P\Big(\sup_{t \in \mathbb{E}_n} \abs{\hat{f}(t) - f(t)} > \theta_1 \Big) = P\Big(\sup_{r \in \{1,\dots,l_{j}\}, s \in \mathbb{E}_n} \abs{\hat{f}(s) - f(s)}\abs{r} > \theta_1 \Big) = 0.
        \end{equation*}
        In both cases, the second term is equivalent. We have
        \begin{multline*}
            P\Big(\frac{1}{n} \sum_{i = 1}^n 1_{\{X_{ik} \in \mathbb{E}_n\}} > \frac{t}{2\theta_1}\Big) = P\Big(\sum_{i = 1}^n 1_{\{X_{ik} \in \mathbb{E}_n\}} > \frac{nt}{2\theta_1}\Big) \\
            = P\Big(\sum_{i = 1}^n\big(1_{\{X_{ik} \in \mathbb{E}_n\}} - P(X_{1k} \in \mathbb{E}_n)\big) > \frac{nt}{2\theta_1} - P(X_{1k} \in n\mathbb{E}_n)\Big) \\
            \leq P\Big(\sum_{i = 1}^n\big(1_{\{X_{ik} \in \mathbb{E}_n\}} - P(X_{1k} \in \mathbb{E}_n)\big) > \frac{nt}{2\theta_1} - nA\sqrt{\frac{\log n}{n^\beta}}\Big).
        \end{multline*}
        Choosing $\theta_1$ this way guarantees that
        \begin{equation*}
            \frac{nt}{2\theta_1} - nA\sqrt{\frac{\log n}{n^\beta}} = nA\sqrt{\frac{\log n}{n^\beta}} > 0.
        \end{equation*}
        Then, using Bernstein's inequality, we get
        \begin{multline*}
            P\Big(\frac{1}{n} \sum_{i = 1}^n 1_{\{X_{ik} \in \mathbb{E}_n\}} > \frac{t}{2\theta_1}\Big) \\
            \begin{aligned}
                 & \leq P\Big(\sum_{i = 1}^n\big(1_{\{X_{ik} \in \mathbb{E}_n\}} - P(X_{1k} \in \mathbb{E}_n)\big) > nA\sqrt{\frac{\log n}{n^\beta}}\Big) \\
                 & \leq \exp\Big(- \frac{k_1 n^{2-\beta} \log n}{k_2 n^{1-\beta/2} \sqrt{\log n} + k_3 n^{1-\beta/2} \sqrt{\log n}} \Big),
            \end{aligned}
        \end{multline*}
        where $k_1,k_2,k_3 > 0$ are generic constants independent of $n$ and $d_2$. Collecting terms finishes the proof.
    \end{proof}
\end{lemma}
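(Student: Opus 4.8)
The plan is to establish both displayed inequalities at once, since they differ only through the extra factor $|X_{ij}|\le l_{\max}$ present in the first. Fix a pair $(j,k)$, set $\theta_1\equiv n^{\beta/2}t/(4A\sqrt{\log n})$, and split the target event according to whether the worst-case pointwise error of the Winsorized transform over the end region exceeds $\theta_1$. On the complement of $\{\sup_{r\le l_j,\,s\in\mathbb{E}_n}|\hat f(s)-f(s)|\,|r|>\theta_1\}$ every summand is bounded by $\theta_1$, so
\[
\Bigl\{\tfrac1n\!\!\sum_{i:X_{ik}\in\mathbb{E}_n}\!\!|(\hat f(X_{ik})-f(X_{ik}))X_{ij}|>\tfrac t2\Bigr\}\subseteq\Bigl\{\sup_{r\le l_j,\,s\in\mathbb{E}_n}|\hat f(s)-f(s)||r|>\theta_1\Bigr\}\cup\Bigl\{\tfrac1n\!\sum_{i=1}^n\mathbbm{1}_{\{X_{ik}\in\mathbb{E}_n\}}>\tfrac{t}{2\theta_1}\Bigr\},
\]
and it suffices to bound these two events and then to union over the continuous coordinates.

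For the first event I would invoke the deterministic estimate of \citet[Lemma 16]{Liu09}, $\sup_{s\in\mathbb{E}_n}|\hat f(s)-f(s)|<\sqrt{2(M+2)\log n}$, which holds for every $n$. The hypothesis $t\ge C_M\sqrt{\log d_2\log^2 n/n^{1/2}}$ with $C_M=\tfrac{8}{\sqrt\pi}(\sqrt{2M}-1)(M+2)$ is precisely what forces $\theta_1\ge 2(M+2)\log n$, hence also $\theta_1/l_j\ge\sqrt{2(M+2)\log n}$ once $n$ is large enough that $l_{\max}\le\sqrt{2(M+2)\log n}$. Thus the first event is empty and, even after a union bound over the $d_2$ continuous coordinates, has probability zero.

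All the randomness therefore lives in the second event. Here I would first estimate $p_n\equiv P(X_{1k}\in\mathbb{E}_n)$: since $f(X_k)\sim N(0,1)$ and $\mathbb{E}_n$ pulls back through $g=f^{-1}$ to $\{|f(X_k)|\in[\sqrt{\beta\log n},\sqrt{M\log n}]\}$, a Gaussian tail/monotone-density bound gives $p_n\le A\sqrt{\log n/n^\beta}$ with $A=\sqrt{2/\pi}(\sqrt M-\sqrt\beta)$. The definition of $\theta_1$ is engineered so that $nt/(2\theta_1)=2nA\sqrt{\log n/n^\beta}$, whence the centered sum $\sum_{i=1}^n(\mathbbm{1}_{\{X_{ik}\in\mathbb{E}_n\}}-p_n)$ must exceed $nt/(2\theta_1)-np_n\ge nA\sqrt{\log n/n^\beta}>0$. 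Applying Bernstein's inequality to this sum of bounded, mean-zero terms with variance proxy at most $np_n$ yields a bound of the form $\exp(-k_1 n^{2-\beta}\log n/(k_2 n^{1-\beta/2}\sqrt{\log n}+k_3 n^{1-\beta/2}\sqrt{\log n}))$, which at $\beta=\tfrac12$ collapses to $\exp(-k_1 n^{3/4}\sqrt{\log n}/(k_2+k_3))$; the maximum over $(j,k)$ contributes at most a polynomial-in-$d$ prefactor, dominated by this stretched-exponential rate. Dropping the factor $|r|$ (equivalently, $l_{\max}=1$) gives the second inequality verbatim.

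The delicate point is the joint calibration of $\theta_1$, the lower bound on $t$, and the tail estimate for $p_n$: $\theta_1$ must be large enough for Liu's Lemma 16 to make the pointwise-error event deterministically impossible, yet small enough that $nt/(2\theta_1)-np_n$ stays a fixed fraction of $np_n$, so that Bernstein produces exactly the $n^{3/4}\sqrt{\log n}$ exponent rather than something weaker. With that balance in hand, the event decomposition and the coordinate union bounds are routine.
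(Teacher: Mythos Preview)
Your proposal is correct and follows essentially the same route as the paper: the same choice of $\theta_1$, the same two-event decomposition, the deterministic bound from \citet[Lemma~16]{Liu09} to kill the first event, and Bernstein's inequality on the indicator sum for the second, with the calibration $nt/(2\theta_1)=2nA\sqrt{\log n/n^\beta}$ ensuring the centered deviation is at least $nA\sqrt{\log n/n^\beta}$. Your remark that the union bound over coordinates contributes only a polynomial prefactor absorbed by the stretched-exponential rate is a useful clarification that the paper leaves implicit.
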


Turning back to the first decomposition of the sample covariance, we have
\begin{multline*}
    P\Bigg(\max_{j,k}\abs{\frac{1}{n}\sum_{i=1}^n \Big[(\hat{f}(X_{ik}) - f(X_{ik}))X_{ij})\Big]} > t \Bigg) \\
    \begin{aligned}
         & \leq P\Bigg(\max_{j,k}\abs{\frac{1}{n}\sum_{i=1}^n \bigtriangleup_i(j,k)} > t, \mathcal{A}_n \Bigg) + P(\mathcal{A}_n^c)                   \\
         & \leq P\Bigg(\max_{j,k}\abs{\frac{1}{n}\sum_{i=1}^n \bigtriangleup_i(j,k)} > t \cap \mathcal{A}_n\Bigg) + \frac{1}{2\sqrt{\pi \log(nd_2)}}.
    \end{aligned}
\end{multline*}
Further, we have
\begin{multline*}
    P\Bigg(\max_{j,k}\abs{\frac{1}{n}\sum_{i=1}^n \bigtriangleup_i(j,k)} > t \cap \mathcal{A}_n\Bigg) \\
    \begin{aligned}
         & \leq P\Bigg(\max_{j,k} \frac{1}{n} \sum_{X_k \in \mathbb{M}_n} \abs{\bigtriangleup_i(j,k)} > \frac{t}{2}\Bigg) + P\Bigg(\max_{j,k} \frac{1}{n} \sum_{X_k \in \mathbb{E}_n} \abs{\bigtriangleup_i(j,k)} > \frac{t}{2}\Bigg) \\
         & + \frac{1}{2\sqrt{\pi \log(nd_2)}}                                                                                                                                                                                         \\
         & \leq P\Bigg(\max_{j,k} \frac{1}{n} \sum_{X_k \in \mathbb{M}_n} \abs{\bigtriangleup_i(j,k)} > \frac{t}{2}\Bigg) + \exp\Big(- \frac{k_1 n^{3/4} \sqrt{\log n}}{k_2+ k_3} \Big)                                               \\
         & + \frac{1}{2\sqrt{\pi \log(nd_2)}},
    \end{aligned}
\end{multline*}
where $X_k \in \mathbb{M}_n$ is shorthand notation for $i:X_{ik} \in \mathbb{M}_n$. We derive the bound of the second term in Lemma \ref{end_region}. Thus, let us continue with the first term, where
\begin{multline*}
    P\Bigg(\max_{j,k} \frac{1}{n} \sum_{X_k \in \mathbb{M}_n} \abs{\bigtriangleup_i(j,k)} > \frac{t}{2}\Bigg) \\
    \begin{aligned}
         & \leq d^2P\Bigg(\sup_{r \in \{1,\dots, l_{j}\},s \in \mathbb{M}_n} \abs{\tilde{\bigtriangleup}_{r,s}} > \frac{t}{2}\Bigg) \\
         & = d^2P\Bigg(\sup_{r \in \{1,\dots, l_{j}\},s \in \mathbb{M}_n} \abs{(\hat{f}(s) - f(s))}\abs{r} > \frac{t}{2}\Bigg)      \\
         & = d^2P\Bigg(\sup_{s \in \mathbb{M}_n} \abs{(\hat{f}(s) - f(s))} > \frac{t}{2l_{j}}\Bigg),
    \end{aligned}
\end{multline*}
where clearly $\sup\{1,\dots,l_{j}\} = l_{j} > 0$. Define the event
\begin{equation*}
    \mathbb{B}_n \equiv \{\delta_n \leq \hat{F}_{X_k}(g_j(s)) \leq 1-\delta_n, \quad k \in [d_2]\}.
\end{equation*}
Now, from the definition of the Winsorized estimator, we observe that
\begin{multline*}
    d^2P\Bigg(\sup_{s \in \mathbb{M}_n} \abs{(\hat{f}(s) - f(s))} > \frac{t}{2l_{j}}\Bigg) \\
    \begin{aligned}
         & \leq d^2P\Bigg(\sup_{s \in \mathbb{M}_n} \abs{\Phi^{-1}(W_{\delta_n}[\hat{F}_{X_k}(s)]) - \Phi^{-1}(F_{X_k}(s))} > \frac{t}{2l_{j}} \cap \mathbb{B}_n\Bigg) + P(\mathbb{B}_n^c) \\
         & \leq d^2P\Bigg(\sup_{s \in \mathbb{M}_n} \abs{\Phi^{-1}(\hat{F}_{X_k}(s)) - \Phi^{-1}(F_{X_k}(s))} > \frac{t}{2l_{j}}\Bigg)                                                     \\
         & + 2\exp\Big(2\log d - \frac{\sqrt{n}}{8\pi\log n}\Big),
    \end{aligned}
\end{multline*}
where the expression for $P(\mathbb{B}_n^c))$ follows directly from Lemma 19 in \cite{Liu09}. Now, define
\begin{align*}
    T_{1n} & \equiv \max\Big\{F_{X_k}(g(\sqrt{\beta\log n})), 1-\delta_n\Big\} \\ T_{2n} &\equiv 1 - \min\Big\{F_{X_k}(g(-\sqrt{\beta\log n})), \delta_n\Big\},
\end{align*}
where it follows directly that $T_{1n} = T_{1n} = 1 - \delta_n$. Consequently, we apply the mean value theorem and get
\begin{multline*}
    P\Bigg(\sup_{s \in \mathbb{M}_n} \abs{\Phi^{-1}(\hat{F}_{X_k}(s)) - \Phi^{-1}(F_{X_k}(s))} > \frac{t}{2l_{j}}\Bigg) \\
    \begin{aligned}
         & \leq P\Bigg((\Phi^{-1})^{'}(\max(T_{1n}, T_{2n})) \sup_{s \in \mathbb{M}_n} \abs{\hat{F}_{X_k}(s) - F_{X_k}(s)} > \frac{t}{2l_{j}}\Bigg) \\
         & = P\Bigg((\Phi^{-1})^{'}(1-\delta_n) \sup_{s \in \mathbb{M}_n} \abs{\hat{F}_{X_k}(s) - F_{X_k}(s)} > \frac{t}{2l_{j}}\Bigg)              \\
         & \leq P\Bigg(\sup_{s \in \mathbb{M}_n} \abs{\hat{F}_{X_k}(s) - F_{X_k}(s)} > \frac{t}{(\Phi^{-1})^{'}(1-\delta_n)2l_{j}}\Bigg)            \\
         & \leq 2\exp\Bigg(-2\frac{nt^2}{4l_{j}^2[(\Phi^{-1})^{'}(1-\delta_n)]^2}\Bigg),
    \end{aligned}
\end{multline*}
where the last inequality arises from applying the Dvoretzky-Kiefer-Wolfowitz inequality. Now, we have that
\begin{align*}
    (\Phi^{-1})^{'}(1-\delta_n) & = \frac{1}{\phi\big(\Phi^{-1}(1-\delta_n)\big)}                                                                                             \\
                                & \leq \frac{1}{\phi\Big(\sqrt{2\log\frac{1}{\delta_n}}\Big)} = \sqrt{2\pi}\Big(\frac{1}{\delta_n}\Big) = 8\pi n^{\beta/2}\sqrt{\beta\log n}.
\end{align*}
Therefore,
\begin{multline*}
    d^2P\Bigg(\sup_{s \in \mathbb{M}_n} \abs{\Phi^{-1}(\hat{F}_{X_k}(s)) - \Phi^{-1}(F_{X_k}(s))} > \frac{t}{2l_{j}}\Bigg) \\
    \leq 2\exp\Bigg(2\log d - \frac{\sqrt{n}t^2}{64 l_{j}^2 \pi^2 \log n}\Bigg).
\end{multline*}
Collecting the remaining terms we have
\begin{multline*}
    P\Bigg(\max_{j,k} \frac{1}{n} \sum_{X_k \in \mathbb{M}_n} \abs{\bigtriangleup_i(j,k)} > \frac{t}{2}\Bigg) \\%
    \leq 2\exp\Bigg(2\log d - \frac{\sqrt{n}t^2}{64 l_{j}^2 \pi^2 \log n}\Bigg)
    + 2\exp\Big(2\log d - \frac{\sqrt{n}}{8\pi\log n}\Big).
\end{multline*}
Thus, we have for the first term in the covariance matrix decomposition
\begin{multline*}
    P\Bigg(\max_{j,k}\abs{\frac{1}{n}\sum_{i=1}^n \Big[(\hat{f}(X_{ik}) - f(X_{ik}))X_{ij})\Big]} > t \Bigg) \\
    \begin{aligned}
         & \leq P\Bigg(\max_{j,k} \frac{1}{n} \sum_{X_k \in \mathbb{M}_n} \abs{\bigtriangleup_i(j,k)} > \frac{t}{2}\Bigg)                     \\
         & + \exp\Big(- \frac{k_1 n^{3/4} \sqrt{\log n}}{k_2+ k_3} \Big) + \frac{1}{2\sqrt{\pi \log(nd_2)}}                                   \\
         & \leq 2\exp\Bigg(2\log d - \frac{\sqrt{n}t^2}{64 l_{j}^2 \pi^2 \log n}\Bigg) + 2\exp\Big(2\log d - \frac{\sqrt{n}}{8\pi\log n}\Big) \\
         & + \exp\Big(- \frac{k_1 n^{3/4} \sqrt{\log n}}{k_2+ k_3} \Big) + \frac{1}{2\sqrt{\pi \log(nd_2)}}.
    \end{aligned}
\end{multline*}
Let us turn back to the second term of the first sample covariance decomposition, i.e.
\begin{multline*}
    P\Bigg(\max_{j,k}\abs{(\mu_n(\hat{f}) - \mu_n(f))\mu_n(X_j)} > t \Bigg) \\
    \leq P\Bigg(\max_{k}\abs{ \frac{1}{n}\sum_{i=1}^n \left(\hat{f}(X_{ik}) - f(X_{ik})\right)} > \frac{t}{l_{\max}} \cap \mathcal{A}_n \Bigg) + \frac{1}{2\sqrt{\pi \log(nd_2)}}.
\end{multline*}
Now, analogous to before, we find
\begin{multline}\label{eq::std}
    P\Big(\max_k \frac{1}{n} \sum_{i=1}^n \abs{\hat{f}(X_{ik}) - f(X_{ik})} > \frac{t}{l_{\max}} \cap \mathcal{A}_n \Big) \\
    \begin{aligned}
         & \leq P\Big(\max_k \frac{1}{n} \sum_{X_{k} \in \mathbb{M}_n} \abs{\hat{f}(X_{ik}) - f(X_{ik})} > \frac{t}{2l_{\max}}\Big)  \\
         & + P\Big(\max_k \frac{1}{n} \sum_{X_{k} \in \mathbb{E}_n} \abs{\hat{f}(X_{ik}) - f(X_{ik})} > \frac{t}{2l_{\max}}\Big)     \\
         & \leq P\Big(\max_k \frac{1}{n} \sum_{X_{k} \in \mathbb{M}_n} \abs{\hat{f}(X_{ik}) - f(X_{ik})} > \frac{t}{2l_{\max}} \Big) \\
         & + \exp\Big(- \frac{k_1 n^{3/4} \sqrt{\log n}}{k_2+ k_3} \Big)                                                             \\
         & \leq d_2P\Big(\sup_{t \in \mathbb{M}_n} \abs{\hat{f}(t) - f(t)} > \frac{t}{2l_{\max}} \Big)                               \\
         & + \exp\Big(- \frac{k_1 n^{3/4} \sqrt{\log n}}{k_2+ k_3} \Big).
    \end{aligned}
\end{multline}
Let us take a closer look at
\begin{multline*}
    d_2P\Big(\sup_{t \in \mathbb{M}_n} \abs{\hat{f}(t) - f(t)} > \frac{t}{2l_{\max}} \Big) \\
    \begin{aligned}
         & \leq d_2P\Bigg(\sup_{t \in \mathbb{M}_n} \abs{\Phi^{-1}(W_{\delta_n}[\hat{F}_{X_k}(t)]) - \Phi^{-1}(F_{X_k}(t))} > \frac{t}{2l_{\max}} \cap \mathbb{B}_n\Bigg) \\
         & + d_2P(\mathbb{B}_n^c)                                                                                                                                         \\
         & \leq d_2P\Bigg(\sup_{t \in \mathbb{M}_n} \abs{\Phi^{-1}(\hat{F}_{X_k}(t)) - \Phi^{-1}(F_{X_k}(t))} > \frac{t}{2l_{\max}} \Bigg)                                \\
         & + 2\exp\Big(\log d_2 - \frac{\sqrt{n}}{8\pi\log n}\Big).
    \end{aligned}
\end{multline*}
The definition of the event $\mathbb{B}_n$ is the same as above. Then applying once more the Dvoretzky–Kiefer–Wolfowitz inequality we end up with the following upper bound:
\begin{multline*}
    d_2P\Bigg(\sup_{t \in \mathbb{M}_n} \abs{\Phi^{-1}(\hat{F}_{X_k}(t)) - \Phi^{-1}(F_{X_k}(t))} > \frac{t}{2l_{\max}} \Bigg) \\
    \leq 2\exp\Big(\log d_2 -\frac{\sqrt{n}t^2}{64 \ l_{\max}^2 \ \pi^2\log n}\Big).
\end{multline*}

Collecting terms and simplifying yields
\begin{multline*}
    P\Bigg(\max_{j,k}\abs{\frac{1}{n}\sum_{i=1}^n \Big[(\hat{f}(X_{ik}) - f(X_{ik}))X_{ij})\Big]} > t \Bigg) \\
    \begin{aligned}
         & \quad + P\Bigg(\max_{j,k}\abs{(\mu_n(\hat{f}) - \mu_n(f))\mu_n(X_j)} > t \Bigg)                                                                \\
         & \leq 2\exp\Bigg(2\log d - \frac{\sqrt{n}t^2}{64 \ l_{j}^2 \ \pi^2 \log n}\Bigg) + 2\exp\Big(2\log d - \frac{\sqrt{n}}{8\pi\log n}\Big)         \\
         & \quad + \exp\Big(- \frac{k_1 n^{3/4} \sqrt{\log n}}{k_2+ k_3} \Big) + \frac{1}{2\sqrt{\pi \log(nd_2)}}                                         \\
         & \quad + 2\exp\Bigg(\log d_2 - \frac{\sqrt{n}t^2}{64 \ l_{\max}^2 \ \pi^2 \log n}\Bigg) + 2\exp\Big(\log d_2 - \frac{\sqrt{n}}{8\pi\log n}\Big) \\
         & \quad + \exp\Big(- \frac{k_1 n^{3/4} \sqrt{\log n}}{k_2+ k_3} \Big) + \frac{1}{2\sqrt{\pi \log(nd_2)}}                                         \\
         & \leq 4\exp\Bigg(2\log d - \frac{\sqrt{n}t^2}{64 \ l_{\max}^2 \ \pi^2 \log n}\Bigg) + 4\exp\Big(2\log d - \frac{\sqrt{n}}{8\pi\log n}\Big)      \\
         & \quad + 2\exp\Big(- \frac{k_1 n^{3/4} \sqrt{\log n}}{k_2+ k_3} \Big) + \frac{1}{\sqrt{\pi \log(nd_2)}}.
    \end{aligned}
\end{multline*}
Then
\begin{multline}
    P\Bigg(\max_{j,k} \abs{S_{\hat{f}(X_k)X_j} - S_{f(X_k)X_j}} > 2t \Bigg) \\
    \begin{aligned}
         & \leq 4\exp\Bigg(2\log d - \frac{\sqrt{n}t^2}{64 \ l_{\max}^2 \ \pi^2 \log n}\Bigg) + 4\exp\Big(2\log d - \frac{\sqrt{n}}{8\pi\log n}\Big) \\
         & \quad + 2\exp\Big(- \frac{k_1 n^{3/4} \sqrt{\log n}}{k_2+ k_3} \Big) + \frac{1}{\sqrt{\pi \log(nd_2)}},
    \end{aligned}
\end{multline}
which completes the considerations regarding the sample covariance. \\

\noindent As a next step, we need to bound the error of the sample standard deviation of the Winsorized estimator
\begin{equation*}
    \sigma^{(n)}_{\hat{f}(X_k)} \equiv \sqrt{\frac{1}{n}\sum_{i=1}^n \Big(\hat{f}(X_{ik}) - \mu_n(\hat{f})\Big)^2}.
\end{equation*}

Consider the following decomposition of the standard deviation of the Winsorized estimator,
\begin{multline*}
    \abs{\sigma^{(n)}_{\hat{f}(X_k)} - \sigma^{(n)}_{f(X_k)}} \\
    \begin{aligned}
         & = \abs{\sqrt{1/n\sum_{i=1}^n \Big(\hat{f}(X_{ik}) - \mu_n(\hat{f})\Big)^2} - \sqrt{1/n\sum_{i=1}^n \Big(f(X_{ik}) - \mu_n(f)\Big)^2}}   \\
         & = \frac{1}{\sqrt{n}}\abs{\Big( \Norm{\hat{f}(X_k) - \mu_n(\hat{f})}_2 - \Norm{f(X_k) - \mu_n(f)}_2 \Big)}                               \\
         & \leq \frac{1}{\sqrt{n}} \Norm{\hat{f}(X_k) - \mu_n(\hat{f}) - f(X_k) + \mu_n(f)}_2                                                      \\
         & \leq \frac{1}{\sqrt{n}} \sqrt{n}\Norm{\hat{f}(X_k) - \mu_n(\hat{f}) - f(X_k) + \mu_n(f)}_\infty                                         \\
         & = \sup_{i: X_{ik} \in \{1,\dots,n\}} \abs{\hat{f}(X_{ik}) - f(X_{ik}) + \mu_n(f) - \mu_n(\hat{f})}                                      \\
         & = \sup_{i: X_{ik} \in \{1,\dots,n\}} \abs{\hat{f}(X_{ik}) - f(X_{ik})} + \abs{\mu_n(\hat{f}) - \mu_n(f)}                                \\
         & \leq \sup_{i: X_{ik} \in \{1,\dots,n\}} \abs{\hat{f}(X_{ik}) - f(X_{ik})} + \frac{1}{n} \sum_{i=1}^n \abs{\hat{f}(X_{ik}) - f(X_{ik})}.
    \end{aligned}
\end{multline*}
The first inequality is due to the reverse triangle inequality. The ensuing inequalities arise from applying standard norm equivalences. As before, we analyze both terms separately since we have to take care of the behavior of the Winsorized estimator, taking values at the end and the middle interval. We have for any $t > 0$,
\begin{multline*}
    P\left(\max_k \abs{\sigma^{(n)}_{\hat{f}(X_k)} - \sigma^{(n)}_{f(X_k)}} > 2t \right) \\
    \begin{aligned}
         & \leq P\Big(\max_k \sup_{i: X_{ik} \in \{1,\dots,n\}} \abs{\hat{f}(X_{ik}) - f(X_{ik})} > t, \mathcal{A}_n\Big)          \\
         & + P\Big(\max_k \frac{1}{n} \sum_{i=1}^n \abs{\hat{f}(X_{ik}) - f(X_{ik})} > t, \mathcal{A}_n\Big) + P(\mathcal{A}_n^c).
    \end{aligned}
\end{multline*}
Note that the second term is, in effect, equivalent to Eq. \eqref{eq::std} above such that we can immediately conclude that
\begin{multline*}
    P\Big(\max_k \frac{1}{n} \sum_{i=1}^n \abs{\hat{f}(X_{ik}) - f(X_{ik})} > t \cap \mathcal{A}_n \Big) \\
    \begin{aligned}
         & \leq d_2P\Big(\sup_{t \in \mathbb{M}_n} \abs{\hat{f}(t) - f(t)} > \frac{t}{2}\Big) + \exp\Big(- \frac{k_1 n^{3/4} \sqrt{\log n}}{k_2+ k_3} \Big) \\
         & \leq 2\exp\Big(\log d_2 -\frac{\sqrt{n}t^2}{64 \pi^2\log n}\Big) + 2\exp\Big(\log d_2 - \frac{\sqrt{n}}{8\pi\log n}\Big)                         \\
         & + \exp\Big(- \frac{k_1 n^{3/4} \sqrt{\log n}}{k_2+ k_3} \Big).
    \end{aligned}
\end{multline*}
The bound for the end region follows again from Lemma \ref{end_region}.

Similarly, we find that
\begin{multline*}
    P\Big(\max_k \sup_{i: X_{ik} \in \{1,\dots,n\}} \abs{\hat{f}(X_{ik}) - f(X_{ik})} > t \cap \mathcal{A}_n \Big) \\
    \begin{aligned}
         & \leq d_2P\Big(\sup_{t \in \mathbb{M}_n} \abs{\hat{f}(t) - f(t)} > \frac{t}{2} \Big) + d_2 P\Big(\sup_{t \in \mathbb{E}_n} \abs{\hat{f}(t) - f(t)} > \frac{t}{2}\Big) \\
         & = d_2P\Big(\sup_{t \in \mathbb{M}_n} \abs{\hat{f}(t) - f(t)} > \frac{t}{2} \Big),
    \end{aligned}
\end{multline*}
where the bound over the end region has been shown in Lemma \ref{end_region}. Thus, we only have to take care of
\begin{multline*}
    P\Big(\sup_{t \in \mathbb{M}_n} \abs{\hat{f}(t) - f(t)} > \frac{t}{2}\Big) \\
    \begin{aligned}
         & \leq P\Bigg(\sup_{t \in \mathbb{M}_n} \abs{\Phi^{-1}(W_{\delta_n}[\hat{F}_{X_k}(t)]) - \Phi^{-1}(F_{X_k}(t))} > \frac{t}{2} \cap \mathbb{B}_n\Bigg) + P(\mathbb{B}_n^c)      \\
         & \leq P\Bigg(\sup_{t \in \mathbb{M}_n} \abs{\Phi^{-1}(\hat{F}_{X_k}(t)) - \Phi^{-1}(F_{X_k}(t))} > \frac{t}{2}\Bigg) + 2\exp\Big(\log d_2 - \frac{\sqrt{n}}{8\pi\log n}\Big).
    \end{aligned}
\end{multline*}
The definition of the event $\mathbb{B}_n$ is the same as above. Then again, by the Dvoretzky-Kiefer-Wolfowitz inequality, we end up with the following upper bound:
\begin{equation*}
    P\Bigg(\sup_{t \in \mathbb{M}_n} \abs{\Phi^{-1}(\hat{F}_{X_k}(t)) - \Phi^{-1}(F_{X_k}(t))} > \frac{t}{2}\Bigg) \leq 2\exp\Big(-\frac{\sqrt{n}t^2}{64 \pi^2\log n}\Big).
\end{equation*}
Collecting terms, we arrive at the concentration bound for the sample standard deviation:
\begin{multline*}
    P\left(\max_k \abs{\sigma^{(n)}_{\hat{f}(X_k)} - \sigma^{(n)}_{f(X_k)}} > 2t \right) \\
    \begin{aligned}
         & \leq 4\exp\Big(\log d_2 -\frac{\sqrt{n}t^2}{64 \pi^2\log n}\Big) +  4\exp\Big(\log d_2 - \frac{\sqrt{n}}{8\pi\log n}\Big) \\
         & +  \exp\Big(- \frac{k_1 n^{3/4} \sqrt{\log n}}{k_2+ k_3} \Big) + \frac{1}{2\sqrt{\pi \log(nd_2)}}.
    \end{aligned}
\end{multline*}

With these intermediate results, we have shown that the sample covariance (numerator) and the sample standard deviation (denominator) can be estimated accurately.

The following lemma provides us with the means to form a probability bound for
\begin{equation*}
    \max_{jk}\abs{\hat{\Sigma}_{jk}^{(n)} -  \Sigma_{jk}^{*}}.
\end{equation*}

\begin{lemma}
    Consider the polyserial ad hoc estimator $\hat{\Sigma}_{jk}^{(n)}$ for $1 \leq j \leq d_1 < k \leq d_2$ and let $\epsilon \in \Big[ C_M\sqrt{\frac{\log d_2 log^2 n}{n^{1/2}}}, 8(1+4c^2)\Big]$, where $c$ is the corresponding sub-Gaussian parameter of the discrete variable.
    Both the numerator and the denominator are bounded, i.e.
    \begin{equation*}
        S_{\hat{f}(X_k)X_j} \in [-(1 + \epsilon), 1 + \epsilon],
    \end{equation*}
    and
    \begin{equation*}
        \sigma_{\hat{f}(X_k)}^{(n)} \in [1-\epsilon, 1+\epsilon].
    \end{equation*}
    Consequently, $\hat{\Sigma}_{jk}^{(n)}$ is Lipschitz with constant $L$. The following decomposition holds
    \begin{align*}
        \max_{jk}\abs{\hat{\Sigma}_{jk}^{(n)} -  \Sigma_{jk}^{*}} & =  \max_{jk}\abs{\hat{\Sigma}_{jk}^{(n)} - {\Sigma}_{jk}^{(n)} + {\Sigma}_{jk}^{(n)} - \Sigma_{jk}^{*}}                                                  \\
                                                                  & \leq \max_{jk}\abs{\hat{\Sigma}_{jk}^{(n)} - {\Sigma}_{jk}^{(n)}} + \max_{jk}\abs{{\Sigma}_{jk}^{(n)} - \Sigma_{jk}^{*}}                                 \\
                                                                  & \leq L \Big( \max_{jk} \abs{S_{\hat{f}(X_k)X_j} - S_{{f}(X_k)X_j} } + C_\Gamma \max_{k} \abs{\sigma_{\hat{f}(X_k)}^{(n)} - \sigma_{\hat{f}(X_k)}^{(n)} } \\
                                                                  & +  \max_{jk} \abs{S_{{f}(X_k)X_j} - S^*_{{f}(X_k)X_j}}  + C_\Gamma \max_{k} \abs{\sigma^{(n)}_{{f}(X_k)} - 1} \Big),
    \end{align*}
    where $C_\Gamma \equiv \sum_{r=1}^{l_{j}} \phi(\bar{\Gamma}_j^r)(x_j^{r+1} - x_j^r)$.
\end{lemma}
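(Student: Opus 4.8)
The estimator under study is the quotient
\[
  \hat{\Sigma}_{jk}^{(n)} \;=\; \frac{S_{\hat{f}(X_k)X_j}}{C_\Gamma\,\sigma^{(n)}_{\hat{f}(X_k)}},
\]
i.e.\ the image of the pair $(a,b)=(S_{\hat{f}(X_k)X_j},\,\sigma^{(n)}_{\hat{f}(X_k)})$ under the map $g(a,b)=a/(C_\Gamma b)$. The plan is: (i) use the concentration inequalities established in the preceding displays, together with standard $\chi^2$- and sub-Gaussian/Bernstein tail bounds for the sample moments of $f(X_k)$ and of $X_j$, to show that on a high-probability event (a sub-event of $\mathcal{A}_n\cap\mathbb{B}_n$, whose complement probability has already been accounted for) the pair $(a,b)$ is trapped in the compact box $[-(1+\epsilon),1+\epsilon]\times[1-\epsilon,1+\epsilon]$ -- these are exactly the two boundedness assertions of the lemma; (ii) observe that $g$ is $C^1$ on that box with explicitly bounded gradient, hence Lipschitz there with a constant $L=L(\epsilon,C_\Gamma)$; (iii) telescope through the oracle-transformation quantity $\Sigma^{(n)}_{jk}:=S_{f(X_k)X_j}/(C_\Gamma\,\sigma^{(n)}_{f(X_k)})$ and split $\bigl|\hat\Sigma_{jk}^{(n)}-\Sigma^*_{jk}\bigr|$ by the triangle inequality, so that the Lipschitz bound produces the four terms in the displayed decomposition.

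For the denominator bound, write $\bigl|\sigma^{(n)}_{\hat{f}(X_k)}-1\bigr|\le \bigl|\sigma^{(n)}_{\hat{f}(X_k)}-\sigma^{(n)}_{f(X_k)}\bigr|+\bigl|\sigma^{(n)}_{f(X_k)}-1\bigr|$. The first summand is controlled, uniformly in $k$, by the concentration bound for $\sigma^{(n)}_{\hat f(X_k)}$ derived in the display immediately preceding the lemma; the second is handled by $\chi^2$-concentration of the sample variance of the i.i.d.\ $N(0,1)$ sample $\{f(X_{ik})\}_i$ together with a union bound over the $d_2$ continuous coordinates. Both summands are of order $\sqrt{\log d_2\,\log^2 n/\sqrt n}$ or smaller, hence are $\le\epsilon$ once $\epsilon\ge C_M\sqrt{\log d_2\,\log^2 n/\sqrt n}$, giving $\sigma^{(n)}_{\hat{f}(X_k)}\in[1-\epsilon,1+\epsilon]$ on the good event; in particular the denominator of $g$ stays bounded away from $0$ there.

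For the numerator bound, sample Cauchy--Schwarz (equivalently $|r^{(n)}_{\hat f(X_k),X_j}|\le 1$) gives $\bigl|S_{\hat f(X_k)X_j}\bigr|\le \sigma^{(n)}_{\hat f(X_k)}\,\sigma^{(n)}_{X_j}$; the first factor is $\le 1+\epsilon$ by the previous paragraph and the second is $\le 1+\epsilon$ by Bernstein's inequality applied to the sub-exponential variables $(X_{ij}-\mathbb{E}X_j)^2$ under the standing normalization of $X_j$ -- this is where the upper endpoint $\epsilon\le 8(1+4c^2)$ enters, keeping the deviation in the sub-Gaussian regime of Bernstein. This yields $S_{\hat f(X_k)X_j}\in[-(1+\epsilon),1+\epsilon]$ on the good event. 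Now apply the quotient estimate: since $(a,b)\mapsto a/b$ has partial derivatives bounded by $1/(1-\epsilon)$ in $a$ and $(1+\epsilon)/(1-\epsilon)^2$ in $b$ on the box, the mean value theorem gives, for a suitable $L$ absorbing the factor $1/C_\Gamma$ and these derivative bounds,
\[
  \bigl|\hat\Sigma_{jk}^{(n)}-\Sigma^{(n)}_{jk}\bigr|
  \;\le\; L\Bigl(\bigl|S_{\hat f(X_k)X_j}-S_{f(X_k)X_j}\bigr|+C_\Gamma\bigl|\sigma^{(n)}_{\hat f(X_k)}-\sigma^{(n)}_{f(X_k)}\bigr|\Bigr),
\]
and applying the same estimate to $\Sigma^{(n)}_{jk}=S_{f(X_k)X_j}/(C_\Gamma\sigma^{(n)}_{f(X_k)})$ versus $\Sigma^*_{jk}=S^*_{f(X_k)X_j}/C_\Gamma$ -- valid because $f(X_k)$ has unit variance and the gap between $C_\Gamma$ and its population counterpart is controlled via Lemma 3.1 of the manuscript and the $1$-Lipschitzness of $\phi$ -- gives
\[
  \bigl|\Sigma^{(n)}_{jk}-\Sigma^*_{jk}\bigr|
  \;\le\; L\Bigl(\bigl|S_{f(X_k)X_j}-S^*_{f(X_k)X_j}\bigr|+C_\Gamma\bigl|\sigma^{(n)}_{f(X_k)}-1\bigr|\Bigr).
\]
Adding the two bounds and taking the maximum over $1\le j\le d_1<k\le d_2$ produces the displayed decomposition.

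The main obstacle is the quotient step: $g$ is Lipschitz only where the denominator is bounded away from zero, so the whole argument is conditional on the event -- uniform in $k$ -- that $\sigma^{(n)}_{\hat f(X_k)}$ and $\sigma^{(n)}_{f(X_k)}$ stay near $1$, and this is the one place that genuinely requires a concentration argument rather than a deterministic inequality; its complement probability must be folded into the bound of Theorem 3.3. A secondary but fiddly point is the bookkeeping of the constant $L$ and the placement of the factor $C_\Gamma$ so that the final display holds with a single Lipschitz constant; and one must verify that the prescribed range $\epsilon\in\bigl[C_M\sqrt{\log d_2\,\log^2 n/\sqrt n},\,8(1+4c^2)\bigr]$ is exactly the range on which every tail bound invoked -- the Winsorization bound (lower endpoint) and the Bernstein-type bounds for the second moments of $X_j$ (upper endpoint) -- is simultaneously effective.
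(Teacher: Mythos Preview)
Your proposal is correct and follows essentially the same route as the paper: trap $(S_{\hat f(X_k)X_j},\sigma^{(n)}_{\hat f(X_k)})$ in a compact box via the previously derived concentration bounds plus sub-Gaussian concentration for the oracle quantities, observe that $h(u,v)=u/v$ has bounded gradient there, and telescope through the oracle version $\Sigma^{(n)}_{jk}$. The only tactical difference is in the numerator bound: you use sample Cauchy--Schwarz $|S_{\hat f(X_k)X_j}|\le\sigma^{(n)}_{\hat f(X_k)}\sigma^{(n)}_{X_j}$ and then control $\sigma^{(n)}_{X_j}$ by Bernstein, whereas the paper bounds $|S^*_{f(X_k)X_j}|\le 1$ by population Cauchy--Schwarz and then chains $|S_{\hat f}|\le |S_{\hat f}-S_f|+|S_f-S^*|+|S^*|$ using Lemma~1 of Ravikumar et al.\ for the middle term---both arrive at the same interval and the same origin for the upper endpoint $8(1+4c^2)$.
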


\begin{proof}
    Let us assume w.l.o.g. that $X_j$ - the discrete variable - has mean zero and variance one. By the Cauchy-Schwarz inequality, the true covariance of the pair is bounded from above by $1$, i.e.
    \begin{equation*}
        \abs{S^*_{f(X_k)X_j}} \leq \sigma^{2}_{f(X_k)} \sigma^{2}_{X_j} = 1.
    \end{equation*}
    Earlier we have shown that for some $t \geq C_M\sqrt{\frac{\log d_2 log^2 n}{n^{1/2}}}$ we have
    \begin{multline}
        P\Bigg(\max_{j,k} \abs{S_{\hat{f}(X_k)X_j} - S_{f(X_k)X_j}} > 2t \Bigg) \\
        \begin{aligned}
             & \leq 4\exp\Bigg(2\log d - \frac{\sqrt{n}t^2}{64 \ l_{\max}^2 \ \pi^2 \log n}\Bigg) + 4\exp\Big(2\log d - \frac{\sqrt{n}}{8\pi\log n}\Big) \\
             & + 2\exp\Big(- \frac{k_1 n^{3/4} \sqrt{\log n}}{k_2+ k_3} \Big) + \frac{1}{\sqrt{\pi \log(nd_2)}}.
        \end{aligned}
    \end{multline}
    According to Lemma 1 in \cite{Ravikumar11} with sub-Gaussian parameter $c > 0$, $f(X_k)$ is standard Gaussian and thus sub-Gaussian and $X_j$ is discrete and bounded and therefore also sub-Gaussian) we have the following tail bound
    \begin{equation*}
        P\left(\max_{jk}\abs{S_{{f}(X_k)X_j} - S^*_{{f}(X_k)X_j}} \geq t \right) \leq 4d^2 \exp\left\{- \frac{nt^2}{128(1+4c^2)^2} \right\},
    \end{equation*}
    for all $t \in (0,8(1+4c^2))$.
    Therefore with high probability for \(j \in [d_1], k \in [d_2]\), we have
    \begin{equation*}
        S_{\hat{f}(X_k)X_j} \in [S_{{f}(X_k)X_j} - 2t , S_{{f}(X_k)X_j} + 2t],
    \end{equation*}
    and since
    \begin{equation*}
        S_{{f}(X_k)X_j} \in [-1 - t , 1 + t],
    \end{equation*}
    with high probability, we have
    \begin{equation*}
        S_{\hat{f}(X_k)X_j} \in [-(1 + 3t), 1 + 3t].
    \end{equation*}
    Similar considerations hold for the sample standard deviation. We have already shown that
    \begin{multline*}
        P\left(\max_k \abs{\sigma^{(n)}_{\hat{f}(X_k)} - \sigma^{(n)}_{f(X_k)}} > 2t \right) \\
        \begin{aligned}
             & \leq 4\exp\Big(\log d_2 -\frac{\sqrt{n}t^2}{64 \pi^2\log n}\Big) +  4\exp\Big(\log d_2 - \frac{\sqrt{n}}{8\pi\log n}\Big) \\
             & +  \exp\Big(- \frac{k_1 n^{3/4} \sqrt{\log n}}{k_2+ k_3} \Big) + \frac{1}{2\sqrt{\pi \log(nd_2)}}.
        \end{aligned}
    \end{multline*}
    Furthermore, we use Lemma 1 again in \cite{Ravikumar11} to form a bound for the variance. Since $f(X_k)$ is standard Gaussian and hence sub-Gaussian with parameter $c = 1$ we immediately get
    \begin{equation*}
        P\left(\max_k \abs{(\sigma^{(n)}_{{f}(X_k)})^2 - 1} \geq t \right) \leq 4d_2\exp \left\{- \frac{nt^2}{128(1+4)^2} \right\}.
    \end{equation*}
    Put differently, with high probability
    \begin{equation*}
        (\sigma^{(n)}_{{f}(X_k)})^2 \in [1 - t, 1+ t],
    \end{equation*}
    and consequently, we also have with high probability
    \begin{equation*}
        \sigma^{(n)}_{{f}(X_k)} \in [\sqrt{1 -t}, \sqrt{1+t} ].
    \end{equation*}
    Since the interval $[1 -t, 1+t ]$ is always as least as wide as $[\sqrt{1 -t}, \sqrt{1+t} ]$, for all $t >0$ with high probability we then also have
    \begin{equation*}
        \sigma^{(n)}_{{f}(X_k)} \in [1 -t, 1+t ].
    \end{equation*}
    Putting these things together, we obtain that with high probability
    \begin{equation*}
        \sigma^{(n)}_{\hat{f}(X_k)} \in [1- 3t, 1 +3t].
    \end{equation*}

    In order to finish the proof consider the following function $h: \mathbb{R}\times \mathbb{R}_+ \to \mathbb{R}$ defined by
    \begin{equation*}
        h(u,v) = \frac{u}{v},
    \end{equation*}
    with $\nabla h = (1/v, -u/v^{2})^{T}$.

    As we have just shown for the polyserial ad hoc estimator, $\nabla h = (1/v, -u/v^2)^T$ is bounded with
    \begin{equation*}
        \sup \norm{\nabla h}_2 = \sqrt{\left(\frac{1}{1- 3t}\right)^2 + \left(-\frac{(1 + 3t)}{(1 - 3t)^2}\right)^2} \coloneqq L.
    \end{equation*}

    Consequently, $h$ is Lipschitz, and we have the following decomposition

    \begin{align*}
        \abs{h(u,v) - h(u',v')} & = \abs{h(u,v) - h(\tilde{u},\tilde{v}) + h(\tilde{u},\tilde{v}) - h(u',v')}                                     \\
                                & \leq \abs{h(u,v) - h(\tilde{u},\tilde{v})} + \abs{h(\tilde{u},\tilde{v}) - h(u',v')}                            \\
                                & \leq L \big(\abs{u-\tilde{u}} + \abs{v-\tilde{v}}\big) + L \big(\abs{\tilde{u} - u'} + \abs{\tilde{v}-v'}\big).
    \end{align*}
    Finally, taking $\epsilon = 3t$ finishes the proof.
\end{proof}

At last, collecting terms, we find that for $j \in [d_1]$ and $k [d_2]$ and any \[\epsilon \in \Big[C_M\sqrt{\frac{\log d \log^2 n}{\sqrt{n}}},8(1+4c^2)\Big]\] the following bound holds
\begin{align*}
    \MoveEqLeft P\left(\max_{jk}\abs{\hat{\Sigma}_{jk}^{(n)} -  \Sigma_{jk}^{*}} \geq \epsilon \right)                       \\
     & \leq P\left(\max_{j,k} \abs{S_{\hat{f}(X_k)X_j} - S_{f(X_k)X_j}} > \frac{\epsilon}{4L} \right)                        \\
     & \quad + P\left(\max_k \abs{\sigma^{(n)}_{\hat{f}(X_k)} - \sigma^{(n)}_{f(X_k)}} > \frac{\epsilon}{4LC_\Gamma} \right) \\
     & \quad + P\left(\max_{jk}\abs{S_{{f}(X_k)X_j} - S^*_{{f}(X_k)X_j}} \geq \frac{\epsilon}{4L} \right)                    \\
     & \quad +  P\left(\max_k \abs{(\sigma^{(n)}_{{f}(X_k)})^2 - 1} \geq \frac{\epsilon}{4lC_\Gamma} \right)
\end{align*}


The conclusion of Theorem 3.3
follows by plugging in the corresponding concentration bounds and simplifying.


\section{Additional simulation setup and results}\label{sec::additional_setup_results}

\subsection{FPR and TPR results for binary and general mixed data}

This section provides additional simulation results for the binary-continuous and general mixed data setting. In particular, we report TPR and FPR for the latent Gaussian model and the LGCM with \(f_j(x) = x^{1/3}\) for all \(j\). The results are depicted in Figures \ref{fig:bench_tpr_fpr_binary} and \ref{fig:bench_tpr_fpr_general} for the binary-continuous and general mixed data settings, respectively.
\begin{figure}
    \centering
    \includegraphics[width=\textwidth]{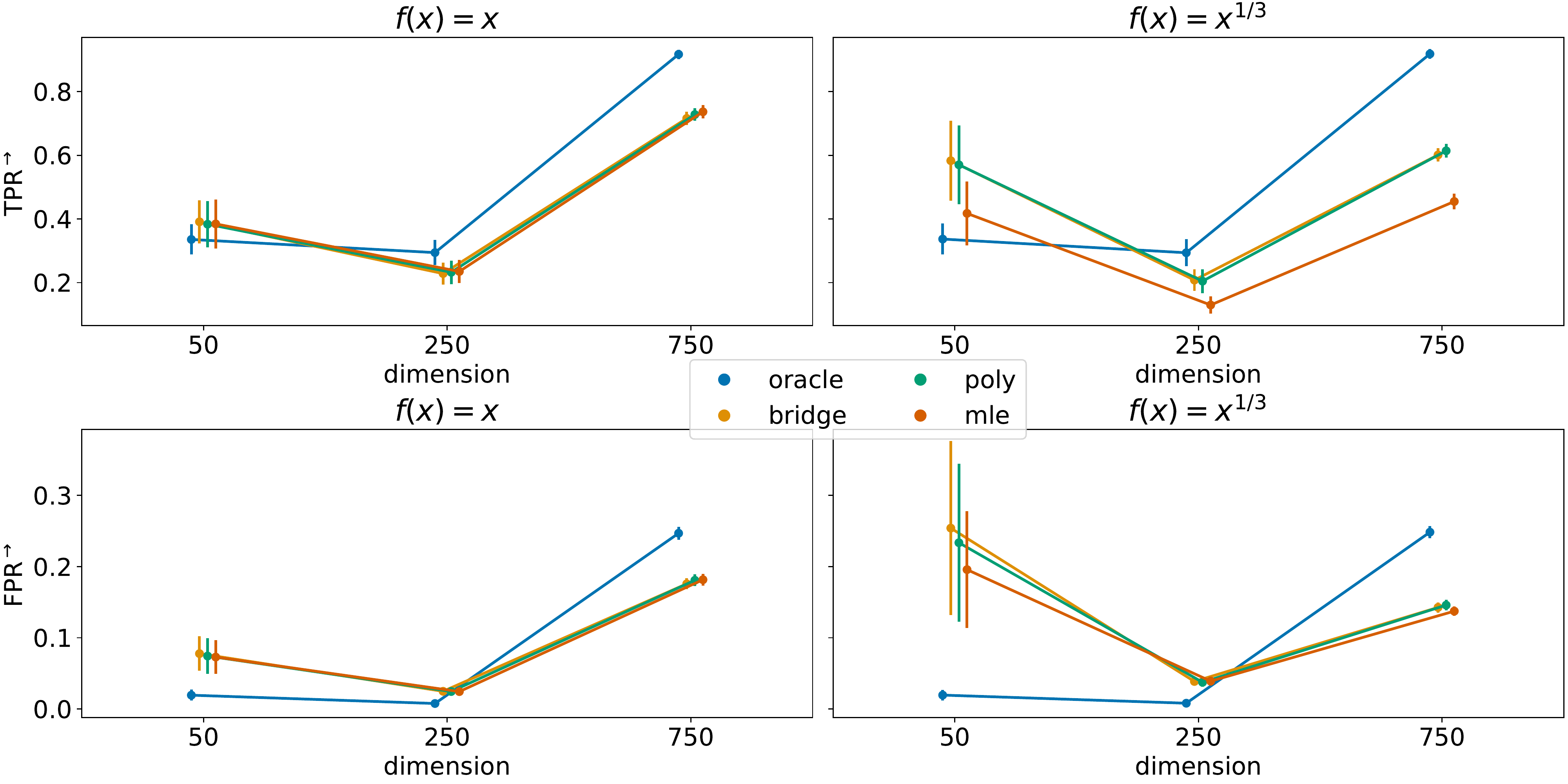}
    \caption{Simulation results for the binary-continuous data setting based on \(100\) simulation runs. The left column corresponds to the latent Gaussian model, where the transformation function is the identity. The right colum depicts results for the LGCM with \(f_j(x) = x^{1/3}\) for all \(j\). The top and bottom rows report mean and standard deviation of the TPR and FPR along simulation runs, respectively. The y-axis labels have superscript arrows attached to indicate the direction of improvement: \(\rightarrow\) implies that larger values are better.}
    \label{fig:bench_tpr_fpr_binary}
\end{figure}

\figref{fig:bench_tpr_fpr_binary} illustrates that our \texttt{poly} estimator performs almost identical to the gold standard \texttt{bridge} estimator proposed by \citet{Fan17}. In the right column, the \texttt{mle} estimator is misspecified, and TPR is noticeably lower than for the remaining estimators. Interestingly, it appears that the misspecified \texttt{mle} estimator does not infer additional edges, which is reflected in the FPR holding level with the other estimators.

The general mixed data setting results reported in \figref{fig:bench_tpr_fpr_general} are similar. The \texttt{poly} estimator performs almost identically to the \texttt{mle} estimator under the latent Gaussian model in terms of FPR and TPR. The \texttt{bridge} estimator has higher FPR and TPR in the \(d=50\) case and lower FPR and TPR in the \(d=750\) case. Under the LGCM, similar to the binary-continuous data setting, the \texttt{mle} estimator is misspecified and has a lower TPR than the other estimators while the FPR holds level. The \texttt{bridge} ensemble estimator has an even higher FPR and TPR in the \(d=50\) case and a slightly lower FPR and TPR in the \(d=750\) case.
\begin{figure}
    \centering
    \includegraphics[width=\textwidth]{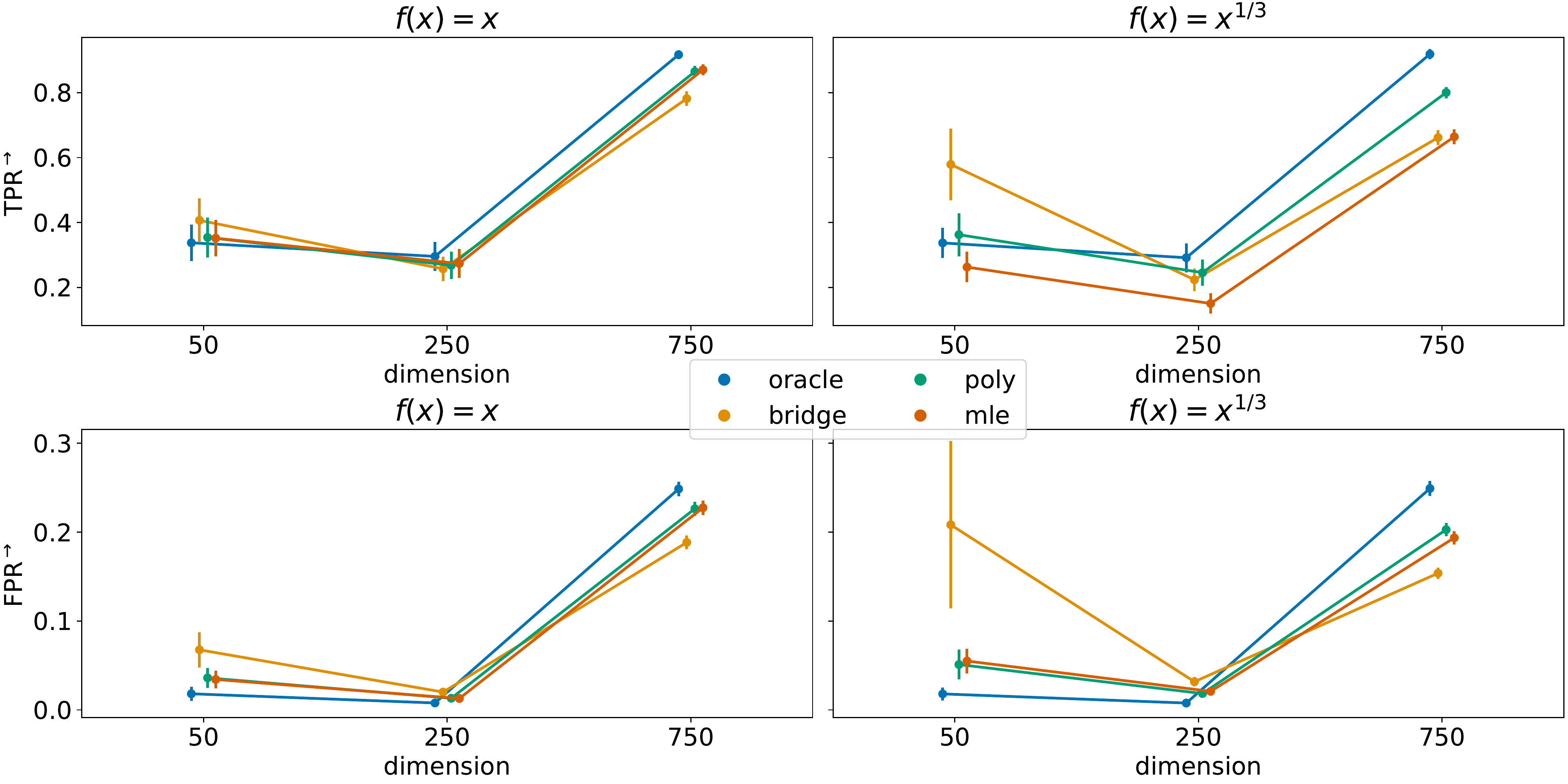}
    \caption{Simulation results for the general mixed data setting based on \(100\) simulation runs. The left column corresponds to the latent Gaussian model, where the transformation function is the identity. The right colum depicts results for the LGCM with \(f_j(x) = x^{1/3}\) for all \(j\). The top and bottom rows report mean and standard deviation of the TPR and FPR along simulation runs, respectively. The y-axis labels have superscript arrows attached to indicate the direction of improvement: \(\rightarrow\) implies that larger values are better.}
    \label{fig:bench_tpr_fpr_general}
\end{figure}

\subsection{Ternary mixed data results}

We additionally compare our \texttt{poly} and \texttt{mle} estimators with a generalization of the bridge function approach proposed by \citet{Quan18} given a mix of ternary, binary, and continuous data. In this case, $ \binom{2+2}{2} = 6$ individual bridge functions are needed.

The $(d,n)$-setup is analogous to the binary-continuous data setting from the main text. Starting with \texttt{mle}, the pattern from the binary-continuous setting continues to show in the ternary-binary-continuous mix. Whenever $f(x)=x$, the \texttt{mle} estimator generally performs best, in particular concerning graph recovery.

    \begin{longtable}[c]{@{}*{6}{>{\arraybackslash}p{0.135\linewidth}}@{}}
    \caption{Ternary mixed data structure learning; Simulated data with $100$ simulation runs. Standard errors in brackets \label{ternary_table}}
    \\[-1.8ex]\hline 
    \hline \\[-1.8ex] 
    $d,n,f(x)$ && Oracle $\hat{\Omega}$ & ternary $\hat{\Omega}_{\tau}$ & $\hat{\Omega}_{\text{MLE}}$ & $\hat{\Omega}_r$ \\ 
    \hline \\[-1.8ex] 
    \multirow{8}{*}{$50,200,x$} & $\Norm{\hat{\Omega} - \Omega}_F$ & $2.860$ & $2.936$ & $2.935$ & $2.930$ \\ [-.25em]
    & & \footnotesize{($0.098$)} & \footnotesize{($0.105$)} & \footnotesize{($0.106$)} & \footnotesize{($0.109$)} \\ [.15em]
    & FPR & $0.016$ & $0.067$ & $0.071$ & $0.075$ \\ [-.25em] 
    & & \footnotesize{($0.005$)} & \footnotesize{($0.017$)} & \footnotesize{($0.021$)} & \footnotesize{($0.023$)} \\ [.15em] 
    & TPR & $0.340$ & $0.370$ & $0.381$ & $0.389$ \\ [-.25em]
    & & \footnotesize{($0.046$)} & \footnotesize{($0.061$)} & \footnotesize{($0.068$)} & \footnotesize{($0.070$)} \\ [.15em] 
    & AUC & $0.880$ & $0.758$ & $0.769$ & $0.764$ \\ [-.25em]
    & & \footnotesize{($0.013$)} & \footnotesize{($0.019$)} & \footnotesize{($0.019$)} & \footnotesize{($0.020$)} \\    [1em]
    \multirow{8}{*}{$50,200,x^3$} & $\Norm{\hat{\Omega} - \Omega}_F$ & $2.856$ & $2.942$ & $3.053$ & $2.935$ \\ [-.25em]
    & & \footnotesize{($0.116$)} & \footnotesize{($0.102$)} & \footnotesize{($0.098$)} & \footnotesize{($0.108$)} \\ [.15em]
    & FPR & $0.016$ & $0.068$ & $0.076$ & $0.075$ \\ [-.25em]
    & & \footnotesize{($0.007$)} & \footnotesize{($0.019$)} & \footnotesize{($0.020$)} & \footnotesize{($0.022$)} \\ [.15em]
    & TPR & $0.342$ & $0.372$ & $0.280$ & $0.391$ \\ [-.25em]
    & & \footnotesize{($0.051$)} & \footnotesize{($0.059$)} & \footnotesize{($0.051$)} & \footnotesize{($0.066$)} \\ [.15em]
    & AUC & $0.882$ & $0.759$ & $0.691$ & $0.768$ \\ [-.25em]
    & & \footnotesize{($0.015$)} & \footnotesize{($0.019$)} & \footnotesize{($0.020$)} & \footnotesize{($0.019$)} \\  [1em]
    \multirow{8}{*}{$250,200,x$} & $\Norm{\hat{\Omega} - \Omega}_F$ & $3.185$ & $3.742$ & $3.709$ & $3.711$ \\ [-.25em]
    & & \footnotesize{($0.097$)} & \footnotesize{($0.090$)} & \footnotesize{($0.089$)} & \footnotesize{($0.091$)} \\ [.15em] 
    & FPR & $0.006$ & $0.025$ & $0.024$ & $0.025$ \\ [-.25em]
    & & \footnotesize{($0.001$)} & \footnotesize{($0.003$)} & \footnotesize{($0.003$)} & \footnotesize{($0.003$)} \\ [.15em]
    & TPR & $0.308$ & $0.238$ & $0.237$ & $0.235$ \\ [-.25em]
    & & \footnotesize{($0.034$)} & \footnotesize{($0.033$)} & \footnotesize{($0.031$)} & \footnotesize{($0.030$)} \\ [.15em]
    & AUC & $0.884$ & $0.759$ & $0.773$ & $0.768$ \\ [-.25em]
    & & \footnotesize{($0.014$)} & \footnotesize{($0.018$)} & \footnotesize{($0.018$)} & \footnotesize{($0.018$)} \\   [1em]
    \multirow{8}{*}{$250,200,x^3$} & $\Norm{\hat{\Omega} - \Omega}_F$ & $3.199$ & $3.757$ & $3.894$ & $3.724$ \\ [-.25em]
    & & \footnotesize{($0.096$)} & \footnotesize{($0.096$)} & \footnotesize{($0.096$)} & \footnotesize{($0.087$)} \\ [.15em]
    & FPR & $0.006$ & $0.025$ & $0.026$ & $0.025$ \\ [-.25em]
    & & \footnotesize{($0.001$)} & \footnotesize{($0.003$)} & \footnotesize{($0.003$)} & \footnotesize{($0.003$)} \\ [.15em]
    & TPR & $0.302$ & $0.239$ & $0.143$ & $0.237$ \\ [-.25em]
    & & \footnotesize{($0.034$)} & \footnotesize{($0.032$)} & \footnotesize{($0.027$)} & \footnotesize{($0.032$)} \\ [.15em] 
    & AUC & $0.882$ & $0.759$ & $0.691$ & $0.767$ \\ [-.25em]
    & & \footnotesize{($0.012$)} & \footnotesize{($0.016$)} & \footnotesize{($0.016$)} & \footnotesize{($0.015$)} \\  [1em]
    \multirow{8}{*}{$750,300,x$} & $\Norm{\hat{\Omega} - \Omega}_F$ & $11.181$ & $10.830$ & $10.640$ & $10.659$ \\ [-.25em] 
    & & \footnotesize{($0.134$)} & \footnotesize{($0.129$)} & \footnotesize{($0.122$)} & \footnotesize{($0.118$)} \\ [.15em]
    & FPR & $0.256$ & $0.179$ & $0.180$ & $0.179$ \\ [-.25em]
    & & \footnotesize{($0.006$)} & \footnotesize{($0.006$)} & \footnotesize{($0.006$)} & \footnotesize{($0.005$)} \\ [.15em] 
    & TPR & $0.937$ & $0.723$ & $0.744$ & $0.736$ \\ [-.25em]
    & & \footnotesize{($0.009$)} & \footnotesize{($0.016$)} & \footnotesize{($0.017$)} & \footnotesize{($0.016$)} \\ [.15em]
    & AUC & $0.939$ & $0.820$ & $0.831$ & $0.828$ \\ [-.25em]
    & & \footnotesize{($0.006$)} & \footnotesize{($0.009$)} & \footnotesize{($0.009$)} & \footnotesize{($0.009$)} \\ 
    [1em]
    \multirow{8}{*}{$750,300,x^3$} & $\Norm{\hat{\Omega} - \Omega}_F$ & $11.196$ & $10.838$ & $11.250$ & $10.646$ \\ [-.25em]
    & & \footnotesize{($0.130$)} & \footnotesize{($0.129$)} & \footnotesize{($0.130$)} & \footnotesize{($0.137$)} \\ [.15em] 
    & FPR & $0.256$ & $0.180$ & $0.173$ & $0.179$ \\ [-.25em]
    & & \footnotesize{($0.006$)} & \footnotesize{($0.006$)} & \footnotesize{($0.006$)} & \footnotesize{($0.006$)} \\ [.15em]
    & TPR & $0.937$ & $0.724$ & $0.590$ & $0.737$ \\ [-.25em]
    & & \footnotesize{($0.009$)} & \footnotesize{($0.016$)} & \footnotesize{($0.020$)} & \footnotesize{($0.016$)} \\ [.15em]
    & AUC & $0.939$ & $0.820$ & $0.743$ & $0.828$ \\ [-.25em]
    & & \footnotesize{($0.006$)} & \footnotesize{($0.008$)} & \footnotesize{($0.011$)} & \footnotesize{($0.009$)} \\ 
    \hline \\[-1.8ex] 
    \end{longtable}

However, when $f_j(x)=x^3$, performance drops notably, which is driven not by an increased FPR but by a decreased TPR.
Again, results for \texttt{bridge} and \texttt{poly} behave similarly across metrics. No performance reduction, neither in estimation error nor in graph recovery, can be detected when comparing \texttt{poly} to the ternary \texttt{bridge} estimator.


\section{Application to COVID-19 data}\label{sec::empirical_application}

This section presents the results of an analysis of real-world health data (from the UK Biobank). We are interested in investigating associations between the severity of a COVID-19 infection and various potential risk factors.
This analysis is intended to illustrate the use of the proposed methods in a real-world, mixed variable type example.

\begin{table}[ht]
  \begin{floatrow}
  \ttabbox
  {\caption{Estimated partial correlations between COVID-19 severity and the listed variables for data sets A, B and C.}\label{tab:covid_associations}}
  {\begin{tabular*}{\textwidth}{l @{\extracolsep{\fill}} cccc}
    \\[-1.8ex]\hline 
    \hline \\[-1.8ex]   
    Covid-19 severity assoc. Variables & Data set A & Data set B & Data set C\\
        \hline
        age				 & 0.162  & 0.134   &0.140\\
        waist circ.	     & 0.031  & 0.009	&0.011\\
        deprev. idx      & 0.016  & - 	& -	  \\
        sex				 & 0.007  &	 -      & -	  \\
        hypertension     & 0.075  & 0.035	&0.037\\
        heart attack     &  -     & 0.073   &0.065\\
        diabetes		 & -     & 0.062   &0.055\\
        chr. bronch.     & - &- &   0.012\\
        wisd. teeth surg.& - &  -0.003    & -\\
        \hline
  \end{tabular*}}%
  \end{floatrow}
\end{table}

\subsection{Data set and variables}

We first describe the data set used here, which is a part of the UK Biobank COVID-19 resource in which UK Biobank data were linked to clinical COVID data. To construct an indicator of COVID-19 severity, we consider subjects who tested positive for COVID-19 at some point in 2020. Based on that, we created an indicator variable (COVID severity) to capture whether each subject had a severe outcome within six weeks of infection (meaning either hospitalized, hospitalized, receiving critical care, or died). Around $14\%$ experienced such a severe outcome. \begin{figure}[ht]
    \centering
    \includegraphics[scale=.6]{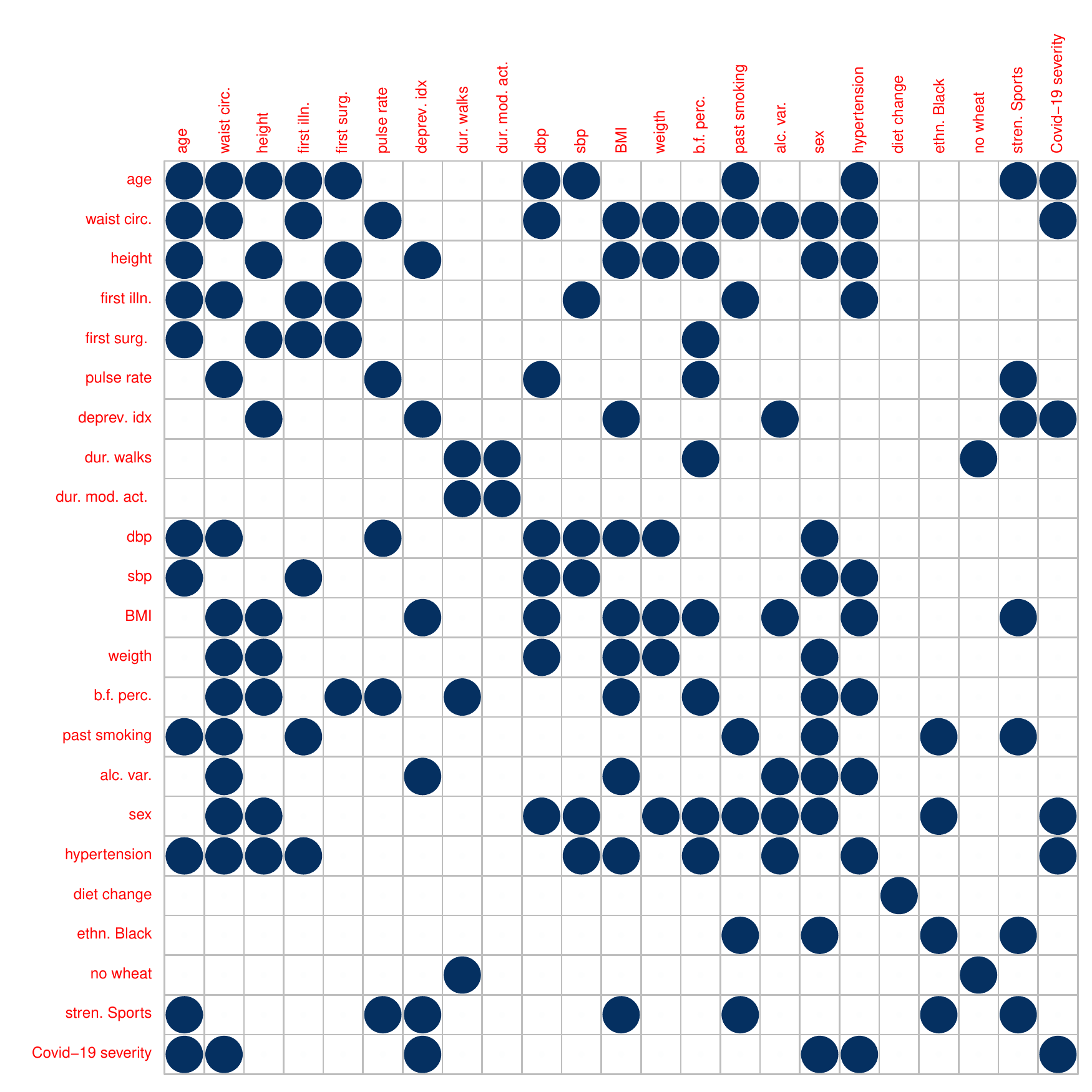}
    \caption{Plot of the estimated adjacency matrix of data set A.}
    \label{fig:corrplot_admat_A}
\end{figure}
\begin{figure}[ht]
    \centering
    \includegraphics[scale=.6]{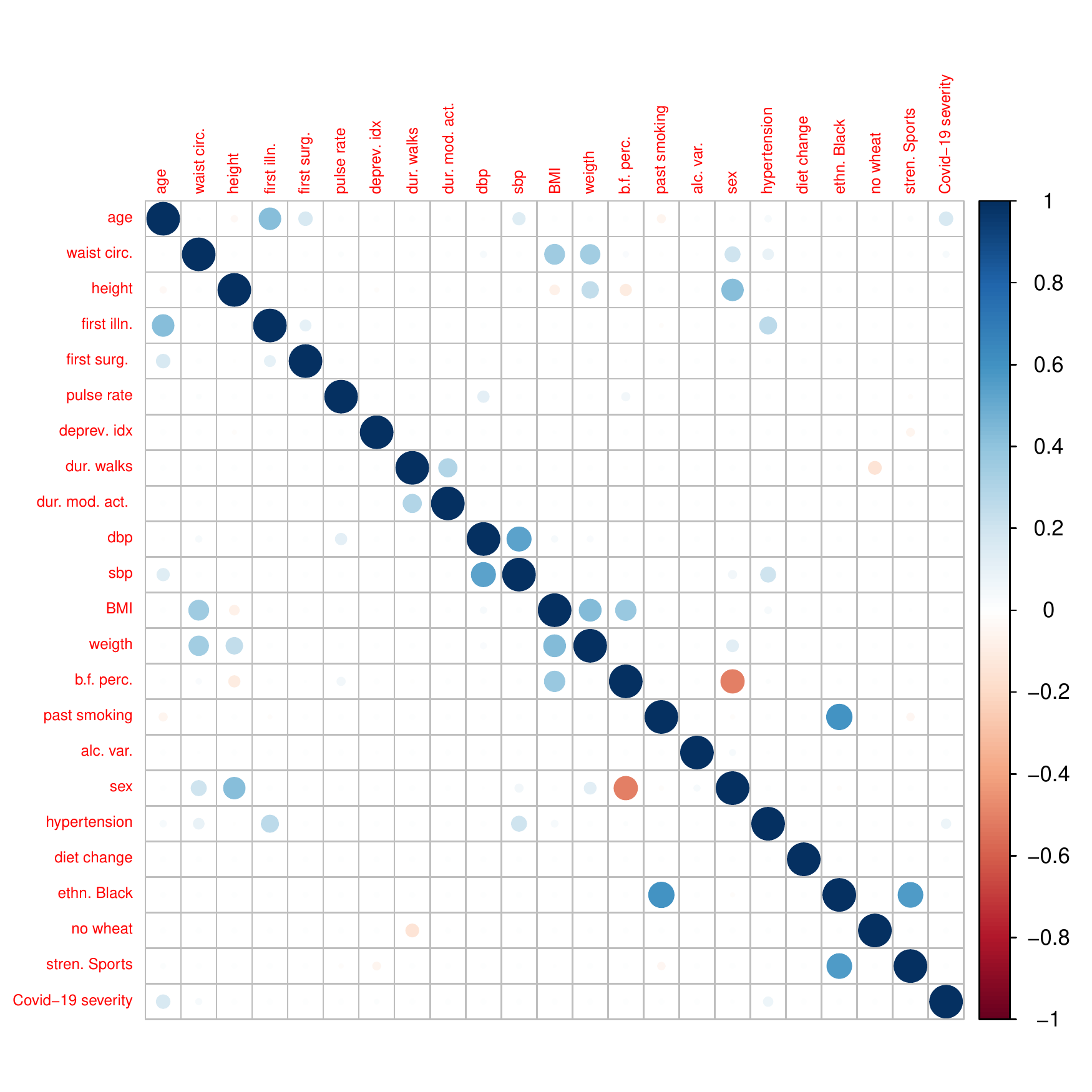}
    \caption{Plot of the estimated precision matrix of data set A.}
    \label{fig:corrplot_omega_A}
\end{figure}
The analysis includes $n=8672$ observations on $d=712$ variables (risk factors and covariates with less than 40\% missingness). Missing values were imputed using \texttt{missForest} R-package using default settings. Variables expressing more than $20$ states were treated as continuous. The remaining data include $665$ binary variables, $25$ count variables, and $8$ categorical variables. Many of the binary variables represent the status for relatively rare conditions. This means that the share of the minority class of these indicators (i.e., the fraction of samples with the least frequent value of the variable) can often be minimal. To understand the effects of such rare events on the analysis, we define
three data sets (named A, B, and C) with inclusion rules requiring respectively at least a $25\%, 2\%, 1\%$ share of observations falling into the minority class.

\subsection{Results}

We present the results of a joint analysis of the variables using real UK Biobank data. We emphasize that the analysis aims to illustrate the proposed estimators' behavior and not fully understand the risk factors for severe COVID-19. There has been much work done on factors influencing the risk of severe COVID-19 and its treatment
\citep[see, among others,][]{williamson2020,berlin2020} and we direct the interested reader to the references for further information.

Table \ref{tab:covid_associations} gives a summary of the estimated links (indicated as a visualization of the partial correlations) between the variables (including COVID-19 severity). Considering in particular links to COVID-19 severity, we see that \texttt{age}, \texttt{waist circ.}, \texttt{hypertension}, \texttt{heart attack} and \texttt{diabetes} are quite stable links throughout the different data sets. The effect sizes in terms of partial correlations are penalized and should be interpreted in relative terms. In particular, \texttt{age} retains a relatively large signal, which is in line with the known strong influence of age on COVID-19 severity \citep[see, e.g.][]{williamson2020}.

Finally, we present more detailed results of the analysis of data set A. Figure \ref{fig:corrplot_admat_A} shows the estimated adjacency matrix and Figure \ref{fig:corrplot_omega_A} depicts the estimated precision matrix $\hat{\boldsymbol\Omega}_{A}$. These results highlight the type of output, spanning different kinds of variables, that is readily available from the proposed method.

\subsection{Variable Description for real-world data application}

Table \ref{tab:var_descr} gives an overview of the variables in the UK Biobank data set.

    \begin{longtable}[c]{lp{8cm}}
    \caption{Variable description of the real world application \label{tab:var_descr}}\\    
    \\[-1.8ex]\hline 
    \hline \\[-1.8ex] 
    Variable Name 	&	Description	\\ 
    \hline \\[-1.8ex] 
        age	&	age in. years in 2020	\\
        waist circ.	&	waist circumference in cm	\\
        height	&	standing in height in cm 	\\
        first illn.	&	age at which illness first occurred 	\\
        first surg. 	&	age at which operation was done first	\\
        pulse rate	&	pulse rate measured in bpm 	\\
        deprev. idx	&	Townsend deprivation index at recruitment	\\
        dur. walks	&	duration of walks in minutes per day	\\
        dur. mod. act. 	&	duration of moderate activity in minutes per day	\\
        dbp	&	diastolic blood pressure in mmHg	\\
        sbp	&	systolic blood pressure in mmHg	\\
        BMI	&	in kg/m2	\\
        weigth	&	in kg	\\
        b.f. perc.	&	body fat percentage in \%	\\
        walking	&	number of days per week walked 10+ minutes 	\\
        mod. phys. act.	&	number of days per week of moderate physical activity  10+ minutes 	\\
        vig. phys. act.	&	number of days per week of vigorous physical activity 10+ minutes 	\\
        cheese	&	answer to ``How often do you eat cheese per week?" 	\\
        stair climb.	&	answer to "At home, during the last 4 weeks, about how many times a DAY do you climb a flight of stairs? (approx 10 steps)" 	\\
        curr. smoking	&	categorial, "Do you smoke tobacco now?" (yes, no, occasionally)	\\
        past smoking	&	categorial, "How often did you smoke tobacco?" (never, once/twice, occasionally, on most days)	\\
        diet var.	&	categorial, "Does your diet change?" (never, sometimes, often)	\\
        alc. freq.	&	categorial, "How often do you drink alcohol?" (never, special occasions only, 1-3 per month, 1-2 per week, 3-4 per week, almost daily)	\\
        alc. var.	&	categorial, "Compared to 10 years ago, do you drink?" (more, about the same, less)	\\
        sex	&	binary indicator with 0=female, 1=male	\\
        hypertension	&	hypertension, binary indicator with 0=no, 1=yes	\\
        angina	&	angina, binary indicator with 0=no, 1=yes	\\
        heart attack	&	heart attack, binary indicator with 0=no, 1=yes	\\
        stroke	&	stroke, binary indicator with 0=no, 1=yes	\\
        dvt	&	deep venous thrombosis, binary indicator with 0=no, 1=yes	\\
        asthma	&	asthma, binary indicator with 0=no, 1=yes	\\
        chr. bronch.	&	emphysema/chronic bronchitis,  binary indicator with 0=no, 1=yes	\\
        gord	&	gastro-oesophageal reflux/gastric reflux, binary indicator with 0=no, 1=yes	\\
        ibs	&	irritable bowel syndrome, binary indicator with 0=no, 1=yes	\\
        gall stones	&	cholelithiasis/gall stones, binary indicator with 0=no, 1=yes	\\
        kidn./bladder stone 	&	kidney stone/ureter stone/bladder stone, binary indicator with 0=no, 1=yes	\\
        diabetes	&	diabetes, binary indicator with 0=no, 1=yes	\\
        diabtes 2	&	type 2 diabetes, binary indicator with 0=no, 1=yes	\\
        myxoedema	&	hypothyroidism/myxoedema, binary indicator with 0=no, 1=yes	\\
        migraine	&	migraine, binary indicator with 0=no, 1=yes	\\
        glaucoma	&	glaucoma, binary indicator with 0=no, 1=yes	\\
        cataract	&	cataract, binary indicator with 0=no, 1=yes	\\
        depression	&	depression, binary indicator with 0=no, 1=yes	\\
        panic attacks	&	anxiety/panic attacks, binary indicator with 0=no, 1=yes	\\
        back probl. 	&	back problems, binary indicator with 0=no, 1=yes	\\
        osteoporosis	&	osteoporosis, binary indicator with 0=no, 1=yes	\\
        spine arthr.	&	spine arthritis/spondylitis, binary indicator with 0=no, 1=yes	\\
        slipped disc	&	prolapsed disc/slipped disc, binary indicator with 0=no, 1=yes	\\
        anaemia 	&	iron deficiency anaemia, binary indicator with 0=no, 1=yes	\\
        ut. fibroids	&	uterine fibroids, binary indicator with 0=no, 1=yes	\\
        allerg. rhinitis	&	heyfever/allergic rhinitis, binary indicator with 0=no, 1=yes	\\
        enlarged prost.	&	enlarged prostate, binary indicator with 0=no, 1=yes	\\
        pneumonia	&	pneumonia, binary indicator with 0=no, 1=yes	\\
        endometr.	&	endometriosis, binary indicator with 0=no, 1=yes	\\
        ear disor.	&	ear/vestibular disorder, binary indicator with 0=no, 1=yes	\\
        headaches	&	headaches (not migraine), binary indicator with 0=no, 1=yes	\\
        ecz./dermat.	&	eczema/dermatitis, binary indicator with 0=no, 1=yes	\\
        psoriasis	&	psoriasis, binary indicator with 0=no, 1=yes	\\
        div. disease	&	diverticular disease/diverticulitis, binary indicator with 0=no, 1=yes	\\
        osteoarthr.	&	osteoarthritis, binary indicator with 0=no, 1=yes	\\
        gout	&	gout, binary indicator with 0=no, 1=yes	\\
        high chol.	&	high cholesterol, binary indicator with 0=no, 1=yes	\\
        hiat. hern.	&	hiatus hernia, binary indicator with 0=no, 1=yes	\\
        sciatica	&	sciatica, binary indicator with 0=no, 1=yes	\\
        appendic.	&	appendicitis, binary indicator with 0=no, 1=yes	\\
        back pain	&	back pain, binary indicator with 0=no, 1=yes	\\
        arthritis	&	arthritis (nos), binary indicator with 0=no, 1=yes	\\
        measles	&	measles/morbillivirus, binary indicator with 0=no, 1=yes	\\
        chickpox	&	chickenpox, binary indicator with 0=no, 1=yes	\\
        tonsillitis	&	tonsillitis, binary indicator with 0=no, 1=yes	\\
        ptca	&	coronary angioplasty (ptca)+/-stent, binary indicator with 0=no, 1=yes	\\
        ear surg.	&	ear surgery, binary indicator with 0=no, 1=yes	\\
        sinus surg.	&	nasal/sinus,nose surgery, binary indicator with 0=no, 1=yes	\\
        vasectomy	&	vasectomy, binary indicator with 0=no, 1=yes	\\
        soft tiss. surg.	&	mucsle/soft tissue surgery, binary indicator with 0=no, 1=yes	\\
        hip repl.	&	hip replacement/revision, binary indicator with 0=no, 1=yes	\\
        knee repl.	&	knee replacement/revision, binary indicator with 0=no, 1=yes	\\
        spine surg.	&	spine or back surgery, binary indicator with 0=no, 1=yes	\\
        bil. ooph.	&	bilateral oophorectomy, binary indicator with 0=no, 1=yes	\\
        hysterect.	&	hysterectomy, binary indicator with 0=no, 1=yes	\\
        steril.	&	sterilisation, binary indicator with 0=no, 1=yes	\\
        lumpect.	&	lumpectomy, binary indicator with 0=no, 1=yes	\\
        ing. hernia rep.	&	inguinal/femoral hernia repair, binary indicator with 0=no, 1=yes	\\
        umb. hernia rep.	&	umbilical hernia repair, binary indicator with 0=no, 1=yes	\\
        cataract extr.	&	catarct extraction/lens implant, binary indicator with 0=no, 1=yes	\\
        red./fix. bone frac.	&	reduction or fixationof bone fracture, binary indicator with 0=no, 1=yes	\\
         cholecystect. 	&	cholecystectomy/gall bladder removal, binary indicator with 0=no, 1=yes	\\
        appendicect.	&	appendicectomy, binary indicator with 0=no, 1=yes	\\
        c-sec.	&	caesarian section, binary indicator with 0=no, 1=yes	\\
        tonsillest.	&	tonsillectomy, binary indicator with 0=no, 1=yes	\\
        var. vein surg.	&	varicose vein surgery, binary indicator with 0=no, 1=yes	\\
        wisd. teeth surg.	&	wisdom teeth surgery, binary indicator with 0=no, 1=yes	\\
        piles surg.	&	haemorroidectomy/piles surgery/banding of piles, binary indicator with 0=no, 1=yes	\\
        male circ.	&	male circumcision, binary indicator with 0=no, 1=yes	\\
        squint corr.	&	squint correction, binary indicator with 0=no, 1=yes	\\
        arthrosc.	&	arthroscopy (nos), binary indicator with 0=no, 1=yes	\\
        foot surg.	&	foot surgery, binary indicator with 0=no, 1=yes	\\
        knee surg.	&	knee surgery (not replacement), binary indicator with 0=no, 1=yes	\\
        shoulder surg.	&	shoulder surgery, binary indicator with 0=no, 1=yes	\\
        car. tunn. surg.	&	carpal tunnel surgery, binary indicator with 0=no, 1=yes	\\
        valg. surg.	&	bunion/hallus valgus surgery, binary indicator with 0=no, 1=yes	\\
        rem. mole	&	removal of mole/skin lesion, binary indicator with 0=no, 1=yes	\\
        ov. cyst. rem.	&	ovarian cyst removal/surgery, binary indicator with 0=no, 1=yes	\\
        d+c	&	dilatation and curettage, binary indicator with 0=no, 1=yes	\\
        cone biops.	&	cone biopsy, binary indicator with 0=no, 1=yes	\\
        endosc.	&	endoscopy/gastroscopy, binary indicator with 0=no, 1=yes	\\
        colonosc.	&	colonoscopy/sigmoidoscopy, binary indicator with 0=no, 1=yes	\\
        laparosc.	&	laparoscopy, binary indicator with 0=no, 1=yes	\\
        rhinoplast.	&	rhinoplasty/nose surgery, binary indicator with 0=no, 1=yes	\\
        tonsil surg.	&	tonsillectomy/tonsil surgery, binary indicator with 0=no, 1=yes	\\
        ing. hern. rep.	&	inguinal hernia repair, binary indicator with 0=no, 1=yes	\\
        illn. ind. diet	&	Major dietary changes in the last 5 years because of illness, binary indicator with 0=no, 1=yes	\\
        diet change	&	Major dietary changes in the last 5 years  because of other reason, binary indicator with 0=no, 1=yes	\\
        ethn. Mixed	&	Ethnicity - mixed, binary indicator with 0=no, 1=yes	\\
        ethn. Asian	&	Ethnicity - Asian, binary indicator with 0=no, 1=yes	\\
        ethn. Black	&	Ethnicity - Black, binary indicator with 0=no, 1=yes	\\
        no eggs	&	Never eat eggs or foods containing eggs, binary indicator with 0=no, 1=yes	\\
        no dairy	&	Never dairy products, binary indicator with 0=no, 1=yes	\\
        no wheat	&	Never eat wheat, binary indicator with 0=no, 1=yes	\\
        no sugar	&	Never eat sugar or foods/drinks containing sugar, binary indicator with 0=no, 1=yes	\\
        walk. f. pleas.	&	Types of physical activity in last 4 weeks - walking for pleasure, binary indicator with 0=no, 1=yes	\\
        exercises	&	Types of physical activity in last 4 weeks - other exercises (swimming, bowling etc.), binary indicator with 0=no, 1=yes	\\
        stren. Sports	&	Types of physical activity in last 4 weeks -  strenuous sports, binary indicator with 0=no, 1=yes	\\
        Covid-19 severity	&	Covid-19 severity, binary indicator with 0=mild outcome and 1=severe outcome	\\
    \hline
\end{longtable}











\clearpage\newpage
\bibliographystyle{imsart-nameyear}
\bibliography{refs}       

\end{document}